\newtheorem{lemma}{Lemma}
\newtheorem{theorem}{Theorem}
\newtheorem{corollary}{Corollary}
\newtheorem{proposition}{Proposition}
\newtheorem{definition}{Definition}
\newtheorem{assumption}{Assumption}
\newcommand{\Lpa}{L_{\text{pa}}}
\newcommand{\supp}{\text{supp}}
\newcommand{\argmin}{\mathop{\arg\min}}
\newcommand{\Tr}{\mathrm{tr}}
\numberwithin{equation}{section}
\numberwithin{table}{section}
\numberwithin{figure}{section}
\newcommand{\cN}{{\mathcal {N}}}
\newcommand{\bR}{{\mathbb{R}}}
\newcommand{\junk}[1]{{}}
\newlength{\fwtwo} \setlength{\fwtwo}{0.45\textwidth}
\title{Partial Gaussian Graphical Model Estimation}
\author{
  Xiao-Tong Yuan \\
  Department of Statistics,
  Rutgers University \\
  New Jersey, 08816 \\
  \texttt{xyuan@stat.rutgers.edu} \\
  \and
  Tong Zhang \\
  Department of Statistics,
  Rutgers University \\
  New Jersey, 08816 \\
  \texttt{tzhang@stat.rutgers.edu}
}
\date{}
\begin{document}

\maketitle

\begin{abstract}
This paper studies the partial estimation of Gaussian graphical
models from high-dimensional empirical observations. We derive a
convex formulation for this problem using $\ell_1$-regularized
maximum-likelihood estimation, which can be solved via a block
coordinate descent algorithm. Statistical estimation performance can
be established for our method. The proposed approach has competitive
empirical performance compared to existing methods, as demonstrated
by various experiments on synthetic and real datasets.
\end{abstract}



\section{Introduction}

Given $n$ independent copies $\{Z^{(i)}\}_{i=1}^n$ of a random
vector $Z\in \bR^d$ with an unknown covariance matrix $\Sigma$, the
problem of precision matrix (inverse covariance matrix) estimation
is to estimate $\Omega =\Sigma^{-1}$. In particular, for
multivariate normal data, the precision matrix induces the
underlying Gaussian graphical structure among the variables. For
such Gaussian graphical models (GGMs), it is usually assumed that a
given variable can be predicted by a small number of other
variables. This assumption implies that the precision matrix is
sparse. Therefore estimating Gaussian graphical model can be reduced
to the problem of estimating a sparse precision matrix.

One approach to sparse precision matrix estimation is covariance
selection or neighborhood
selection~\citep{Dempster-1972,Meinshausen-NSLasso-2006}, which
tries to estimate each row (or column) of the precision matrix by
predicting the corresponding variable using a sparse linear
combination of other variables. An alternative formulation is
maximum-likelihood estimation method that directly estimate the full
precision matrix. The sparseness of the precision matrix can be
achieved by adding sparse penalty functions such as the
$\ell_1$-penalty or the SCAD
penalty~\citep{Aspremont-2008,Friedman-Glasso-2008,Fan-SCAD-2009}.

In this paper, we are interested in the problem of estimating
blockwise partial precision matrix. Given $n$ independent copies
$\{Y^{(i)}; X^{(i)}\}_{i=1}^n$ of a random vector $Z =(Y;X) \in
\bR^{p} \times \bR^{q}$ with an unknown precision matrix
$$ \Omega = \left[ { {\begin{array}{*{20}{c}}
   {\Omega_{yy}}   & {\Omega_{yx}}   \\
   {\Omega_{yx}^\top} & {\Omega_{xx}}   \\
\end{array}}
} \right],
$$
our goal is to simultaneously estimate the blocks $\Omega_{yy}$ and
$\Omega_{yx}$, without attempting to estimate the block
$\Omega_{xx}$. If the joint distribution of $Z=(Y;X)$ is normal,
then $\Omega_{yy}$ has an interpretation of conditional precision
matrix of $Y$ conditioned on $X$, and $\Omega_{yx}$ induces the
mutual conditional dependency between these two groups of variants.
In machine learning applications where $Y$ is the response and $X$
is the input feature, estimating partial precision matrix can be a
useful tool for constructing graphical models for the response
conditioned on the input. For instance, in multi-label image
annotation, the response $Y$ is the indicator vector of annotation
and the input $X$ is the associated image feature vector. In this
case, $\Omega_{yy}$ induces a Gaussian graphical model for the tags
while $\Omega_{yx}$ identifies the conditional dependency between
tags and features. If we are mainly interested in the conditional
precision matrix $\Omega_{yy}$ and the interaction matrix
$\Omega_{yx}$, then it is natural to ignore $\Omega_{xx}$.
Consequently, we should not have to impose any assumption on the
structure of $\Omega_{xx}$.

Although the existing algorithms for GGMs can be used to estimate
the full precision matrix $\Omega$ and consequently its blocks
$\Omega_{yy}$ and $\Omega_{yx}$, it requires an accurate estimation
of $\Omega_{xx}$; in order to estimate $\Omega_{xx}$ in high
dimension, we have to impose the assumption that $\Omega_{xx}$ is
sparse; and the degree of its sparsity affects the estimation
accuracy of $\Omega_{yy}$ and $\Omega_{yx}$. Moreover, when $q$ is
much larger than $p$, computational procedures for the full GGMs
formulation do not scale well with respect to $\Omega_{xx}$. For
example, the computational complexity of graphical
Lasso~\citep{Friedman-Glasso-2008}, a representative GGMs solver,
for estimating $\Omega$ is $O((p+q)^3)$. This complexity is
dominated by $q$ when $q \gg p$ and thus can be quite inefficient
when $q$ is large. Unfortunately, it is not uncommon for the feature
dimensionality of modern datasets to be of order $10^4\sim10^7$.
Taking document analysis as an example, the typical size of
bag-of-word features is of the order $10^4$. In web data mining, the
feature dimensionality of a webpage is typically of the order
$10^6\sim10^7$. In contrast, the dimensionality of the response $Y$,
e.g., the number of document categories, is usually of a much
smaller order $10^2 \sim 10^3$. The purpose of this paper is to
develop a formulation that directly estimates the precision matrix
blocks $\Omega_{yy}$ and $\Omega_{yx}$ without explicit estimation
of the block $\Omega_{xx}$.

To estimate the underlying graphical model of $Y$, one might
consider applying existing GGMs to the marginal precision matrix
$\tilde\Omega_{yy} = \Sigma_{yy}^{-1}$. However, this approach
ignores the contribution of $X$ in predicting $Y$, and from a
graphical model point of view, the marginal precision matrix
$\tilde\Omega_{yy}$ may be dense. Taking the expression quantitative
trait loci (eQTL) data~\citep{Jansen-eQTL} as an example, if two
genes in $Y$ are both regularized by the same genetic variants in
$X$ at the gene expression level, then there should not be any
dependency of these two genes. However, without taking the genetic
effects of $X$ into consideration, a link between these two genes is
expected.

We introduce in this paper a new sparse partial precision matrix
estimation model that simultaneously estimates the conditional
precision matrix $\Omega_{yy}$ and the block matrix $\Omega_{yx}$
under the assumption that there are many zeros in both matrices. The
key idea is to drop the $\ell_1$ regularization for the
$\Omega_{xx}$ part in the full GGMs formulation; as we will show,
this leads to a convex formulation that does not depend on
$\Omega_{xx}$, and consequently, we do not have to estimate
$\Omega_{xx}$. Numerically this idea allows us to solve the
reformulated problem more efficiently. We propose an efficient
coordinate descent procedure to find the global minimum. The
computational complexity is $O(p^3 + p^2q + pq\min\{n,q\})$, where
$n$ is the sample size. Statistically, we can obtain convergence
results for $\Omega_{yx}$ and $\Omega_{yy}$ in the high dimensional
setting even though we do not impose sparsity assumption on
$\Omega_{xx}$.

Although derived in the context of GGMs, our method is immediately
applicable to the problem of multivariate regression with unknown
noise covariance. This observation establishes the connection
between our method and the conditional GGM proposed
by~\citet{Li-cGGM} which estimates conditional precision matrix
$\Omega_{yy}$ via multivariate regression. However, the conditional
graphical model formulation derived there is quite different from
the partial graphical model formulation of this paper. In fact, the
resulting formulations are different: we impose the sparsity
assumption on $\Omega_{yx}$, which leads to a convex formulation,
while they impose the sparsity assumption on $\Omega_{yy}^{-1}
\Omega_{yx}$, which leads to a non-convex formulation.

In summary, our method has the following merits compared to the
standard GGMs and the method by~\citet{Li-cGGM}:
\begin{itemize}
  \item \textbf{Convexity:} We estimate partial precision matrix via solving a convex optimization problem. In contrast,
  the formulation proposed by~\citet{Li-cGGM} for a similar purpose is non-convex and thus the global minimum cannot be guaranteed.
  \item \textbf{Scalability:} The proposed approach directly estimates the blocks $\Omega_{yy}$ and
  $\Omega_{yx}$ by optimizing out the block of $\Omega_{xx}$. This leads to improved scalability with respect to
   the dimensionality of $X$ in comparison to the standard GGMs formulation that estimates the full precision matrix.

  \item \textbf{Interpretability:} For normal data, the sparsity
    constraint on $\Omega_{yx}$ in our formulation has a natural interpretation
    in terms of the conditional dependency between the variables in $X$ and $Y$.
    This differs from the assumption in~\citep{Li-cGGM} that essentially assumes the sparsity of $\Omega_{yy}^{-1} \Omega_{yx}$
    which does not have natural graphical model interpretation.
  \item \textbf{Theoretical Guarantees:} Theoretical performance of
    our estimator can be established without the sparsity assumption on $\Omega_{xx}$.
\end{itemize}

\subsection{Related Work}

Numerous methods have been proposed for sparse precision matrix
estimation in recent years. For GGMs estimation, a popular
formulation is maximum likelihood estimation with $\ell_1$-penalty
on the entries of the precision matrix
\citep{Yuan-Lin-2007,Banerjee-2008,Rothman-2008}. The
$\ell_1$-penalty leads to sparsity, and the resultant problem is
convex. Theoretical guarantees of this type of methods have been
investigated by~\citet{Ravikumar-EJS-2011,Rothman-2008}, and its
computation has been extensively studied in the
literature~\citep{Aspremont-2008,Friedman-Glasso-2008,Lu-VSM-2009}.
Non-convex formulations have also been considered because it is
known that $\ell_1$-penalty suffers from a so-called {\em bias}
problem that can be remedied using non-convex
penalties~\citep{Fan-SCAD-2009,Johnson-AISTAT-2012}. As an
alternative approach to the maximum likelihood formulation, one may
directly estimate the support (that is, nonzero entries) of the
sparse precision matrix using separate neighborhood estimations for
each variable followed by a proper aggregation
rule~\citep{Meinshausen-NSLasso-2006,Yuan-JMLR-2010,Cai-CLIME-2011}.

The conditional precision matrix $\Omega_{yy}$ is related to the
latent Gaussian Graphical model of~\citep{Venkat-LatentGGM}, where
$Y$ is observed and $X$ are unobserved hidden variables. If we
further assume that $X$ is low-dimensional (which is different from
the situation of observed high dimensional $X$ in this paper), then
the we may write the marginal precision matrix $\tilde\Omega_{yy}$
using the Schur complement as $ \tilde\Omega_{yy} = \Omega_{yy} -
\Omega_{yx}\Omega_{xx}^{-1}\Omega_{yx}^\top $. This exhibits a
sparse low-rank structure because $\Omega_{yy}$ is sparse and the
dimensionality of $X$ is low. \citet{Venkat-LatentGGM} explored such
a sparse low-rank structure and proposed a convex minimization
method to recover $\Omega_{yy}$ as well as the low-rank component.
Although the model is more accurate than standard GGMs, the
formulation does not take advantage of the additional information
provided by $X$ when it is observed. Another issue is that this
latent Gaussian graphical model assumes that the hidden variable $X$
is of low dimension, which may not be realistic for many
applications.

Our approach is also closely related to the conditional Gaussian
graphical model (cGGM)~\citep{Li-cGGM} studied in the context of
eQTL data analysis. The cGGM assumes a sparse multivariate
regression model between $Y$ and $X$ with (unknown) sparse error
precision matrix. However, the log-likelihood objective function
associated with the model is non-convex. Their theoretical analysis
applies for a local minimum solution which may not be the solution
found by the algorithm. The cGGM model has also been considered
in~\citep{Cai-CAPME}. The authors proposed to first estimate the
linear regression parameters by multivariate Dantzig-selector and
then estimate the conditional precision matrix by the CLIME
estimator~\citep{Cai-CLIME-2011}. The rate of convergence for such a
two-stage estimator was analyzed. Different from cGGM, our partial
precision matrix estimation approach directly estimates the blocks
of the full precision matrix via a convex formulation. This
significantly simplifies the computational procedure and statistical
analysis. Particularly, when $Y$ is univariate, our model reduces to
the  $\ell_1$-penalized maximum likelihood estimation studied
by~\citet{Stadler-2010-mixture} for sparse linear regression. For
multivariate random vector $Y$, our method can be regarded as a
multivariate generalization of \citet{Stadler-2010-mixture} for sparse
linear regression with unknown noise covariance.

\subsection{Notation}
In the following, $\Omega$ is a positive semi-definite matrix: $\Omega
\succeq 0$; $x \in \bR^p$ is a vector;
$A \in \bR^{p\times q}$ is a matrix. The following notations
will be used in the text.
\begin{itemize}
\item $\lambda_{\min}(\Omega)$: the smallest eigenvalue of $\Omega$.
\item $\lambda_{\max}(\Omega)$: the largest eigenvalue of $\Omega$.
\item $\Omega^{-}$: the off-diagonals of $\Omega$.
\item $x_i$: the $i$th entry of a vector.
\item $\|x\|_2=\sqrt{x^\top x}$: the Euclidean norm of vector $x$
\item $\|x\|_1 =\sum_{i=1}^d |x_i|$: the $\ell_1$-norm of vector $x$
\item $\|x\|_0$: the number of nonzero of $x$.
\item $A_{ij}$: the element on the $i$th row and $j$th column of  matrix $A$.
\item $A_{i\cdot}$: the $i$th row of $A$.
\item $A_{\cdot j}$: the $j$th column of $A$.
\item $|A|_\infty=\max_{1 \le i \le p, 1 \le j \le q}|A_{ij}|$:
  $\ell_\infty$-norm of $A$.
\item $|A|_1 = \sum_{i=1}^p\sum_{j=1}^q |A_{ij}|$: the element-wise
  $\ell_1$-norm of matrix $A$.
\item  $\|A\|_1 =
\max_{1 \le j \le q} \sum_{i=1}^p |A_{ij}|$: the matrix $\ell_1$-norm
of $A$.
\item $\|A\|_{F} = \sqrt{\sum_{i=1}^p\sum_{j=1}^q A_{ij}^2}$: the
  Frobenius norm of matrix $A$.
\item $\|A\|_2 =\sup_{\|x\|_2 \le 1} \|Ax\|_2$: the spectral norm of
  matrix $A$.
\item $\supp(A)=\{(i,j):A_{ij}\neq 0\}$: the support (set of nonzero
  elements) of $A$.
\item $I$: the identity matrix.
\item $\bar{S}$: the complement of an index set $S$.
\end{itemize}

\subsection{Outline}

The remaining of this paper is organized as follows:
Section~\ref{sect:model} introduces the partial Gaussian graphical
model (pGGM) formulation; its statistical property in the high
dimensional setting is analyzed in Section~\ref{sect:analysis}.
Section~\ref{sect:algorithm} presents a coordinate descent algorithm
which can be used to solve pGGM. The extension of the proposed
method to multivariate regression with unknown covariance is
discussed in Section~\ref{sec:Multivariate_Regression}. Monte-Carlo
simulations and experimental results on real data are given in
Section~\ref{sect:expriment}. Finally, we conclude this paper in
Section~\ref{sect:conclusion}.

\section{Sparse Partial Precision Matrix Estimation}
\label{sect:model}

\subsection{Gaussian Graphical Model}

Suppose that two random vectors $Y \in \bR^p$ and $X \in
\bR^q$ are jointly normally distributed with zero-mean, i.e.,
$Z=(Y;X) \sim \cN (0, \Sigma^*)$. Its density is
parameterized by the precision matrix $\Omega^* =
(\Sigma^*)^{-1}\succ 0$ as follows:
\[
\phi(z; \Omega^*) = \frac{1}{\sqrt{(2\pi)^{p+q} (\det
\Omega^*)^{-1}}} \exp\left\{ -\frac{1}{2}z^\top\Omega^* z \right\}.
\]
It is well known that the conditional independence between $Z_i$ and
$Z_j$ given the remaining variables is equivalent to $\Omega^*_{ij}
= 0$. Let $G = (V,E)$ be a graph representing conditional
independence relations between components of $Z$. The vertex set $V$
has $p+q$ elements corresponding to $Z_1=Y_1,...,Z_p
=Y_p,Z_{p+1}=X_1,...,Z_{p+q}=X_q$, and the edge set $E$ consists of
ordered pairs $(i,j)$, where $(i,j) \in E$ if there is an edge
between $Z_i$ and $Z_j$. The edge between $Z_i$ and $Z_j$ is
excluded from $E$ if and only if $Z_i$ and $Z_j$ are independent
given $\{Z_k, k \neq i, j\}$. Thus for normal distributions,
learning the structure of graph is equivalent to estimating the
support of the precision matrix $\Omega^*$.

Suppose we have $n$ independent observations
$\{Z^{(i)}=(Y^{(i)};X^{(i)})\}_{i=1}^n$ from the normal distribution
$\cN(0, \Sigma^*)$. Let $ \Sigma^n = \left[ {
{\begin{array}{*{20}{c}}
   {\Sigma^n_{yy}}   & {\Sigma^n_{yx}}   \\
   {\Sigma^{n\top}_{yx}} & {\Sigma^n_{xx}}   \\
\end{array}}
} \right] $ be the empirical covariance matrix in which
\[
 \Sigma^n_{yy} = \frac{1}{n}\sum_{i=1}^n Y^{(i)} (Y^{(i)})^\top , \quad \Sigma^n_{yx} =
\frac{1}{n} \sum_{i=1}^n Y^{(i)} (Y^{(i)})^\top , \quad
\Sigma^n_{xx} = \frac{1}{n}\sum_{i=1}^n X^{(i)} (X^{(i)})^\top.
\]
The negative of the logarithm of the likelihood function
corresponding to the GGMs is written by
\begin{equation}\label{equat:L_omega}
L(\Omega):=-\log\det \Omega + \langle \Sigma^n, \Omega
\rangle.\nonumber
\end{equation}
It is well-known that $L(\Omega)$ is convex when $\Omega\succ 0$,
which implies that it is jointly convex with respect to the blocks
$\Omega_{yy}$, $\Omega_{yx}$ and $\Omega_{xx}$. The goal of GGMs
learning can be reduced to the problem of estimating the precision
matrix $\Omega^*$ with extra sparsity constraints. In particular,
the following $\ell_1$-regularized maximum-likelihood method is the
most popular formulation to learn sparse precision
matrix~\citep{Banerjee-2008}:
\begin{equation}\label{prob:GGM}
\hat\Omega = \argmin_{\Omega\succ 0} \{L(\Omega) + \lambda_n
|\Omega^{-}|_1\},
\end{equation}
where $\lambda_n$ is the strength parameter of the penalty.

\subsection{Partial Gaussian Graphical Model}

We now present a new maximum-likelihood formulation for the partial
GGM (pGGM) that only aims at estimating the blocks $\Omega^*_{yy}$
and $\Omega^*_{yx}$ instead of estimating the full precision matrix
$\Omega^*$. Without causing confusion, we can write $L(\Omega)$ as
$L(\Omega_{yy}, \Omega_{yx}, \Omega_{xx})$. The basic idea of pGGM
is to eliminate $\Omega_{xx}$ by optimizing $L(\Omega_{yy},
\Omega_{yx}, \Omega_{xx})$ with respect to $\Omega_{xx}$, and this
can be achieved if we \emph{do not impose any sparsity constraint} on $\Omega_{xx}$. As we will
show in the following, this idea allows us to decouple the
estimation of $\Omega_{xx}$ from the estimation of
$\{\Omega_{yy},\Omega_{yx}\}$. This not only allows faster
computation, but also allows us to develop a theoretical convergence analysis for
$\{\Omega_{yy},\Omega_{yx}\}$ without assuming the sparsity of $\Omega_{xx}$.

We introduce a reparameterization $ \tilde\Omega_{xx}:= \Omega_{xx}
- \Omega_{yx}^\top \Omega_{yy}^{-1}\Omega_{yx}$. Note that $\Omega
\succ 0$ implies $\tilde\Omega_{xx} \succ 0$. The following
proposition indicates that with such a reparameterization, $L$ can
be decomposed as the sum of a component only dependent on
$\{\Omega_{yy},\Omega_{yx}\}$ and a component only dependent on
$\tilde\Omega_{xx}$.
\begin{proposition}\label{prop:decomp}
Under the transformation $ \tilde\Omega_{xx}= \Omega_{xx} -
\Omega_{yx}^\top \Omega_{yy}^{-1}\Omega_{yx}$ we have
\begin{equation}\label{equat:L_decompose}
L(\Omega_{yy}, \Omega_{yx}, \Omega_{xx}) = \tilde L(\Omega_{yy},
\Omega_{yx}, \tilde\Omega_{xx}) = \Lpa(\Omega_{yy},\Omega_{yx}) +
H(\tilde\Omega_{xx}),
\end{equation}
where $H(\tilde\Omega_{xx})= -\log\det \tilde\Omega_{xx} +
\Tr(\Sigma^n_{xx}  \tilde\Omega_{xx})$ and
\begin{equation}\label{equat:Lp}
\Lpa(\Omega_{yy},\Omega_{yx}) :=-\log\det(\Omega_{yy}) + \Tr
(\Sigma^n_{yy}\Omega_{yy}) + 2\Tr (\Sigma^{n\top}_{yx}\Omega_{yx}) +
\Tr(\Sigma^n_{xx}\Omega_{yx}^\top\Omega_{yy}^{-1}\Omega_{yx}).
\end{equation}
Moreover $\Lpa(\Omega_{yy},\Omega_{yx})$ is convex.
\end{proposition}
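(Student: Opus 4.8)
The plan is to establish the identity \eqref{equat:L_decompose} by expanding $L$ blockwise, and then to treat the convexity of $\Lpa$ separately, with the term that is quadratic in $\Omega_{yx}$ and involves $\Omega_{yy}^{-1}$ being the crux.

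First I would handle the $-\log\det$ part. Since $\Omega_{yy}\succ 0$, the Schur-complement factorization gives $\det\Omega = \det(\Omega_{yy})\,\det(\Omega_{xx}-\Omega_{yx}^\top\Omega_{yy}^{-1}\Omega_{yx}) = \det(\Omega_{yy})\det(\tilde\Omega_{xx})$, so that $-\log\det\Omega = -\log\det\Omega_{yy} - \log\det\tilde\Omega_{xx}$. Next I would expand the linear term $\langle\Sigma^n,\Omega\rangle=\Tr(\Sigma^n\Omega)$ using the block structure of both matrices; collecting the two cross terms, which are equal because $\Tr(A)=\Tr(A^\top)$, yields $\Tr(\Sigma^n_{yy}\Omega_{yy})+2\Tr(\Sigma^{n\top}_{yx}\Omega_{yx})+\Tr(\Sigma^n_{xx}\Omega_{xx})$. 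Substituting $\Omega_{xx}=\tilde\Omega_{xx}+\Omega_{yx}^\top\Omega_{yy}^{-1}\Omega_{yx}$ splits $\Tr(\Sigma^n_{xx}\Omega_{xx})$ into $\Tr(\Sigma^n_{xx}\tilde\Omega_{xx})$ and $\Tr(\Sigma^n_{xx}\Omega_{yx}^\top\Omega_{yy}^{-1}\Omega_{yx})$. Grouping the terms containing $\tilde\Omega_{xx}$ into $H$ and the rest into $\Lpa$ produces \eqref{equat:L_decompose} and \eqref{equat:Lp} directly.

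For convexity, the terms $-\log\det\Omega_{yy}$, which is convex on the positive-definite cone, together with the linear terms $\Tr(\Sigma^n_{yy}\Omega_{yy})$ and $2\Tr(\Sigma^{n\top}_{yx}\Omega_{yx})$, are routine. The main obstacle is the joint convexity of $g(\Omega_{yy},\Omega_{yx}) := \Tr(\Sigma^n_{xx}\Omega_{yx}^\top\Omega_{yy}^{-1}\Omega_{yx})$, in which $\Omega_{yy}^{-1}$ couples the two blocks. The plan here is to reduce $g$ to a standard matrix-fractional function. Since $\Sigma^n_{xx}\succeq 0$, I would factor it as $\Sigma^n_{xx}=BB^\top$ and set $W:=\Omega_{yx}B$, which is affine in $\Omega_{yx}$; by the cyclic property of the trace, $g=\Tr\bigl(W^\top\Omega_{yy}^{-1}W\bigr)$. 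It then suffices to show the matrix-fractional map $(S,W)\mapsto\Tr(W^\top S^{-1}W)$ is jointly convex for $S\succ 0$, after which the joint convexity of $g$ follows by composition with the affine map $(\Omega_{yy},\Omega_{yx})\mapsto(\Omega_{yy},\Omega_{yx}B)$, and $\Lpa$ is seen to be a sum of jointly convex functions.

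To close the remaining step, I would invoke the epigraph/Schur-complement characterization: for $S\succ 0$ one has $\Tr(W^\top S^{-1}W)=\min_T\{\Tr(T): \left[\begin{smallmatrix} S & W \\ W^\top & T\end{smallmatrix}\right]\succeq 0\}$. The defining linear matrix inequality is jointly convex in $(S,W,T)$, and partial minimization of a jointly convex function over $T$ preserves convexity, yielding the desired joint convexity. Equivalently, one may decompose $\Sigma^n_{xx}=\sum_k\sigma_k v_k v_k^\top$ with $\sigma_k\ge 0$ and write $g=\sum_k\sigma_k\,u_k^\top\Omega_{yy}^{-1}u_k$ with $u_k=\Omega_{yx}v_k$, reducing matters to the scalar quadratic-over-linear form $u^\top S^{-1}u$ whose joint convexity is classical. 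The only point requiring care is that $\Omega_{yy}\succ 0$ throughout, which is guaranteed on the feasible set $\Omega\succ 0$.
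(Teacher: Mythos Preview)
Your derivation of the decomposition \eqref{equat:L_decompose} is essentially identical to the paper's: both use the Schur-complement determinant identity and a blockwise expansion of the trace, then substitute $\Omega_{xx}=\tilde\Omega_{xx}+\Omega_{yx}^\top\Omega_{yy}^{-1}\Omega_{yx}$.

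For the convexity of $\Lpa$, however, you take a genuinely different route. The paper argues indirectly: when $\Sigma^n_{xx}\succ 0$, it observes that $\Lpa$ equals (up to an additive constant) $\min_{\Omega_{xx}} L(\Omega_{yy},\Omega_{yx},\Omega_{xx})$, and then invokes the fact that partial minimization of a jointly convex function is convex; the degenerate case $\Sigma^n_{xx}\succeq 0$ is handled by a limiting argument, replacing $\Sigma^n_{xx}$ by $\Sigma^n_{xx}+\lambda I$ and letting $\lambda\to 0^+$. Your argument is instead direct: you isolate the only nontrivial term $\Tr(\Sigma^n_{xx}\Omega_{yx}^\top\Omega_{yy}^{-1}\Omega_{yx})$ and reduce it, via a factorization $\Sigma^n_{xx}=BB^\top$ and the affine substitution $W=\Omega_{yx}B$, to the matrix-fractional function $(S,W)\mapsto\Tr(W^\top S^{-1}W)$, whose joint convexity you establish by the Schur-complement/epigraph characterization (or equivalently by a spectral decomposition into scalar quadratic-over-linear forms). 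Both arguments are correct. The paper's is shorter and exploits the already-known convexity of $L$, but requires the perturbation-and-limit step when $n<q$. Yours is more self-contained and handles singular $\Sigma^n_{xx}$ uniformly, at the cost of invoking (or reproving) the joint convexity of the matrix-fractional map; amusingly, both proofs ultimately rest on the same ``partial minimization preserves convexity'' principle, just applied at different levels.
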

The proof of Proposition~\ref{prop:decomp} is provided in
Appendix~\ref{append:proof_lemma1}. Since both
$\Lpa(\Omega_{yy},\Omega_{yx})$ and $H(\tilde\Omega_{xx})$ are
convex, we have that $\tilde L(\Omega_{yy}, \Omega_{yx},
\tilde\Omega_{xx})$ is jointly convex in
$\{\Omega_{yy},\Omega_{yx},\tilde\Omega_{xx}\}$.

The decomposition formulation~\eqref{equat:L_decompose} is the key
idea key behind our new formulation which decouples the optimization
of $\{\Omega_{yy},\Omega_{yx}\}$ and $\tilde\Omega_{xx}$. In the
high dimensional setting, we consider the following penalized
problem using the reparameterized $\Omega$:
\begin{equation}\label{eq:GGM-pGGM}
\{\hat{\Omega}_{yy}, \hat{\Omega}_{yx}, \tilde\Omega_{xx}\} =
\argmin_{\Omega_{yy}\succ 0,\Omega_{yx},\tilde\Omega_{xx}\succ 0} \{
\tilde L(\Omega_{yy}, \Omega_{yx}, \tilde\Omega_{xx}) +
R(\Omega_{yy},\Omega_{yx}) + P(\tilde\Omega_{xx}) \},
\end{equation}
where $R(\Omega_{yy},\Omega_{yx})$ and $P(\tilde\Omega_{xx})$ are
decoupled regularization terms that can guarantee the problem to be
well-defined. Based on~\eqref{equat:L_decompose},
problem~\eqref{eq:GGM-pGGM} can be decomposed into the following two
separate problems:
\begin{eqnarray}
\{\hat{\Omega}_{yy}, \hat{\Omega}_{yx}\} &=&
\argmin_{\Omega_{yy}\succ 0, \Omega_{yx}} \{\Lpa(\Omega_{yy},
\Omega_{yx}) + R(\Omega_{yy} , \Omega_{yx})\}, \label{prob:MLE_Omega_regularized_general}\\
 \tilde\Omega_{xx}
&=& \argmin_{\tilde\Omega_{xx}\succ 0} \{H(\tilde\Omega_{xx}) +
P(\tilde\Omega_{xx})\}. \nonumber \label{prob:MLE_H}
\end{eqnarray}
We call the first equation specified in ~\eqref{prob:MLE_Omega_regularized_general}
as {\em partial Gaussian Graphical Model} or pGGM, which is the main
formulation proposed in this paper.
If we assume that
both $\Omega^*_{yy}$ and $\Omega^*_{yx}$ are sparse, then we may use
sparsity-inducing penalty $R(\Omega_{yy} , \Omega_{yx})$
in~\eqref{prob:MLE_Omega_regularized_general}. For example, the
following two penalties enforce element-wise and column-wise sparsity
respectively:
\begin{itemize}
  \item Element-wise sparsity-inducing penalty: $R_{e}(\Omega_{yy}, \Omega_{yx}) = \lambda_n |\Omega_{yy}^{-}|_1
  + \rho_n |\Omega_{yx}|_1$.
  \item Column-wise sparsity-inducing penalty: $R_c(\Omega_{yy}, \Omega_{yx}) = \lambda_n |\Omega_{yy}^{-}|_1 +
  \rho_n \|\Omega_{yx}\|_{2,1}$ where $\|\Omega_{yx}\|_{2,1}=\sum_{j=1}^q\|(\Omega_{yx})_{\cdot j}\|$.
\end{itemize}
If we use the element-wise sparsity-inducing penalty, then the
resulting formula is similar to $\ell_1$-penalized full Gaussian
graphical model formulation of (\ref{prob:GGM}). The main difference
is that the pGGM
formulation~\eqref{prob:MLE_Omega_regularized_general} does not
depend on $\Omega_{xx}$, and consequently it does not require the
sparsity assumption on $\Omega_{xx}$. One advantage of pGGM is the
significantly reduced computational complexity when $X$ is high
dimensional. Another important merit of pGGM is that it does not
depend on model assumptions of $\Omega^*_{xx}$ because the
optimization has been decoupled. This is analogous to the situation
of conditional random field~\citep{Lafferty-CRF-2001} where we model
the conditional distribution of $Y$ given $X$ directly, and good
model of the distribution of $X$ is unnecessary or ancillary for
discriminative analysis. In particular, as we will demonstrate in
Section~\ref{ssect:simulation}, the formulation performs well even
if $\Omega^*_{xx}$ is relatively dense compared to $\Omega^*_{yy}$
and $\Omega^*_{yx}$.


\section{Theoretical Analysis}
\label{sect:analysis}

We now analyze the estimation error between the estimated precision
matrix blocks $\{\hat{\Omega}_{yy}, \hat{\Omega}_{yx}\}$
in~\eqref{prob:MLE_Omega_regularized_general} and the true blocks
$\{\Omega^*_{yy},\Omega^*_{yx}\}$. Let $S_{yy}:=\supp(\Omega^*_{yy})
\cup\{(i,i):i=1,...,p\}$ and $\bar S_{yy}$ be its complement.
Similarly we define $S_{yx}$ and $\bar S_{yx}$. To simplify
notation, we denote $\Theta = (\Omega_{yy},\Omega_{yx})$, $S =
S_{yy}\cup S_{yx}$ and $\bar S = \bar S_{yy}\cup \bar S_{yx}$. The
error of the first-order Taylor expansion of $\Lpa$ at $\Theta$ in
direction $\Delta\Theta$ is
\[
\Delta \Lpa (\Theta, \Delta\Theta):=\Lpa (\Theta + \Delta\Theta) - \Lpa(\Theta) -
\langle \nabla \Lpa (\Theta), \Delta\Theta\rangle.
\]
We introduce the
concept of local restricted strong convexity to bound $\delta
\Lpa(\Theta,\Delta\Theta)$.
\begin{definition}[\textbf{Local Restricted Strong Convexity}] \label{def:cone}
We define the following quantity which we refer to as local
restricted strong convexity (LRSC) constant at $\Theta$:
\[
\beta(\Theta; r,\alpha) = \inf\left\{ \frac{\Delta \Lpa (\Theta,
    \Delta\Theta)}{\|\Delta\Theta\|^2_{F}} :
0< \|\Delta\Theta\|_{F}\le r, |\Delta\Theta _{\bar S}|_1 \le \alpha|\Delta\Theta _{S}|_1 \right\} ,
\]
where $\alpha= 3 \max\{\lambda_n,\rho_n\}/\min\{\lambda_n,\rho_n\}$.
\end{definition}

As will be described in our main result, the
Theorem~\ref{thrm:F_norm_bound}, that the LRSC condition of $\Lpa$
is required to guarantee the statistical efficiency of pGGM. Before
presenting the theorem, we will first show that when $n$ is
sufficiently large, such a condition holds with high probability
under proper conditions. We require the following assumption.
\begin{assumption} \label{assump:rip}
  Assume that the following conditions hold for some integers $\tilde{s}$:
  \begin{align*}
   \inf \left\{  \frac{u^\top \Sigma^n_{xx} u}{u^\top \Sigma^*_{xx} u} : u \neq 0, \|u\|_0 \leq \tilde{s} \right\}  \geq& 0.5 , \\
    \sup \left\{  \frac{u^\top \Sigma^n_{xx} u}{u^\top \Sigma^*_{xx} u} : u \neq 0, \|u\|_0 \leq \tilde{s} \right\}  \leq& 1.5 , \\
    \frac{\lambda_{\max} \big[\Omega^*_{yx} \Sigma^n_{xx}  (\Omega^*_{yx})^\top\big]}
    {\lambda_{\max} \big[\Omega^*_{yx} \Sigma^*_{xx}  (\Omega^*_{yx})^\top\big]}
    \leq & 1.4 .
  \end{align*}
\end{assumption}

The assumption is similar to the RIP condition in compressed sensing.
The following result is known from the compressed sensing
literature~\citep[see][for example]{Baraniuk-SimpleRIP-2008,Rauhut-2008,Candes-D-RIP-2011}.
\begin{proposition}\label{prop:rip}
There exists absolute constants $c_1$ and $c_2$ such that
Assumption~\ref{assump:rip} holds with probability no less than $1- \exp(-c_2 n)$
when $n \geq c_1 (p + \tilde{s} \log (p+q))$.
\end{proposition}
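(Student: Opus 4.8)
The plan is to reduce all three conditions to concentration statements about a standard Gaussian matrix restricted to a low-dimensional subspace. Since $Z=(Y;X)$ is jointly Gaussian, the marginal law of $X$ is $\cN(0,\Sigma^*_{xx})$, so I may write $X^{(i)}=(\Sigma^*_{xx})^{1/2}g^{(i)}$ with $g^{(i)}\sim\cN(0,I_q)$ i.i.d. For any $u\in\bR^q$, setting $\tilde u=(\Sigma^*_{xx})^{1/2}u$ gives $u^\top\Sigma^n_{xx}u=\frac1n\sum_{i=1}^n(\tilde u^\top g^{(i)})^2$ and $u^\top\Sigma^*_{xx}u=\|\tilde u\|_2^2$. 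Thus every ratio appearing in Assumption~\ref{assump:rip} has the form $\tfrac1n\|G\tilde u\|_2^2/\|\tilde u\|_2^2$, where $G$ is the $n\times q$ matrix with rows $(g^{(i)})^\top$ and $\tilde u$ is confined to an appropriate subspace. The fact I would use repeatedly is that if $P$ is any $q\times k$ matrix with orthonormal columns, then $GP$ is an $n\times k$ matrix with i.i.d.\ $\cN(0,1)$ entries, whose extreme singular values obey the Gaussian (Davidson--Szarek) bounds $\Pr(\sigma_{\max}(GP)\ge\sqrt n+\sqrt k+t)\le e^{-t^2/2}$ and $\Pr(\sigma_{\min}(GP)\le\sqrt n-\sqrt k-t)\le e^{-t^2/2}$.

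For the first two (RIP) conditions I would fix a support $T\subseteq\{1,\dots,q\}$ with $|T|\le\tilde s$ and let $P_T$ be an orthonormal basis of the $\le\tilde s$-dimensional subspace $\{(\Sigma^*_{xx})^{1/2}u:\supp(u)\subseteq T\}$. On this fixed subspace the ratio equals the Rayleigh quotient of $\frac1n(GP_T)^\top GP_T$, so the two singular-value bounds with $t\asymp\sqrt n$ and $\tilde s/n$ small force the ratio into $[0.5,1.5]$ with probability at least $1-2e^{-cn}$. A union bound over the $\binom{q}{\tilde s}\le(eq/\tilde s)^{\tilde s}$ choices of $T$ then gives failure probability $\le 2\exp(\tilde s\log(eq/\tilde s)-cn)$, which is $\le e^{-c_2 n}$ once $n\gtrsim\tilde s\log(p+q)$. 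This is precisely the standard Gaussian RIP argument, and is what the cited references supply.

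The third condition is the one genuinely tied to the structure of the problem, and is where I expect the real work to lie. Writing $A=\Omega^*_{yx}\in\bR^{p\times q}$, for any unit $v\in\bR^p$ with $w=A^\top v$ one has $v^\top A\Sigma^n_{xx}A^\top v=w^\top\Sigma^n_{xx}w$, and $w$ ranges only over $\mathrm{range}(A^\top)$, of dimension $k=\rank(A)\le p$. Hence it suffices to establish the \emph{one-sided} uniform bound $w^\top\Sigma^n_{xx}w\le 1.4\,w^\top\Sigma^*_{xx}w$ for all $w\in\mathrm{range}(A^\top)$: evaluating this at the maximizing $v$ immediately yields $\lambda_{\max}(A\Sigma^n_{xx}A^\top)\le1.4\,\lambda_{\max}(A\Sigma^*_{xx}A^\top)$. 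Whitening maps $\mathrm{range}(A^\top)$ to a subspace of dimension $\le p$, and taking $P$ to be an orthonormal basis of it reduces the claim to $\lambda_{\max}(\frac1n(GP)^\top GP)\le1.4$, i.e.\ $\sigma_{\max}(GP)^2\le1.4\,n$. The upper Davidson--Szarek bound with $k\le p$ delivers this with probability $\ge1-e^{-c_3 n}$ provided $n\gtrsim p$. The crucial point — and the main obstacle to obtaining the stated rate — is recognizing that $\Omega^*_{yx}$ confines the test vectors to a $p$-dimensional subspace, so the effective dimension is $p$ rather than the ambient $q$; this is exactly what lets the sample size scale with $p$ instead of $q$.

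Finally I would combine the three events by a union bound and absorb constants: conditions one and two hold once $n\gtrsim\tilde s\log(p+q)$ and condition three once $n\gtrsim p$, so choosing $c_1$ large enough that $n\ge c_1(p+\tilde s\log(p+q))$ guarantees all three simultaneously with probability at least $1-e^{-c_2 n}$ after shrinking the exponent constant.
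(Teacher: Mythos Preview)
The paper does not actually prove this proposition; it simply states the result and defers to the compressed sensing literature \citep{Baraniuk-SimpleRIP-2008,Rauhut-2008,Candes-D-RIP-2011}. Your proposal correctly fills in the argument those references supply for the first two conditions (whitening by $(\Sigma^*_{xx})^{1/2}$, Davidson--Szarek singular-value bounds on an $n\times\tilde s$ standard Gaussian block, and a union bound over the $\binom{q}{\tilde s}$ supports). For the third condition---which is not literally an RIP statement and is not covered verbatim by those references---your key observation that the quadratic form $v^\top\Omega^*_{yx}\Sigma^n_{xx}(\Omega^*_{yx})^\top v$ only sees $\Sigma^n_{xx}$ through vectors in the $(\le p)$-dimensional subspace $\mathrm{range}((\Omega^*_{yx})^\top)$ is exactly the right reduction; since the subspace is fixed, no union bound is needed and a single application of the Davidson--Szarek upper tail with $k\le p$ suffices, yielding the $n\gtrsim p$ contribution to the sample-size requirement. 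The chaining of the one-sided bound $w^\top\Sigma^n_{xx}w\le 1.4\,w^\top\Sigma^*_{xx}w$ on that subspace to the desired ratio of largest eigenvalues is also correct, via evaluation at the maximizing unit $v$. In short, your proof is sound and more detailed than what the paper itself provides.
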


Assumption~\ref{assump:rip} can be used to obtain a bound on
$\beta(\Theta^*,r,\alpha)$.
\begin{proposition}\label{prop:RSC}
Let
\[
\rho_-= 0.5 \min(\lambda_{\max}(\Omega^*_{yy})^{-1},\lambda_{\min}(\Sigma^*_{xx})) ,
\quad
\rho_+= 1.5 \lambda_{\max}(\Sigma^*_{xx}) .
\]
Assume that Assumption~\ref{assump:rip} holds with
$\tilde{s} = |S| + \lceil 4 (\rho_+/\rho_-) \alpha^2 |S|\rceil$.
If
\[
r \leq
\min\left[
0.5 \lambda_{\min}(\Omega^*_{yy}),
0.13 \sqrt{\lambda_{\max} \big[\Omega^*_{yx} \Sigma^*_{xx}   (\Omega^*_{yx})^\top \big]/\rho_+ }
\right] ,
\]
then we have
\[
\beta(\Theta^*,r,\alpha) \ge
\frac{\rho_-}{40 \lambda_{\max}(\Omega^*_{yy})} \cdot
\min\left[2 ,
\frac{\lambda_{\min}(3\Omega^*_{yy})}{8\lambda_{\max}(\Omega^*_{yx} \Sigma^*_{xx}  (\Omega^*_{yx})^\top)}
\right] .
\]
\end{proposition}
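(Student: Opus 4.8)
The plan is to reduce the LRSC lower bound to a uniform lower bound on the second directional derivative of $\Lpa$ along the segment $\Theta(s)=\Theta^*+s\Delta\Theta$, $s\in[0,1]$. By the integral form of Taylor's theorem, $\Delta\Lpa(\Theta^*,\Delta\Theta)=\int_0^1(1-s)\,g''(s)\,ds$ with $g(s)=\Lpa(\Theta(s))$, and since $\int_0^1(1-s)\,ds=\tfrac12$, it suffices to show $g''(s)\ge c\,\|\Delta\Theta\|^2_F$ for all $s$, with $c$ twice the claimed bound. First I would compute $g''(s)$ in closed form. Writing $A=\Omega_{yy}(s)$, $B=\Omega_{yx}(s)$, $C=\Sigma^n_{xx}$, the $-\log\det$ part contributes $\Tr(A^{-1}\Delta A\,A^{-1}\Delta A)$, while for the matrix-fractional part $\Tr(C\,B^\top A^{-1}B)$ a completing-the-square identity gives its quadratic variation as $2\,\Tr(C\,E^\top A^{-1}E)$ with $E=\Delta B-\Delta A\,A^{-1}B$. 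This yields the manifestly nonnegative expression
\[
g''(s)=\Tr\!\big(A^{-1}\Delta A\,A^{-1}\Delta A\big)+2\,\Tr\!\big(C\,E^\top A^{-1}E\big),
\]
which cleanly separates a log-determinant term $T_1$ controlling the $\Delta A$ direction from a data-dependent term $T_2$ controlling, modulo coupling, the $\Delta B$ direction.

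Next I would use the radius constraint to keep the path well-conditioned. Since $\|\Delta A\|_F\le\|\Delta\Theta\|_F\le r\le\tfrac12\lambda_{\min}(\Omega^*_{yy})$, eigenvalue perturbation gives $\lambda_{\min}(A)\ge\tfrac12\lambda_{\min}(\Omega^*_{yy})$ and $\lambda_{\max}(A)\le\tfrac32\lambda_{\max}(\Omega^*_{yy})$ for every $s$. Hence $T_1\ge\lambda_{\max}(A)^{-2}\|\Delta A\|_F^2\gtrsim\lambda_{\max}(\Omega^*_{yy})^{-2}\|\Delta A\|_F^2$, i.e.\ full curvature in the low-dimensional $\Delta A$ block, matching the $\rho_-/\lambda_{\max}(\Omega^*_{yy})$ scale of the target. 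For $T_2$ I would split $E=\Delta B-\Delta A\,W$ (with $W=A^{-1}B$) using the triangle inequality in the seminorm $\|G\|_*^2:=\Tr(C\,G^\top A^{-1}G)$, giving $\|E\|_*^2\ge\tfrac12\|\Delta B\|_*^2-\|\Delta A\,W\|_*^2$. The coupling term is bounded above: because $W(s)$ stays close to $(\Omega^*_{yy})^{-1}\Omega^*_{yx}$ along the short segment (this is what the bound $r\le0.13\sqrt{\lambda_{\max}[\Omega^*_{yx}\Sigma^*_{xx}(\Omega^*_{yx})^\top]/\rho_+}$ guarantees), one has $\|\Delta A\,W\|_*^2\lesssim\lambda_{\max}\!\big[\Omega^*_{yx}\,C\,(\Omega^*_{yx})^\top\big]\,\|\Delta A\|_F^2$, and the third inequality of Assumption~\ref{assump:rip} replaces $\Sigma^n_{xx}$ by $\Sigma^*_{xx}$ up to the factor $1.4$. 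Absorbing $2\|\Delta A\,W\|_*^2$ into $T_1$ then requires comparing the curvatures $\lambda_{\min}(\Omega^*_{yy})$ and $\lambda_{\max}[\Omega^*_{yx}\Sigma^*_{xx}(\Omega^*_{yx})^\top]$, which is exactly what produces the factor $\min\!\big[\,2,\;3\lambda_{\min}(\Omega^*_{yy})/8\lambda_{\max}(\Omega^*_{yx}\Sigma^*_{xx}(\Omega^*_{yx})^\top)\,\big]$.

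The heart of the argument is the surviving lower bound $\tfrac12\|\Delta B\|_*^2\ge\tfrac{1}{2\lambda_{\max}(A)}\sum_{k=1}^p(\Delta B)_{k\cdot}\,\Sigma^n_{xx}\,(\Delta B)_{k\cdot}^\top$. Here $\Sigma^n_{xx}$ is controlled only on \emph{sparse} vectors through the restricted-isometry Assumption~\ref{assump:rip}, whereas the rows $(\Delta B)_{k\cdot}$ are not sparse but merely lie, jointly, in the $\ell_1$-cone $|\Delta\Theta_{\bar S}|_1\le\alpha|\Delta\Theta_S|_1$. I would therefore invoke the standard ``RIP implies a restricted eigenvalue over the $\ell_1$-cone'' argument: split each row into a head of size comparable to $|S|$ and a tail, bound the tail $\ell_2$-mass by $\ell_1/\sqrt{\tilde s}$ via the cone inequality, and apply the two-sided sparse bounds of Assumption~\ref{assump:rip}. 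The inflation $\tilde s=|S|+\lceil4(\rho_+/\rho_-)\alpha^2|S|\rceil$ is precisely the sparsity order this shelling requires given the cone parameter $\alpha$ and the conditioning $\rho_+/\rho_-$, and it yields $\sum_k(\Delta B)_{k\cdot}\Sigma^n_{xx}(\Delta B)_{k\cdot}^\top\gtrsim\lambda_{\min}(\Sigma^*_{xx})\|\Delta B\|_F^2$, i.e.\ the $\rho_-$ scale.

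The step I expect to be the main obstacle is carrying this cone/sparsification argument through in the matrix (multi-row) setting, where the $p$ rows of $\Delta B$ share a single joint cone constraint rather than individual ones, while the coupling term $\Delta A\,W$ must simultaneously be absorbed into $T_1$. Once this restricted-eigenvalue bound is established, combining $T_1$ with the surviving part of $T_2$ and tracking the numerical constants (the factors $\tfrac12$ from the triangle inequality and the Taylor remainder, the RIP constants $0.5$, $1.5$, $1.4$, the radius constant $0.13$, and the shelling constant) gives the stated lower bound on $\beta(\Theta^*,r,\alpha)$.
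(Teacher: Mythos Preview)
Your plan is essentially the paper's proof, packaged slightly differently. The paper also reduces to a lower bound on the second directional derivative along the segment, expands $f''(s)$ into the same four trace terms, and then uses two lemmas: the ``RIP $\Rightarrow$ restricted eigenvalue over the $\ell_1$-cone'' shelling argument you describe (your anticipated obstacle; it does go through in the multi-row setting by applying the shelling to the $p\times(p+q)$ matrix $V=(\Delta\Omega_{yy},\Delta\Omega_{yx})$ as a whole rather than row by row), and a short lemma controlling $\lambda_{\max}(\Omega_{yy}^{-1}\Omega_{yx}\Sigma^n_{xx}\Omega_{yx}^\top)$ along the segment, which is exactly where the radius bound $r\le 0.13\sqrt{\lambda_{\max}[\Omega^*_{yx}\Sigma^*_{xx}(\Omega^*_{yx})^\top]/\rho_+}$ and the third inequality of Assumption~\ref{assump:rip} are used. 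Your completed-square identity $g''(s)=T_1+2\|E\|_*^2$ with $E=\Delta B-\Delta A\,A^{-1}B$ is correct and is in fact a cleaner way to see the nonnegativity of $f''$ than the paper's four-term expansion.

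The one genuine soft spot is your handling of the coupling term. With the \emph{fixed} split $\|E\|_*^2\ge\tfrac12\|\Delta B\|_*^2-\|\Delta A W\|_*^2$, the resulting lower bound is $T_1\bigl(1-2\lambda_{\max}(A^{-1}BCB^\top)\bigr)+\|\Delta B\|_*^2$, and when $\lambda_{\max}(A^{-1}BCB^\top)>1/2$ the $T_1$ contribution goes negative rather than ``producing the min factor''. The paper resolves this by introducing a free parameter $\vartheta\le \min\bigl(2/3,\;1/(2\lambda_{\max}(A^{-1}BCB^\top))\bigr)$ and using the weighted inequality $\frac{4}{2+\vartheta}a^2-4ab+(2+\vartheta)b^2\ge 0$ to retain $\tfrac12 T_1$ at the cost of shrinking the $\|\Delta B\|_*^2$ coefficient to $\tfrac{2\vartheta}{2+\vartheta}$; this tradeoff is what yields the $\min[\,2,\,3\lambda_{\min}(\Omega^*_{yy})/(8\lambda_{\max}(\Omega^*_{yx}\Sigma^*_{xx}(\Omega^*_{yx})^\top))\,]$. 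In your framework the fix is simply to replace the $1/2$ split by the parametric $\|E\|_*^2\ge(1-t)\|\Delta B\|_*^2-(1/t-1)\|\Delta A W\|_*^2$ and choose $t$ to balance; this is equivalent to the paper's $\vartheta$ device and gives the same constants.
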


The following definition of $\gamma_n$ is also needed in our analysis.
\begin{definition}
Define
\begin{align*}
A_n =& \Sigma^n_{yy} - \Sigma^*_{yy}
-(\Omega^*_{yy})^{-1}\Omega^*_{yx}(\Sigma^n_{xx}-\Sigma^*_{xx})\Omega^{*\top}_{yx}(\Omega^*_{yy})^{-1}\\
B_n =&  2(\Sigma^n_{yx} - \Sigma^*_{yx} +
(\Omega^*_{yy})^{-1}\Omega^*_{yx}(\Sigma^n_{xx} - \Sigma^*_{xx})) , \\
\gamma_n =& \max\{|A_n|_\infty, |B_n|_\infty\} .
\end{align*}
\end{definition}
We have the following estimate of $\gamma_n$.
\begin{proposition} \label{prop:gamma_convergence}
For any $\eta \in (0,1)$, and given the sample size $n \geq \log (10 (p+q)^2/\eta)$, we have with probability
$1-\eta$:
\[
\gamma_n  \le 16 \sqrt{\log (10 (p+q)^2/\eta) /n}   \;
\left[
\max_{i} (\Sigma^*_{ii})+ \max_i (((\Omega^*_{yy})^{-1}\Omega^*_{yx}\Sigma^*_{xx}\Omega^{*\top}_{yx}(\Omega^*_{yy})^{-1})_{ii}) \right] .
\]
\end{proposition}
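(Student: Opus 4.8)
The plan is to recognize that every entry of $A_n$ and $B_n$ is a centered empirical average of products of two jointly Gaussian random variables, and then to apply a sub-exponential (Bernstein-type) concentration bound to each entry followed by a union bound. Write $W = (\Omega^*_{yy})^{-1}\Omega^*_{yx} \in \bR^{p\times q}$ and set $U^{(k)} = W X^{(k)}$, so that $U^{(k)}$ is zero-mean Gaussian with covariance $W\Sigma^*_{xx}W^\top$. A direct expansion of the definitions shows
\[
(A_n)_{ij} = \frac{1}{n}\sum_{k=1}^n\big[Y^{(k)}_i Y^{(k)}_j - U^{(k)}_i U^{(k)}_j\big] - \mathbb{E}\big[Y_i Y_j - U_i U_j\big],
\]
\[
(B_n)_{ij} = \frac{2}{n}\sum_{k=1}^n\big[(Y^{(k)}_i + U^{(k)}_i)X^{(k)}_j\big] - 2\,\mathbb{E}\big[(Y_i+U_i)X_j\big].
\]
Thus each entry is the centered average of $n$ i.i.d.\ copies of a product of jointly Gaussian coordinates, which has a sub-exponential tail.

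First I would control a single such product. For jointly Gaussian zero-mean $(Z_1, Z_2)$, the polarization identity $Z_1 Z_2 = \tfrac14[(Z_1+Z_2)^2-(Z_1-Z_2)^2]$ expresses $\tfrac1n\sum_k Z_1^{(k)}Z_2^{(k)}$ as a difference of two rescaled averages of i.i.d.\ squared Gaussians. Applying a chi-square (Laurent--Massart type) deviation bound to each piece, together with Wick's theorem giving $\var(Z_1Z_2)=\var(Z_1)\var(Z_2)+(\mathbb{E} Z_1Z_2)^2 \le 2\var(Z_1)\var(Z_2)$, yields a tail of the form
\[
\mathbb{P}\!\left(\Big|\tfrac1n\textstyle\sum_k Z_1^{(k)}Z_2^{(k)} - \mathbb{E} Z_1 Z_2\Big| > t\right) \le 2\exp\!\left(-c\,n\min\!\Big\{\tfrac{t^2}{\var(Z_1)\var(Z_2)},\ \tfrac{t}{\sqrt{\var(Z_1)\var(Z_2)}}\Big\}\right).
\]
The condition $n \ge \log(10(p+q)^2/\eta)$ is exactly what forces the eventual choice $t \asymp \sqrt{\log(\cdot)/n}\,\sqrt{\var(Z_1)\var(Z_2)}$ to fall in the variance-dominated (Gaussian-tail) branch of the minimum, so that the deviation scales like $\sqrt{\log(\cdot)/n}$ rather than $\log(\cdot)/n$.

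Next I would bound the variance proxies uniformly. By AM--GM, $\sqrt{\var(Z_1)\var(Z_2)} \le \tfrac12(\var(Z_1)+\var(Z_2))$, so it suffices to control the four building blocks: $\var(Y_i) = (\Sigma^*_{yy})_{ii}$ and $\var(X_j) = (\Sigma^*_{xx})_{jj}$, both bounded by $M_1 := \max_i \Sigma^*_{ii}$, and $\var(U_i) = (W\Sigma^*_{xx}W^\top)_{ii}$, bounded by $M_2 := \max_i (W\Sigma^*_{xx}W^\top)_{ii}$, which is precisely the second bracketed quantity in the statement. Hence each entry of $A_n$ (a difference of the two products $Y_iY_j$ and $U_iU_j$) and each entry of $B_n$ (twice the product of the Gaussian $Y_i+U_i$ against $X_j$, whose scale is controlled by $\var(Y_i+U_i)\le 2(\var(Y_i)+\var(U_i))$ via Cauchy--Schwarz) has variance proxy bounded by a fixed constant multiple of $M_1+M_2$.

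Finally I would take a union bound over all entries. There are $p^2$ entries of $A_n$ and $pq$ entries of $B_n$, at most $(p+q)^2$ in total, and each per-entry inequality is itself a union over the two polarization pieces and the two tail directions; collecting all the resulting constants into the factor $10$ inside the logarithm and into the overall constant $16$, and setting the per-entry failure probability to $\eta/[10(p+q)^2]$, gives $\gamma_n \le 16\sqrt{\log(10(p+q)^2/\eta)/n}\,(M_1+M_2)$ with probability at least $1-\eta$, as claimed. The main obstacle here is bookkeeping rather than conceptual: one must verify via Wick's theorem and the polarization reduction that the variance proxy of \emph{each} product genuinely reduces to $M_1+M_2$ (and not to a cross term absent from the stated bound), and then track the numerical constants through the chi-square deviation inequality and the two-level union bound so that they consolidate cleanly into $10$ and $16$.
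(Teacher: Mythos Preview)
Your proposal is correct and follows essentially the same route as the paper: both reduce each entry via the polarization identity to differences of empirical averages of squared Gaussians, apply the Laurent--Massart chi-square tail bound, and finish with a union bound over the $O((p+q)^2)$ entries, with the condition $n \ge \log(10(p+q)^2/\eta)$ playing exactly the role you identify of keeping the deviation in the $\sqrt{\log(\cdot)/n}$ regime. The paper packages the argument slightly differently---it first proves a general lemma bounding $|A(\Sigma^n-\Sigma^*)|_\infty$ for an arbitrary fixed matrix $A$ and then applies it three times (with $A=I$ on $\Sigma^*$, with $A=I$ on the covariance of $\tilde A X$ where $\tilde A=(\Omega^*_{yy})^{-1}\Omega^*_{yx}$, and with $A=\tilde A$ on $\Sigma^*_{xx}$)---but the underlying mechanics and the resulting bound are the same as in your entry-by-entry treatment.
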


The following result bounds the Frobenius norm estimation error in terms of $\gamma_n$.
\begin{theorem}\label{thrm:F_norm_bound}
Let $\hat\Theta=(\hat\Omega_{yy},\hat\Omega_{yx})$ be the global
minimizer of \eqref{prob:MLE_Omega_regularized_general} with
element-wise $\ell_1$-penalty $R_e$. Assume that $\lambda_n,\rho_n
\in[2\gamma_n, c_0\gamma_n]$ for some $c_0\ge2$. We further assume
that $\Lpa$ has LRSC at $\Theta^*=(\Omega^*_{yy},\Omega^*_{yx})$
with constant $\beta(\Theta^*;r,\alpha)>0. $ Consider $r_0,
\beta_0>0$ so that $\beta(\Theta^*;r_0,\alpha) \ge \beta_0$. Define
$\Delta_n = 1.5 c_0 \beta_0^{-1} \gamma_n \sqrt{|S|}$. If $\Delta_n
< r_0$, then
\[
\|\hat\Theta - \Theta^*\|_{F} \le 1.5 c_0 \beta_0^{-1} \gamma_n \sqrt{|S|} .
\]
\end{theorem}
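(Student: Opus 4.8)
The plan is to follow the now-standard penalized $M$-estimation template (basic inequality, cone condition, restricted strong convexity), adapted to the partial likelihood $\Lpa$ and to the \emph{local} nature of the LRSC constant. The first thing I would record is that $A_n$ and $B_n$ are precisely the two blocks of the gradient $\nabla\Lpa(\Theta^*)$. Differentiating \eqref{equat:Lp} gives $\nabla_{\Omega_{yy}}\Lpa = -\Omega_{yy}^{-1} + \Sigma^n_{yy} - \Omega_{yy}^{-1}\Omega_{yx}\Sigma^n_{xx}\Omega_{yx}^\top\Omega_{yy}^{-1}$ and $\nabla_{\Omega_{yx}}\Lpa = 2\Sigma^n_{yx} + 2\Omega_{yy}^{-1}\Omega_{yx}\Sigma^n_{xx}$. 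Because the population version of $\Lpa$ (with $\Sigma^n$ replaced by $\Sigma^*$) is minimized at $\Theta^*$, its gradient vanishes there; subtracting this vanishing population gradient from the empirical gradient evaluated at $\Theta^*$ turns the two blocks exactly into $A_n$ and $B_n$. Consequently both blocks of $\nabla\Lpa(\Theta^*)$ have entrywise $\ell_\infty$-norm at most $\gamma_n$.

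Writing $\Delta\Theta = \hat\Theta - \Theta^*$, I would next invoke optimality: since $\hat\Theta$ minimizes the convex objective $\Lpa + R_e$ over the convex feasible set, $\Lpa(\hat\Theta) + R_e(\hat\Theta) \le \Lpa(\Theta^*) + R_e(\Theta^*)$, which rearranges via the definition of $\Delta\Lpa$ into the basic inequality
\[
\Delta\Lpa(\Theta^*,\Delta\Theta) \le -\langle \nabla\Lpa(\Theta^*),\Delta\Theta\rangle + R_e(\Theta^*) - R_e(\hat\Theta).
\]
Hölder's inequality controls the linear term by $\gamma_n|\Delta\Theta|_1$, while decomposability of the element-wise $\ell_1$ penalty around the support $S$ (using $\Theta^*_{\bar S}=0$ and noting that the unpenalized diagonal of $\Omega_{yy}$ lies inside $S$) bounds the penalty gap. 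Plugging in $\lambda_n,\rho_n \ge 2\gamma_n$ extracts two conclusions simultaneously: the cone condition $|\Delta\Theta_{\bar S}|_1 \le \alpha|\Delta\Theta_S|_1$ with $\alpha = 3\max\{\lambda_n,\rho_n\}/\min\{\lambda_n,\rho_n\}$, and --- after using $\max\{\lambda_n,\rho_n\}\le c_0\gamma_n$ together with $|\Delta\Theta_S|_1 \le \sqrt{|S|}\,\|\Delta\Theta\|_F$ --- the upper estimate
\[
\Delta\Lpa(\Theta^*,\Delta\Theta) \le \tfrac{3}{2}c_0\gamma_n\sqrt{|S|}\,\|\Delta\Theta\|_F .
\]

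The step I expect to be the main obstacle is that the LRSC lower bound $\Delta\Lpa(\Theta^*,\Delta\Theta)\ge\beta_0\|\Delta\Theta\|_F^2$ is guaranteed only inside the ball $\|\Delta\Theta\|_F\le r_0$, whereas nothing so far bounds $\|\Delta\Theta\|_F$. I would overcome this by a convexity-based localization: put $t^\star=\min\{1,\,r_0/\|\Delta\Theta\|_F\}$ and $\bar\Delta=t^\star\Delta\Theta$, so that $\|\bar\Delta\|_F\le r_0$ and $\Theta^*+\bar\Delta$ lies on the segment $[\Theta^*,\hat\Theta]$. Convexity of $\Lpa+R_e$ combined with $(\Lpa+R_e)(\hat\Theta)\le(\Lpa+R_e)(\Theta^*)$ yields $(\Lpa+R_e)(\Theta^*+\bar\Delta)\le(\Lpa+R_e)(\Theta^*)$, so the whole chain of the previous paragraph applies verbatim to $\bar\Delta$ (in particular the cone condition holds for $\bar\Delta$, being scale invariant), and now LRSC legitimately applies to $\bar\Delta$.

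Combining the LRSC lower bound with the upper estimate for $\bar\Delta$ gives $\beta_0\|\bar\Delta\|_F^2 \le \tfrac{3}{2}c_0\gamma_n\sqrt{|S|}\,\|\bar\Delta\|_F$, i.e. $\|\bar\Delta\|_F \le 1.5\,c_0\beta_0^{-1}\gamma_n\sqrt{|S|}=\Delta_n$. Finally I would invoke the hypothesis $\Delta_n<r_0$ to close the loop: if $t^\star<1$ then $\|\bar\Delta\|_F=r_0>\Delta_n$, contradicting $\|\bar\Delta\|_F\le\Delta_n$; hence $t^\star=1$, $\bar\Delta=\Delta\Theta$, and $\|\hat\Theta-\Theta^*\|_F\le\Delta_n$ as claimed. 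Beyond routine bookkeeping, the only delicate points are the correct treatment of the unpenalized diagonal when reading off the cone condition, and the localization argument that makes the local LRSC constant usable.
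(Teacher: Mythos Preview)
Your proposal is correct and follows essentially the same approach as the paper's proof. The paper packages the cone-condition derivation into a separate lemma (Lemma~\ref{lemma:solution_set}) and computes the gradient at $\Theta^*$ via the one-variable function $f(s)=\Lpa(\Theta^*+s\Delta\Theta)$ rather than explicitly identifying $A_n,B_n$ as the blocks of $\nabla\Lpa(\Theta^*)$, but the substance---basic inequality, decomposability of $R_e$ around $S$, the localization $\bar\Delta=t^\star\Delta\Theta$ justified by convexity of $\Lpa+R_e$ along the segment, application of LRSC inside the ball, and the final contradiction using $\Delta_n<r_0$---is identical to what you outline.
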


The following corollary is easier to interpret than Theorem~\ref{thrm:F_norm_bound}.
\begin{corollary}
Let $\hat\Theta=(\hat\Omega_{yy},\hat\Omega_{yx})$ be the global
minimizer of \eqref{prob:MLE_Omega_regularized_general} with
element-wise $\ell_1$-penalty $R_e$. Assume that $\lambda_n,\rho_n
\in[2\gamma_n, c_0\gamma_n]$ for some $c_0\ge2$.
Define
\begin{align*}
\beta_0=&
\frac{\rho_-}{40 \lambda_{\max}(\Omega^*_{yy})} \cdot
\min\left[2 ,
\frac{3 \lambda_{\min}(\Omega^*_{yy})}{8\lambda_{\max}(\Omega^*_{yx} \Sigma^*_{xx}  (\Omega^*_{yx})^\top)}
\right] , \\
r_0 = &
\min\left[
0.5 \lambda_{\min}(\Omega^*_{yy}),
0.13 \sqrt{\lambda_{\max} \big[\Omega^*_{yx} \Sigma^*_{xx}   (\Omega^*_{yx})^\top \big]/\rho_+ }
\right] , \\
\gamma_0 =& 16 \left[
\max_{i} (\Sigma^*_{ii})+ \max_i (((\Omega^*_{yy})^{-1}\Omega^*_{yx}\Sigma^*_{xx}\Omega^{*\top}_{yx}(\Omega^*_{yy})^{-1})_{ii}) \right] .
\end{align*}
Let $c_1$ and $c_2$ be absolute constants in Proposition~\ref{prop:rip}.
If $n$ is sufficiently large so that
\[
n > \max\left[c_1 (p + \tilde{s} \log (p+q)), \log (10 (p+q)^2/\eta)
, (1.5 c_0 \gamma_0)^2(r_0 \beta_0)^{-2} |S| \log(10(p+q)^2/\eta)
\right]
\]
with $\tilde{s} = |S| + \lceil 4 (\rho_+/\rho_-) \alpha^2 |S|\rceil $,
then with probability no less than $1- \exp(-c_2 n)-\eta$,
\[
\|\hat\Theta - \Theta^*\|_{F} \le 1.5 c_0 \beta_0^{-1} \gamma_0 \sqrt{|S|\log (10 (p+q)^2/\eta) /n} .
\]
\end{corollary}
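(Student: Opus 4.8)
The plan is to show that this corollary is a direct specialization of Theorem~\ref{thrm:F_norm_bound}, obtained by substituting the concrete bounds for the LRSC constant (Proposition~\ref{prop:RSC}) and for $\gamma_n$ (Proposition~\ref{prop:gamma_convergence}), and then combining the associated high-probability events through a union bound. I would organize the argument around two events.

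First, I would define the event $E_1$ on which Assumption~\ref{assump:rip} holds. By Proposition~\ref{prop:rip}, with $\tilde{s} = |S| + \lceil 4(\rho_+/\rho_-)\alpha^2|S|\rceil$, the condition $n \geq c_1(p + \tilde{s}\log(p+q))$ (the first term in the sample-size maximum) guarantees $P(E_1) \geq 1 - \exp(-c_2 n)$. On $E_1$, Proposition~\ref{prop:RSC} applies: since the definition of $r_0$ in the corollary coincides with the radius bound required there, and since $\lambda_{\min}(3\Omega^*_{yy}) = 3\lambda_{\min}(\Omega^*_{yy})$, the constant $\beta_0$ of the corollary is exactly the lower bound furnished by that proposition, so that $\beta(\Theta^*;r_0,\alpha) \geq \beta_0 > 0$.

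Second, I would define the event $E_2$ on which the estimate for $\gamma_n$ holds. By Proposition~\ref{prop:gamma_convergence}, once $n \geq \log(10(p+q)^2/\eta)$ (the second term in the maximum), with probability at least $1-\eta$ we have $\gamma_n \leq \gamma_0\sqrt{\log(10(p+q)^2/\eta)/n}$, where $\gamma_0$ is precisely the bracketed constant multiplied by $16$. On $E_1\cap E_2$ the hypotheses of Theorem~\ref{thrm:F_norm_bound} are in force: $\lambda_n,\rho_n\in[2\gamma_n,c_0\gamma_n]$ by assumption, and LRSC at $\Theta^*$ holds with constant $\beta_0$. It then remains only to check the condition $\Delta_n < r_0$. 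Substituting the bound on $\gamma_n$ gives $\Delta_n = 1.5c_0\beta_0^{-1}\gamma_n\sqrt{|S|} \leq 1.5c_0\beta_0^{-1}\gamma_0\sqrt{|S|\log(10(p+q)^2/\eta)/n}$, and this is strictly less than $r_0$ exactly when $n > (1.5c_0\gamma_0)^2(r_0\beta_0)^{-2}|S|\log(10(p+q)^2/\eta)$, which is the third term in the maximum. Hence Theorem~\ref{thrm:F_norm_bound} yields $\|\hat\Theta-\Theta^*\|_{F} \leq 1.5c_0\beta_0^{-1}\gamma_n\sqrt{|S|}$, and replacing $\gamma_n$ by its upper bound on $E_2$ produces the stated Frobenius-norm estimate.

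Finally, I would close the probability accounting with a union bound: $P(E_1\cap E_2) \geq 1 - \exp(-c_2 n) - \eta$, which is the advertised confidence level. I do not expect a genuine obstacle here, since the corollary is essentially bookkeeping over results already established; the only point demanding care is the alignment of the three sample-size requirements — in particular, recognizing that the third term in the maximum is precisely what converts the abstract condition $\Delta_n < r_0$ into an explicit lower bound on $n$ once the estimate for $\gamma_n$ is inserted. A secondary check is confirming that the displayed $\beta_0$ and $r_0$ match verbatim the quantities produced by Proposition~\ref{prop:RSC}, so that no stray constants are lost in the specialization.
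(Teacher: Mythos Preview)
Your proposal is correct and follows essentially the same route as the paper's own proof: invoke Proposition~\ref{prop:rip} to get Assumption~\ref{assump:rip} with high probability, apply Proposition~\ref{prop:RSC} to certify $\beta(\Theta^*;r_0,\alpha)\ge\beta_0$, use Proposition~\ref{prop:gamma_convergence} to bound $\gamma_n$, check that the third sample-size condition forces $\Delta_n<r_0$, and conclude via Theorem~\ref{thrm:F_norm_bound} with a union bound over the two events. The only cosmetic difference is that you make the events $E_1$, $E_2$ explicit, whereas the paper handles them inline.
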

\begin{proof}
Since $n \geq c_1 (p + \tilde{s} \log (p+q))$,  with probability no less than $1- \exp(-c_2 n)-\eta$,
both Assumption~\ref{assump:rip} hold and
Proposition~\ref{prop:gamma_convergence} are valid.

Since Assumption~\ref{assump:rip} holds,
Proposition~\ref{prop:RSC} implies
$\beta(\Theta^*,r_0,\alpha) \ge \beta_0$.
Since $n \geq \log (10 (p+q)^2/\eta)$,
Proposition~\ref{prop:gamma_convergence} implies that
$\gamma_n \leq \sqrt{\log (10 (p+q)^2/\eta) /n}   \gamma_0$.
Therefore the assumption of $n$ implies that
$\Delta_n \leq 1.5 c_0 \beta_0^{-1} \gamma_0 \sqrt{|S|\log (10 (p+q)^2/\eta) /n} < r_0$,
and Theorem~\ref{thrm:F_norm_bound} implies that
$\|\hat\Theta - \Theta^*\|_{F} \le \Delta_n$.
\end{proof}

We may assume that $\beta_0$, $r_0$, and $\gamma_0$ to be $O(1)$
constants that depend on $\Omega^*$ and $\Sigma^*$. The corollary
implies that when $n$ is at least the order of $p+ |S|
\log((p+q)/\eta)$, then
\[
\|\hat\Theta - \Theta^*\|_{F} = O(\sqrt{|S|\log ((p+q)/\eta) /n}) .
\]

\section{Numerical Algorithm}
\label{sect:algorithm}

We present a coordinate descent procedure to solve the pGGM
problem~\eqref{prob:MLE_Omega_regularized_general}. The algorithm
alternates between solving the following two subproblems on
$\Omega_{yy}$ and $\Omega_{yx}$ respectively:
\begin{eqnarray}
\Omega_{yy}^{(t+1)}&=& \argmin_{\Omega_{yy}\succ 0}
\left[
\Lpa(\Omega_{yy}, \Omega_{yx}^{(t)}) + R(\Omega_{yy},
\Omega_{yx}^{(t)})
\right] , \label{subprob:omega_yy_t+1} \\
\Omega_{yx}^{(t+1)} &=& \argmin_{\Omega_{yx} }
\left[
\Lpa(\Omega_{yy}^{(t+1)}, \Omega_{yx}) + R(\Omega_{yy}^{(t+1)} ,
\Omega_{yx})
\right] . \label{subprob:omega_xy_t+1}
\end{eqnarray}
Since the objective is convex, it is guaranteed that the above
procedure converges to the global minimum. Let us first consider the
minimization problem~\eqref{subprob:omega_yy_t+1}. This is
equivalent to
\begin{equation}\label{equat:Omg_yy_sub}
\Omega_{yy}^{(t+1)} = \argmin_{\Omega_{yy}\succ 0}
\left[
F^{(t)}(\Omega_{yy})+ R(\Omega_{yy}, \Omega_{yx}^{(t)})
\right]
,
\end{equation}
where
\[
F^{(t)}(\Omega_{yy}):=-\log\det(\Omega_{yy}) +
\Tr(\Sigma^n_{yy}\Omega_{yy}) +
\Tr(\Sigma^n_{xx}(\Omega_{yx}^{(t)})^\top\Omega_{yy}^{-1}\Omega_{yx}^{(t)})
.
\]
In our implementation, the proximal gradient descent
method~\citep{Nesterov-2005,Beck-2009} is utilized to solve the
above composite optimization problem, where the gradient of the
first (smooth) term of \eqref{equat:Omg_yy_sub} is given by
\[
\nabla F^{(t)}(\Omega_{yy}) = -\Omega_{yy}^{-1} + \Sigma^n_{yy} -
\Omega_{yy}^{-1}\Omega_{yx}^{(t)}
\Sigma^n_{xx}(\Omega_{yx}^{(t)})^\top\Omega_{yy}^{-1} .
\]
Next, we consider the minimization
problem~\eqref{subprob:omega_xy_t+1}. This is equivalent to
\begin{equation}\label{equat:Omg_yx_sub}
\Omega_{yx}^{(t+1)} = \argmin
\left[ G^{(t)}(\Omega_{yx})+R(\Omega_{yy}^{(t+1)}, \Omega_{yx})
\right] ,
\end{equation}
where
\[
G^{(t)}(\Omega_{yx}):=\Tr(\Sigma^n_{xx}\Omega_{yx}^\top(\Omega_{yy}^{(t+1)})^{-1}\Omega_{yx})
+ 2\Tr (\Sigma_{yx}^{n\top}\Omega_{yx}) .
\]
Again, we apply the proximal gradient method to solve this
subproblem. Here the gradient of the first (smooth) term of
\eqref{equat:Omg_yx_sub} is given by
\[
\nabla G^{(t)}(\Omega_{yx}) = 2(\Omega_{yy}^{(t+1)})^{-1}\Omega_{yx}
\Sigma^n_{xx} + 2\Sigma^n_{yx} .
\]
The computational complexity in terms of $p$ and $q$ for this
coordinate descent algorithm is as follows: (1) $O(p^3 + p^2q +
pq\min\{n,q\})$ for the subproblem~\eqref{subprob:omega_yy_t+1} due
to the inverse of $\Omega_{yy}$ and the matrix product in the
evaluation of gradient $\nabla F^{(t)}(\Omega_{yy})$; and (2)
$O(p^2q + pq\min\{n,q\})$ for the
subproblem~\eqref{subprob:omega_xy_t+1} from matrix product in
evaluating gradient $\nabla G^{(t)}(\Omega_{yx})$. Therefore, the
overall complexity of the proposed algorithm is $O(p^3 + p^2q +
pq\min\{n,q\})$. This can be compared to the $O((p+q)^3)$ or higher
per iteration complexity required by well known representative
algorithms for full precision matrix
estimation~\citep{Friedman-Glasso-2008,Aspremont-2008,Rothman-2008,Lu-VSM-2009}.
In the high dimensional setups where $q \gg \max\{n, p\}$, the
computational advantage of pGGM over standard GGMs can be
significant.

\section{pGGM for Multivariate Regression with Unknown Covariance}
\label{sec:Multivariate_Regression}

In this section, we show that pGGM provides a convex
formulation for solving the following model of multivariate
regression with unknown noise covariance:
\begin{equation}\label{prob:multivariate_regression}
Y = \Gamma_{yx}^* X + \bar\varepsilon_y,
\end{equation}
where $Y \in\mathbb{R}^p$, $X \in \mathbb{R}^q$, $\Gamma_{yx}^*$ is
a $p \times q$ regression coefficient matrix and the random noise
vector $\bar\varepsilon_y \sim \cN(0, (\bar\Omega_{yy}^*)^{-1})$ is
independent of $X$. Our interest is in the simultaneous estimation
of $\Gamma_{yx}^*$ and $\bar\Omega^*_{yy}$ from observations
$\{Y^{(i)};X^{(i)}\}_{i=1}^n$ in the high-dimensional setting. Note
that for this regression problem we do not have to assume the joint
normality of $(Y;X)$, but rather that the noise term is normal (or
more generally sub-Gaussian). Our discussion in this section is
based on the fact that pGGM is a regularized maximum likelihood
estimator for multivariate regression with Gaussian noise.

\subsection{pGGM as a Conditional Maximum Likelihood Estimator}

We will start our discussion under the joint Gaussian setup, which
provides the connection of the pGGM formulation and multivariate
regression. Let the true covariance matrix $\Sigma^*$ be partitioned
into blocks
\[
\Sigma^* = \left[ { {\begin{array}{cc}
   {\Sigma^*_{yy}}   & {\Sigma^*_{yx}}   \\
   {\Sigma^{*\top}_{yx}} & {\Sigma^*_{xx}}   \\
\end{array}}
} \right].
\]
Here we assume that $(Y;X)$ is jointly normal, the conditional
distribution of $Y$ given $X$, given as follows, remains normal:
\begin{equation}\label{equat:cond_distr_ori}
Y\mid X \sim \cN \left(\Sigma^*_{yx}(\Sigma_{xx}^{*})^{-1} X ,
\Sigma^*_{yy} - \Sigma^*_{yx}(\Sigma^*_{xx})^{-1}\Sigma^{*\top}_{yx}
\right) .
\end{equation}
Now by using algebra for block matrix inversion, we may write the
precision matrix $\Omega^*=(\Sigma^*)^{-1}$ as
\begin{equation}
\Omega^* = \left[ { {\begin{array}{*{20}{c}}
   {\left(\Sigma^*_{yy} - \Sigma_{yx}^{*}(\Sigma^*_{xx})^{-1}\Sigma_{yx}^{*\top}\right)^{-1}} & -\left(\Sigma^*_{yy} - \Sigma_{yx}^{*}(\Sigma^*_{xx})^{-1}\Sigma_{yx}^{*\top}\right)^{-1}\Sigma^*_{yx}(\Sigma^*_{xx})^{-1}   \\
   -(\Sigma^*_{xx})^{-1}\Sigma_{yx}^{*\top}\left(\Sigma^*_{yy} - \Sigma_{yx}^{*}(\Sigma^*_{xx})^{-1}\Sigma_{yx}^{*\top}\right)^{-1}   & \Box   \\
\end{array}}
} \right],\nonumber
\end{equation}
and thus
\begin{equation}\label{equat:omega_sigma_connection}
\Omega^*_{yy} = \left(\Sigma^*_{yy} -
\Sigma_{yx}^*(\Sigma^*_{xx})^{-1}\Sigma^{*\top}_{yx}\right)^{-1},
\quad \Omega^*_{yx} =
-\Omega^*_{yy}\Sigma^*_{yx}(\Sigma^*_{xx})^{-1} .
\end{equation}
Therefore the conditional distribution \eqref{equat:cond_distr_ori}
can be rewritten as:
\begin{equation}\label{equat:cond_distr}
Y \mid X \sim \cN (-(\Omega^*_{yy})^{-1}\Omega^*_{yx} X,
(\Omega^*_{yy})^{-1}). \nonumber
\end{equation}
This can be equivalently expressed as the following multivariate
regression model:
\begin{equation}\label{prob:cond_regression}
Y = -(\Omega^*_{yy})^{-1}\Omega^*_{yx} X + \varepsilon_y,
\end{equation}
where $\varepsilon_y \sim \cN(0, (\Omega^*_{yy})^{-1})$ is
independent of $X$. Note that this model can be regarded as
a reparameterization of the standard multivariate regression model in \eqref{prob:multivariate_regression}.
It is easy to verify that given the observations
$\{Y^{(i)};X^{(i)}\}_{i=1}^n$, the negative of the conditional log-likelihood
function for $\varepsilon_y$ is written by
\[
-\log\det(\Omega^*_{yy}) + \Tr (\Sigma^n_{yy}\Omega_{yy}^*) + 2\Tr
(\Sigma^{n\top}_{yx}\Omega^*_{yx}) +
\Tr(\Sigma^n_{xx}\Omega_{yx}^{*\top}(\Omega^*_{yy})^{-1}\Omega^*_{yx}).
\]
which is exactly $\Lpa(\Omega_{yy}^*, \Omega_{yx}^*)$ given
by~\eqref{equat:Lp}. Therefore, pGGM is essentially a regularized conditional
maximum likelihood estimator for the regression
model~\eqref{prob:cond_regression}.
This implies that we can use pGGM to solve multivariate regression
problem with unknown noise covariance matrix $\Omega_{yy}$.

\subsection{Convexity and cGGM}

We now consider the general multivariate regression
model~\eqref{prob:multivariate_regression} with Gaussian noise. A
more straightforward method for estimating the model parameters
$\{\bar\Omega_{yy}^*,\Gamma_{yx}^*\}$ was considered
by~\citet{Li-cGGM} using the following $\ell_1$-regularized
log-likelihood function associated with $\bar\varepsilon_y$:
\begin{equation}\label{equat:cGGM_objective}
\{\hat\Omega_{yy},\hat\Gamma_{yx}\}=\argmin_{\bar\Omega_{yy} \succ
0, \Gamma_{yx}} \left\{ -\log\det\bar\Omega_{yy} +
\Tr(\Sigma^n_{\Gamma_{yx}}\bar\Omega_{yy}) + \lambda_n
|(\bar\Omega_{yy})^-|_1 + \rho_n |\Gamma_{yx}|_1\right\},
\end{equation}
where $\Sigma^n_{\Gamma_{yx}} = \Sigma^n_{yy} -
\Sigma^n_{yx}\Gamma_{yx}^\top -\Gamma_{yx} \Sigma^{n\top}_{yx}
+\Gamma_{yx} \Sigma^n_{xx}\Gamma_{yx}^\top$.
However, with this formulation, the objective function in
\eqref{equat:cGGM_objective} is not jointly convex in $\Gamma_{yx}$
and $\bar\Omega_{yy}$, although it is convex with respect to
$\Gamma_{yx}$ for any fixed $\bar\Omega_{yy}$, and it is also convex
respective to $\bar\Omega_{yy}$ for any fixed $\Gamma_{yx}$.

In contrast, the expression \eqref{prob:cond_regression} is jointly
convex in $\{\Omega_{yy},\Omega_{yx}\}$, which may be regarded
as a convex reparameterization of \eqref{prob:multivariate_regression}
under the following transformation:
\begin{equation}\label{equat:re_parametrization}
\bar\Omega_{yy} = \Omega_{yy} \quad, \quad \Gamma_{yx} =
-\Omega_{yy}^{-1}\Omega_{yx}. \nonumber
\end{equation}
This transformation yields a one-to-one mapping from
$\{\bar\Omega_{yy},\Gamma_{yx}\}$ to $\{\Omega_{yy}, \Omega_{yx}\}$.
The convexity of \eqref{prob:cond_regression} is desirable both for
optimization and for theoretical analysis which we considered in
Section~\ref{sect:analysis}.

It is worth mentioning that for high dimensional problems,
regularization has to be imposed on the model parameters. With
regularization, the pGGM regression formulation
\eqref{prob:cond_regression} becomes
\eqref{prob:MLE_Omega_regularized_general}, which is different from
the cGGM formulation of \eqref{equat:cGGM_objective}. This is
because for pGGM, the $\ell_1$-norm penalties are imposed on
$\{\Omega_{yy},\Omega_{yx}\}$, and for cGGM, the $\ell_1$-norm
penalties have to be directly imposed on
$\{\bar\Omega_{yy},\Gamma_{yx}\}$. The former has a natural
interpretation in terms of the conditional dependency between the
variables in $X$ and $Y$, while the latter does not have such an
intuitive interpretation.

\subsection{Univariate Case}

As a special case, when the output $Y$ is univariate, pGGM reduces
to a regularized maximum likelihood estimator for high-dimensional
linear regression with unknown variance. In this case, by replacing
the scalar $\Omega_{yy}$ and the row vector $\Omega_{yx}$ with
$\omega$ and $\theta^\top$ respectively in the pGGM formulation
\eqref{prob:MLE_Omega_regularized_general}, with element-wise
$\ell_1$-penalty $R_e$, we arrive at the following estimator:
\begin{equation}\label{prob:MLE_Omega_neighborhood}
\{\hat{\omega}, \hat{\theta}\} = \argmin_{\omega>0, \theta}
\Lpa(\omega, \theta) + \rho \|\theta\|_1,
\end{equation}
where
\[
\Lpa(\omega, \theta):= -\log(\omega) + \Sigma^n_{yy}\omega +
2\theta^\top \Sigma^n_{xy} + \theta^\top \Sigma^n_{xx}\theta/\omega.
\]
As aforementioned that this is identical to a regularized maximum
likelihood estimator for the following linear regression model with
unknown variance:
\begin{equation}\label{prob:univariate_regression}
Y = -\omega^{-1}\theta^\top X + \varepsilon,
\end{equation}
where $\varepsilon \sim \cN(0, \omega^{-1})$ is independent of $X$.
The specific $\ell_1$-penalized maximum likelihood
estimator~\eqref{prob:MLE_Omega_neighborhood} has also been studied
by~\citet{Stadler-2010-mixture} for sparse linear regression with
unknown noise covariance. For multivariate random vector $Y$, pGGM
can be regarded as a multivariate generalization of the method in
\citep{Stadler-2010-mixture}.

For graphical model estimation, pGGM with univariate $Y$ can also be
regarded as a variant of the neighborhood selection
method~\citep{Meinshausen-NSLasso-2006}. Let us write $\Omega_{jj}$
the entry of $\Omega$ at the $j$th row and the $j$th column, and
denote by $\Omega_{j,-j}$ or $\Omega_{-j,j}$ the $j$th row of
$\Omega$ with its $j$th entry removed or the $j$th column with its
$j$th entry removed respectively. In order to recover the non-zero
entries in $\Omega$, \citet{Meinshausen-NSLasso-2006} proposed to
solve for each row $j$ a Lasso problem:
\begin{equation}\label{prob:neighborhood_selection}
\hat\theta =\argmin_{\theta} \theta^\top \Sigma^n_{-j,-j}\theta +
2\theta^\top \Sigma^n_{-j,j} + \rho \|\theta\|_1.
\end{equation}
If we fix $\omega=1$ in~\eqref{prob:MLE_Omega_neighborhood}, then
the resultant estimator is identical
to~\eqref{prob:neighborhood_selection}. For precision matrix
estimation, our formulation \eqref{prob:MLE_Omega_neighborhood} is
different from neighborhood
selection~\eqref{prob:neighborhood_selection} due to the inclusion
of $\omega$ as an unknown parameter. More precisely, the quantity
$\omega^{-1}$ is the noise variance for the corresponding Lasso
regression, and the estimator~\eqref{prob:MLE_Omega_neighborhood}
may be regarded as an extension of neighborhood selection without
knowing the noise variance. For multivariate random vector $Y$, pGGM
can be regarded as a blockwise generalization of neighborhood
selection for graphical model estimation.

For precision matrix estimation, the regression
model~\eqref{prob:univariate_regression} has also been considered
by~\citet{Yuan-JMLR-2010}. However, the author suggested a procedure
to estimate $\theta$ via the
Dantzig-selector~\citep{CandTao:2005:dantzig} followed by a mean
squared error estimator for the variance $\omega^{-1}$. In contrast,
the pGGM based estimator~\eqref{prob:MLE_Omega_neighborhood}
simultaneously estimates the two parameters under a joint convex
optimization framework.

\section{Experiments}
\label{sect:expriment}

In this section, we investigate the empirical performance of the
pGGM estimator on both synthetic and real datasets and compare its
performance to several representative approaches for sparse
precision matrix estimation.

\subsection{Monte Carlo Simulations}
\label{ssect:simulation}

In the Monte Carlo simulation study, we investigate parameter
estimation and support recovery accuracy as well as algorithm
efficiency using synthetic data for which we know the ground truth.
\subsubsection{Data}
Our simulation study employs a precision matrix $\Omega^*$ whose
sub-matrices $\Omega^*_{yy}$ and $\Omega^*_{yx}$ are sparse, while
$\Omega^*_{xx}$ is dense. The matrix is generated as follows: we
first define $\tilde\Omega^* = M + \sigma I$, where each
off-diagonal entry in $M$ is generated independently and equals 1
with probability $P = 0.1$ or 0 with probability $1-P=0.9$. $M$ has
zeros on the diagonal, and $\sigma$ is chosen so that the condition
number of $\Omega^*$ is $p+q$. We then add the $ q \times q$ all-one
matrix to the block $\tilde\Omega^*_{xx}$ and the resultant matrix
is defined as $\Omega^*$. We generate a training sample of size $n$
from $\cN(0, \Sigma^*)$ and an independent sample of size
$n$ from the same distribution for validating the tuning parameters.
The goal is to estimate the sparse blocks
$\{\Omega^*_{yy},\Omega^*_{yx}\}$. We fix $(n,p)=(100,50)$ and
compare the performance under increasing values of $q = 50, 100,
200, 500$, replicated 50 times each.

\subsubsection{Comparing Methods and Evaluation Metrics}

We compare the performance of pGGM to the following three
representative approaches for sparse precision matrix estimation:

\begin{itemize}
\item cGGM for conditional Gaussian graphical model estimation~\citep{Li-cGGM}. After recovering the regression parameters
$\hat\Gamma_{yx}$ and the conditional precision matrix
$\hat\Omega_{yy}$, we estimate the block $\hat\Omega_{yx} = -
\hat\Omega_{yy}\hat\Gamma_{yx}$.
\item GLasso for $\ell_1$-penalized precision matrix estimation~\citep{Friedman-Glasso-2008}. We conventionally apply GLasso to estimate
the full precision matrix $\hat\Omega$.
\item NSLasso for support
recovery~\citep{Meinshausen-NSLasso-2006}. We use a modified version
to recover the supports in the blocks $\Omega^*_{yy}$ and
$\Omega^*_{yx}$ by regressing each $Y_i$ on $Y_{-i}$ and $X$ via the
Lasso. Such a modified neighborhood selection method has also been
adopted by~\citet{Li-cGGM} for their empirical study. Note that this
method does not provide an estimate of the precision matrix.
\end{itemize}
For all methods, we use the validation set to
estimate the values of the regularization parameters.

We measure the parameter estimation quality of
$\hat\Theta=(\hat\Omega_{yy}, \hat\Omega_{yx})$ by its Frobenius
norm distance to $\Theta^*=(\Omega^*_{yy}, \Omega^*_{yx})$. To evaluate
the support recovery performance, we use the F-score from
the information retrieval literature. Note that precision, recall,
and F-scores are standard concepts in information retrieval defined
as follows:
\begin{eqnarray}
\text{Precision} =&\text{TP} /(\text{TP} + \text{FP}) \nonumber \\
\text{Recall} =& \text{TP}/(\text{TP} + \text{FN}) \nonumber\\
\text{F-score} =& \frac{2\cdot \text{Precision} \cdot
\text{Recall}}{\text{Precision} + \text{Recall}}, \nonumber
\end{eqnarray}
where TP stands for true positives (for nonzero entries), and FP and
FN stand for false positives and false negatives. Since one can generally
trade-off precision and recall by increasing one and decreasing the
other, a common practice is to use the F-score as a single metric
to evaluate different methods. The larger the F-score, the better
the support recovery performance.

\subsubsection{Results}

Figure~\ref{fig:frobenius}, \ref{fig:fscore}, \ref{fig:cpu} plot the
mean and standard errors of the above metrics as a function of
dimensionality $q$. The results show the following:

\begin{itemize}
  \item Parameter estimation accuracy (see Figure~\ref{fig:frobenius}): pGGM and cGGM perform favorably to
  GLasso. This is expected because GLasso enforces the sparsity of the
  full precision matrix and thus tends to select a smaller
regularization parameter due to the dense structure of block
$\Omega^*_{xx}$. In contrast, pGGM and cGGM exclude $\Omega_{xx}$ in
the model and thus avoid potential under penalization of sparsity.
pGGM and cGGM perform comparably on parameter estimation accuracy.
Note that NSLasso does not estimate the precision matrix.
  \item Support recovery (see Figure~\ref{fig:fscore}): pGGM achieves
    the best performance among all four methods being compared. pGGM outperforms cGGM since the former directly
enforces the sparsity on blocks $\Omega_{yy}$ and $\Omega_{yx}$
while the latter enforces the sparsity of $\Gamma_{yx}
=-\Omega_{yy}^{-1}\Omega_{yx}$ which is not necessarily sparse. GLasso
is inferior due to the under penalization. We also observe that pGGM is
slightly better than NSLasso.
  \item Computational efficiency (see Figure~\ref{fig:cpu}): The pGGM
    and cGGM methods can achieve $\times 100$ speedup over GLasso when $q=500$.
\end{itemize}

\begin{figure}
\centering \subfigure[Frobenius norm loss
($\downarrow$)]{\includegraphics[width=52mm]{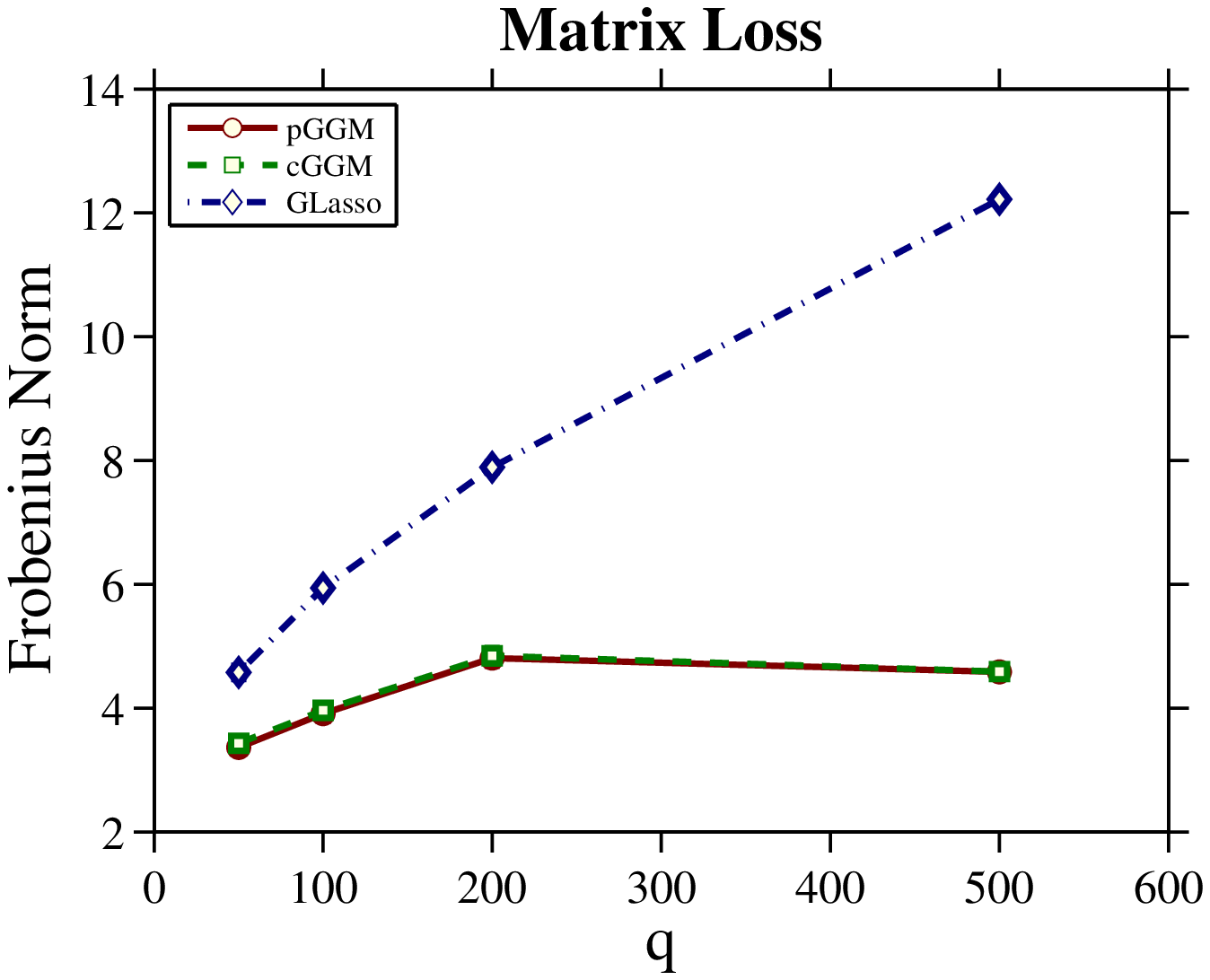}\label{fig:frobenius}}
\subfigure[Support recovery F-score
($\uparrow$)]{\includegraphics[width=52mm]{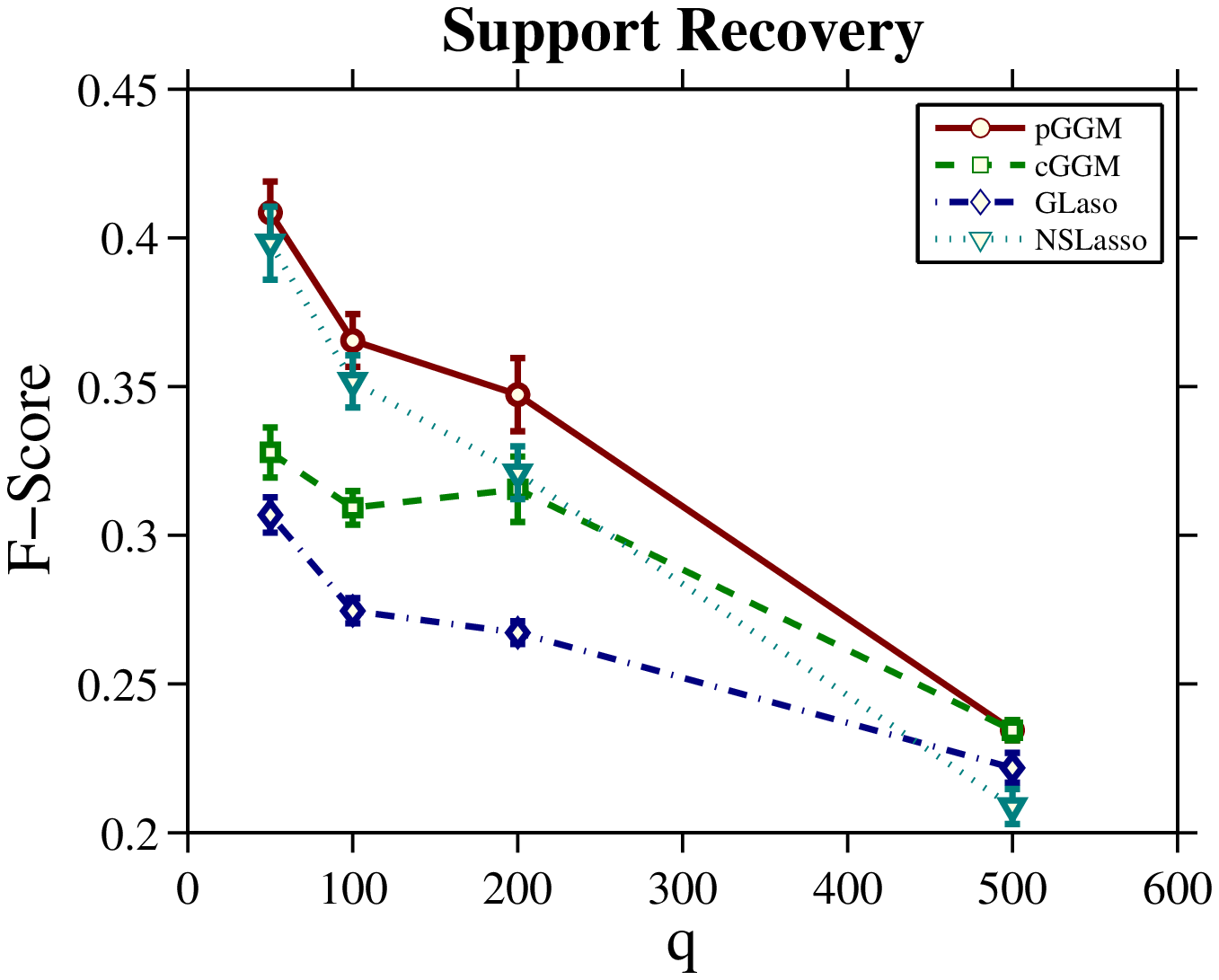}\label{fig:fscore}}
\subfigure[CPU running time
($\downarrow$)]{\includegraphics[width=52mm]{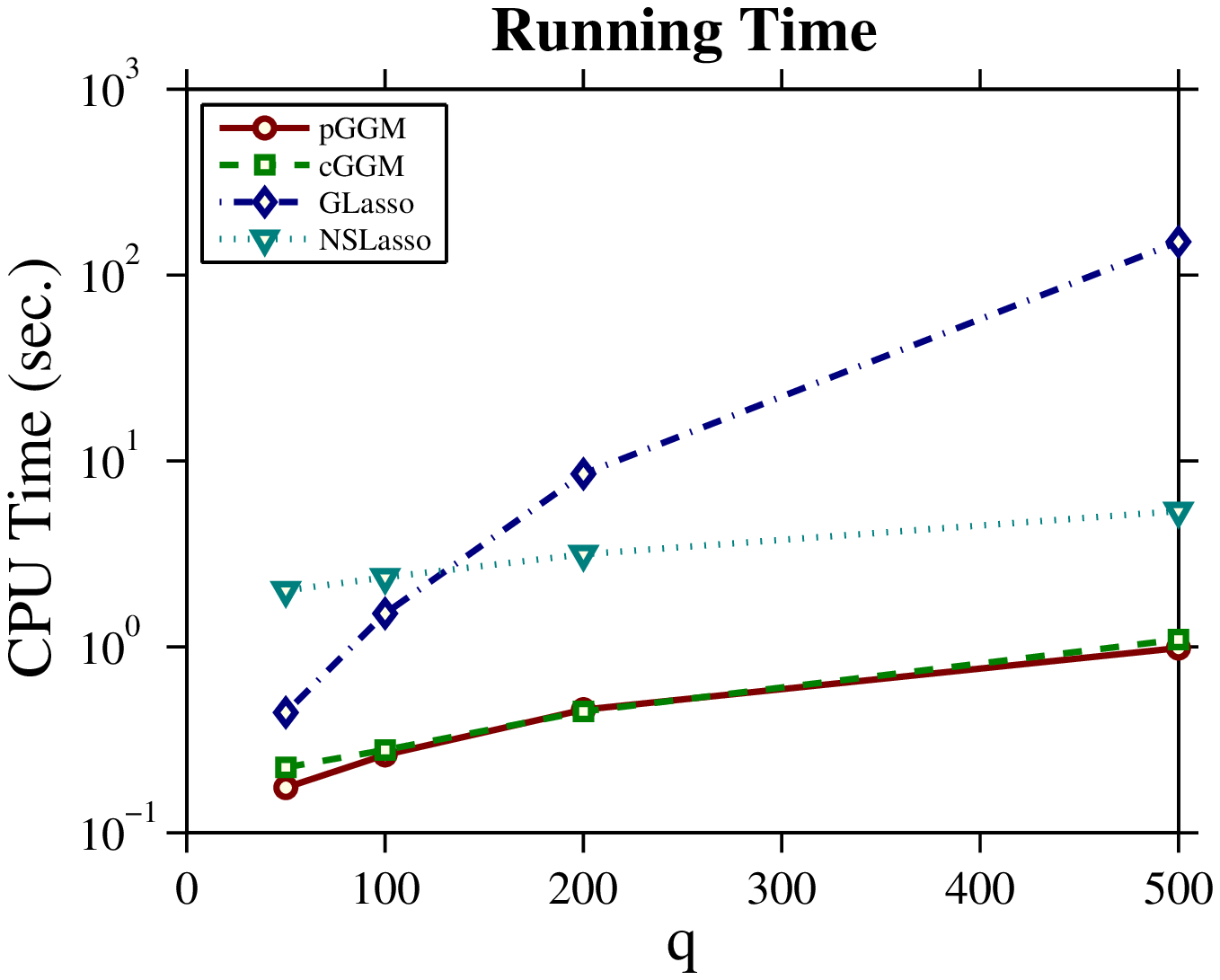}\label{fig:cpu}}
\subfigure[Frobenius norm loss
($\downarrow$)]{\includegraphics[width=52mm]{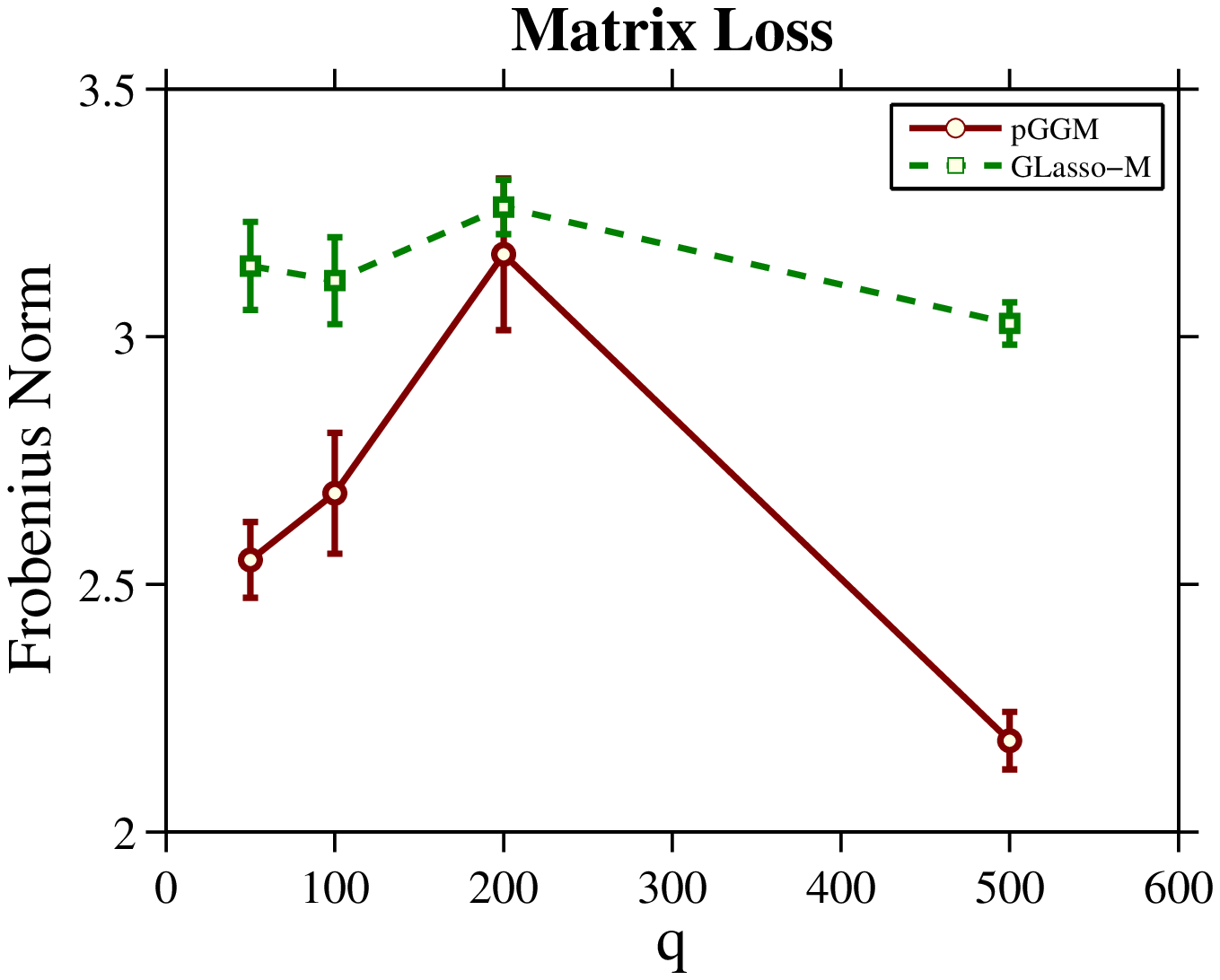}\label{fig:frobenius_y}}
\subfigure[Support recovery F-score
($\uparrow$)]{\includegraphics[width=52mm]{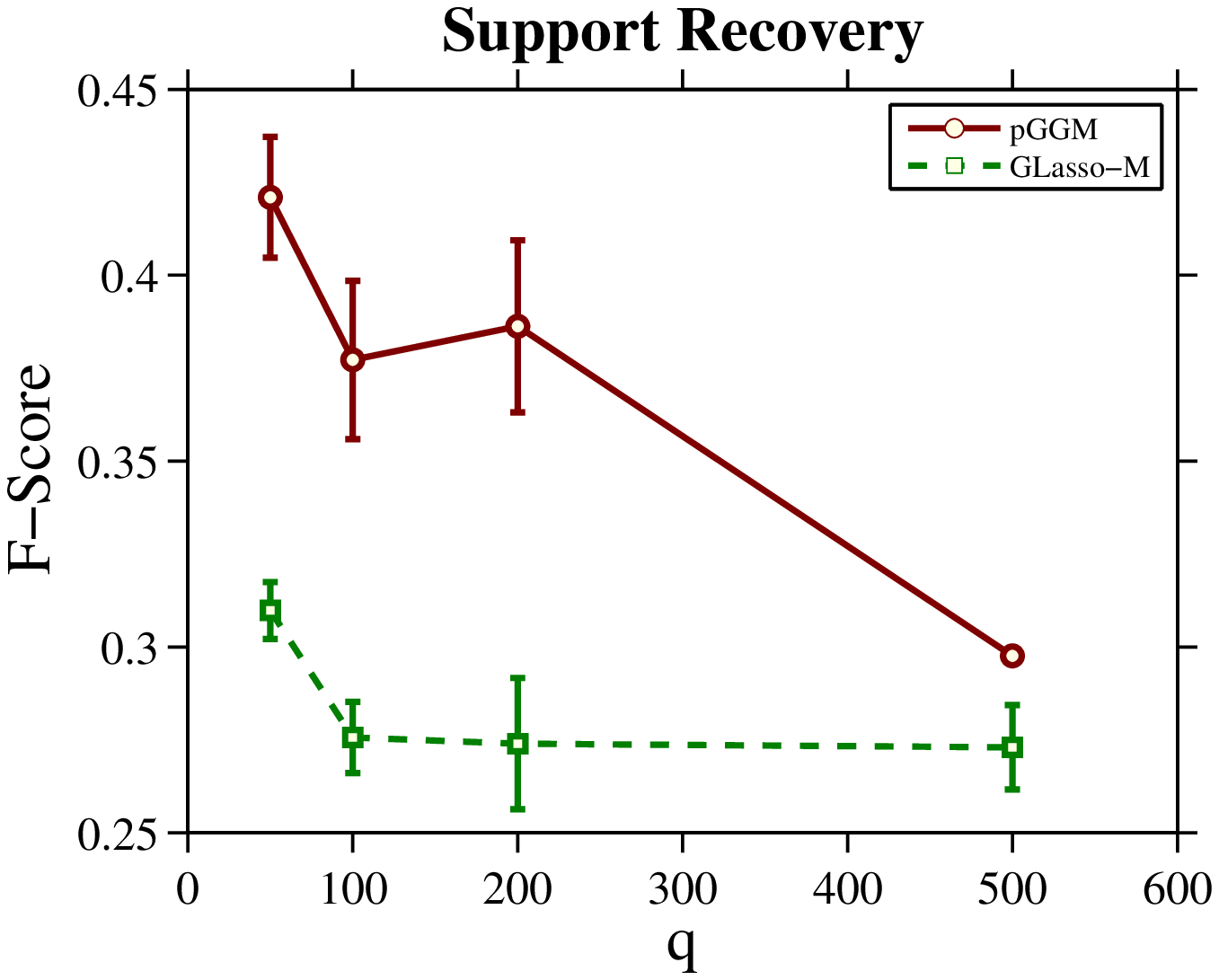}\label{fig:fscore_y}}
\subfigure[CPU running time
($\downarrow$)]{\includegraphics[width=52mm]{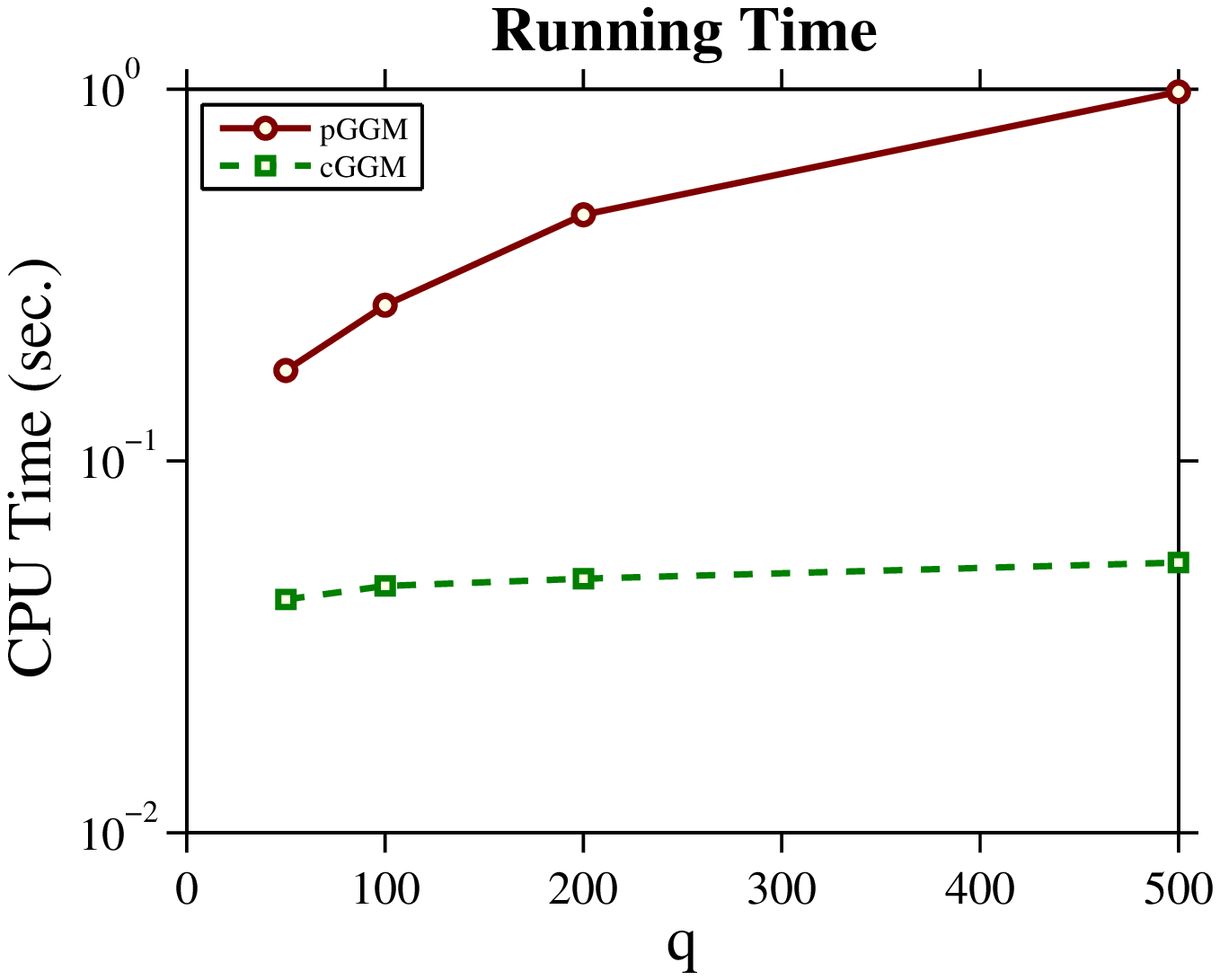}\label{fig:cpu_y}}
\caption{Performance curves on the synthetic data. Top row:
comparison of the estimated blocks
$\{\hat\Omega_{yy},\hat\Omega_{yx}\}$. Bottom row: comparison of the
estimated $\hat\Omega_{yy}$ by pGGM and GLasso-M. The down-arrow
$\downarrow$ means the smaller the better while the up-arrow
$\uparrow$ means the larger the better.\label{fig:synthetic_loss}}
\end{figure}

We further compare pGGM to GLasso applied to the marginal
distribution of $Y$ by ignoring $X$. We call this method as
GLasso-M. The results are plotted in Figure~\ref{fig:frobenius_y},
\ref{fig:fscore_y}, \ref{fig:cpu_y}. It can be observed from these
figures that pGGM consistently outperforms GLasso-M in terms of
parameter estimation and support recovery accuracies.

The detailed performance figures that are used to generate
Figure~\ref{fig:synthetic_loss} are presented in
Appendix~\ref{append:results_montecarlo} in tabular forms, along
with additional performance metrics in spectral norm and matrix
$\ell_1$-norm. The observations using the other norms are consistent
with that of the Frobenius norm.

\subsection{Real Data}

We further study the performance of pGGM on real data.

\subsubsection{Data}
We use three multi-label datasets \textsf{Corel5k},
\textsf{MIRFlickr25k} and \textsf{RCV1-v2} and a stock price dataset
\textsf{S\&P500} for this study. For each dataset, we generate a
training sample for parameter estimation and an independent test
sample for evaluation. Table~\ref{tab:data_statistics} summarizes
some statistics of the data. We next describe the derails of these
datasets.

\noindent\textbf{\textsf{Corel5k}.}  This dataset was first used
in~\citep{Corel5k-2002}. Since then, it has become a standard
benchmark for keyword based image retrieval and image annotation. It
contains around 5,000 images manually annotated with 1 to 5
keywords. The vocabulary contains 260 visual words. The average
number of keywords per sample is 3.4 and the maximum number of
keywords per sample is 5. The data set along with the extracted
visual features are publicly available at
\url{lear.inrialpes.fr/people/guillaumin/data.php}. In our
experiment, we down sample the training data to size 450 for
constructing the Gaussian graphical models of image keywords. For
evaluation purpose, an independent test set of size 450 is selected.
Each image is described by the GIST feature which has dimensionality
512. Our goal is to construct a graphical model for image tags. Note
that the size of label-feature joint variable is $260+512 = 772$,
which allows us to examine the performance when $p + q
> n$.

\noindent\textbf{\textsf{MIRFlickr25k}.} This data contains 25,000
images collected from Flickr over a period of 15 months. The
database is available at~\url{press.liacs.nl/mirflickr/}. The
collection contains highest scored images according to Flickr's
``interestingness'' score. These images were annotated for 24
concepts, including object categories but also more general scene
elements such as \emph{sky}, \emph{water} or \emph{indoor}. For 14
of the 24 concepts, a second and more strict annotation was made.
The vocabulary contains 457 tags. The average number of words per
sample is 2.7 and the maximum words per sample is 32. The data set
along with the extracted visual features are publicly available at
\url{lear.inrialpes.fr/people/guillaumin/data.php}. In our
experiment, we down sample the training set to size 1,250 for
constructing the Gaussian graphical models of image keywords. For
evaluation purpose, an independent test set of size 1,250 is
selected. Each image is described by the GIST feature of dimension
512. Our goal is to construct a graphical model for image tags.

\noindent\textbf{\textsf{RCV1-v2}.} This data set contains newswire
stories from Reuters Ltd~\citet{Lewis-2004}. Several schemes were
utilized to process the documents including removing stopping words,
stemming, and transforming each document into a numerical vector.
There are three sets of categories: \emph{Topics}, \emph{Industries} and
\emph{Regions}. In this paper, we consider the \emph{Topics}
category set, and make use of a subset collection (sample size 3,000,
feature dimension 47,236) of this data
from~\url{www.csie.ntu.edu.tw/~cjlin/libsvmtools/datasets}. We
further down sample the data set to a size of 1,000, and select the top
1,000 words with highest TF-IDF frequencies. For evaluation purpose,
an independent test set of size 1,000 is selected. The vocabulary
contains 103 keywords. The average number of words per sample is 3.3
and the maximum words per sample is 12. Our goal is to construct a
graphical models for these keywords.

\noindent\textbf{\textsf{S\&P500}.} We investigate the historical
prices of \textsf{S\&P500} stocks over 5 years, from January 1, 2007
to January 1, 2012. By taking out the stocks with less than 5 years
of history, we end up with 465 stocks, each having daily closing
prices over 1,260 trading days. The prices are first adjusted for
dividends and splits and the used to calculate daily log returns.
Each day's return can be represented as a point in $\bR^{465}$. For
each day's return, we chose the first 300 as $X$ and the rest $165$
as $Y$. We down sample the data set to size 101. For evaluation
purpose, an independent test set of size 101 is selected. Our goal
is to construct the conditional precision matrix of $Y$ conditioned
on $X$.

\begin{table}
\begin{center}
\caption{Statistics of data. \label{tab:data_statistics}}
\begin{tabular}{c c c c c }
\hline
 & $p$ & $q$ & training size (n) & test size \\
\hline \hline
\textsf{Corel5k}  & 260 & 512 & 450 & 450 \\
\textsf{MIRFlickr25k} & 457 & 512 & 1,250 & 1,250 \\
\textsf{RCV1-v2}  & 103 & 1,000 & 1,000 & 1,000  \\
\textsf{S\&P500}  & 165 & 300 & 101 & 101 \\
\hline
\end{tabular}
\end{center}
\end{table}

\begin{table}
\begin{center}
\caption{Quantitative results on real data
\label{tab:realdata_results}}
\begin{tabular}{c c c c c | c c c c}
\hline
& \multicolumn{4}{c}{$\Lpa$ value on test set}  &
\multicolumn{4}{c}{CPU Time (sec.)on training set} \\
\hline
& pGGM & GLasso & GLasso-M & NSLasso & pGGM & GLasso & GLasso-M & NSLasso\\
\hline \hline
\textsf{Corel5k} &-1.08e3  & -0.63e3  & --- &  --- & 16.63 &  125.74  &  9.07 &  9.06  \\
\textsf{MIRFlickr25k} &-1.99e3  & -1.99e3  &  ---  & --- & 56.93 &  228.71 & 39.74 & 42.89\\
\textsf{RCV1-v2}  &-0.42e3 & -0.39e3  & --- & --- &  3.04 & 421.86 & 1.38 & 75.43 \\
\textsf{S\&P500}  &0.22e3 & 0.24e3 & ---  & --- & 4.83 &  46.65 &  4.28 & 4.29\\
\hline
\end{tabular}
\end{center}
\end{table}

\subsubsection{Methods and Evaluation Metrics}

In these experiments, we compare pGGM to GLasso,  GLasso-M (for
estimating marginal precision matrix using the data component $Y$
only) and NSLasso. Here we focus on convex formulations, and thus
skip cGGM. For all these methods, we use the Bayesian information
criterion (BIC) to select the regularization parameters.

Since there is no ground truth precision matrix, we measure the
quality of $\hat\Theta$ by evaluating the $\Lpa$ objective (recall
its definition in \eqref{equat:Lp}) on the test data. The training
CPU times are also reported. Since the category information of
\textsf{RCV1-v2} and \textsf{S\&P500} are available, we also measure
the precision of the top $k$ links in the constructed conditional
GGM from $\Omega_{yy}$ on these two datasets. A link is regarded as
\emph{true} if and only if it connects two nodes belonging to the
same category. Note that the category information is \emph{not} used
in any of the graphical model learning procedures.

\subsubsection{Results}
Table~\ref{tab:realdata_results} tabulates the evaluated $\Lpa$
objectives on the test set and the training time. The key
observations are
\begin{itemize}
  \item In most cases, pGGM outputs smaller $\Lpa$ objective value than
GLasso (note that the $\Lpa$ value cannot be evaluated for GLasso-M
and NSLasso). pGGM runs much faster than GLasso on all these
datasets.
  \item pGGM is slightly slower than NSLasso on \textsf{Corel5k},
\textsf{MIRFlickr25k} and \textsf{S\&P500} where $p \sim q$, but
significantly faster than NSLasso on RCV1-v2 where $p\ll q$.
\end{itemize}
Figure~\ref{fig:precision} shows the precision of top $k$ links in
the conditional graphs as a function of $k$. It can be seen that
pGGM performs favorably in comparison to the other three methods for
identifying correct links on \textsf{RCV1-v2}. On \textsf{S\&P500},
pGGM and GLasso-M have comparable performance,  and both are better
than GLasso and NSLasso. This is because the \textsf{S\&P500} stocks
are weakly correlated and thus the conditional graphical model can
be well approximated by the marginal graphical model.
\begin{figure}
\centering
\includegraphics[width=80mm]{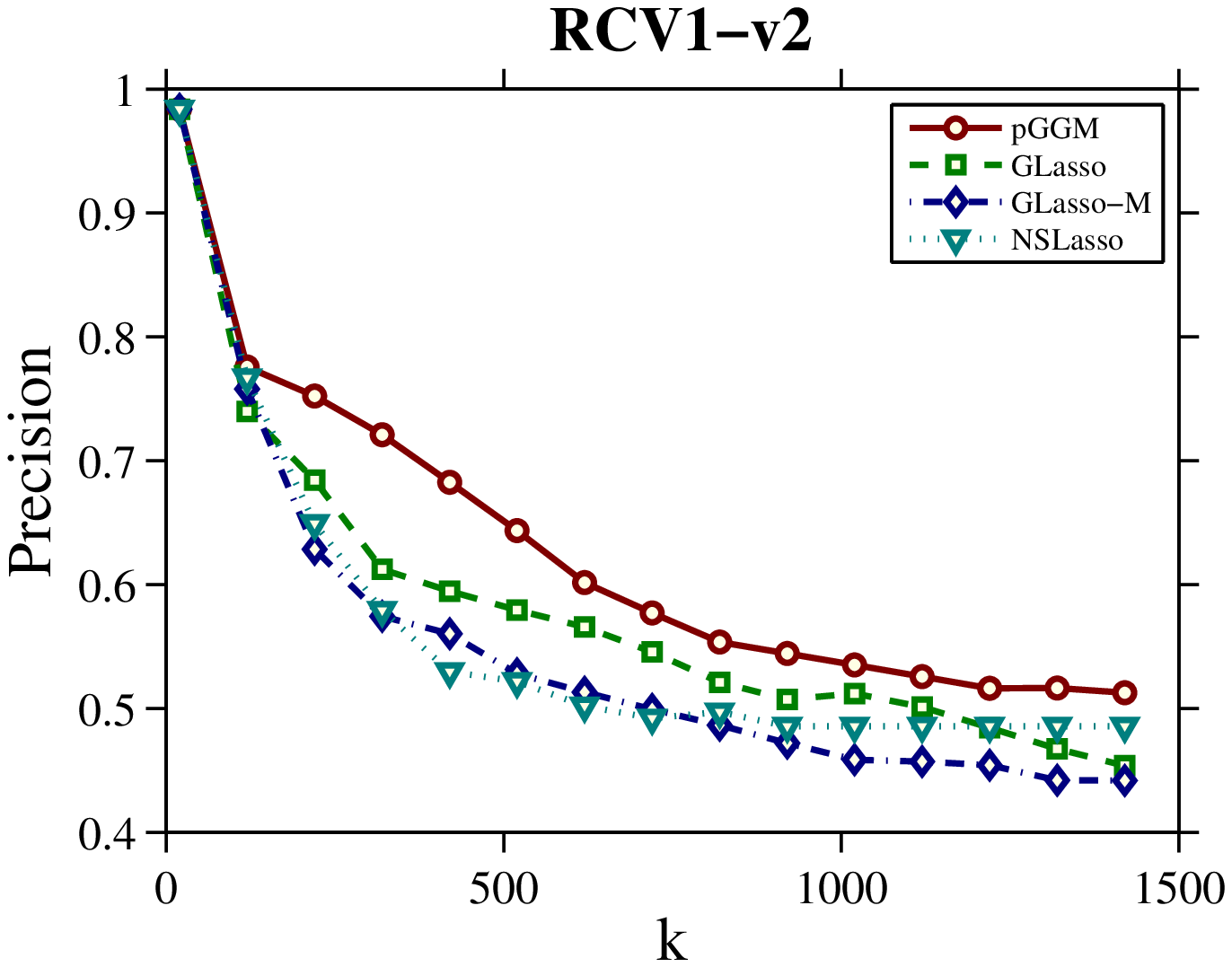}
\includegraphics[width=80mm]{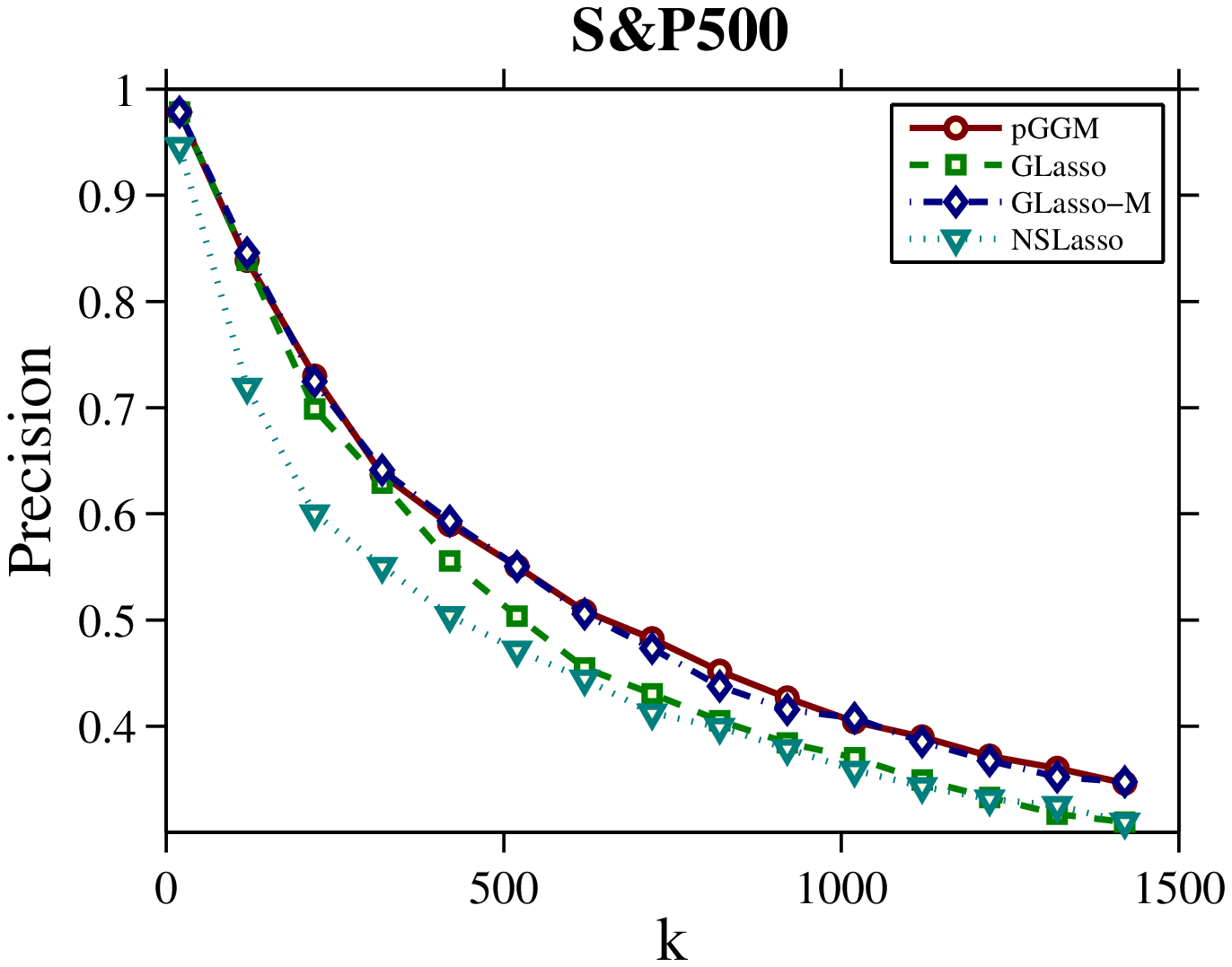}
\caption{Link precision curves on \textsf{RCV1-v2} and
\textsf{S\&P500}. \label{fig:precision}}
\end{figure}

We further evaluate the sparsity of the constructed graphs on these
datasets. The links are identified by $\{(i,j): i\neq j,
|[\hat\Omega_{yy}]_{ij}|\ge \mu\}$ in which $\mu>0$ is a threshold
value. Figure~\ref{fig:num_edges} shows the number of links in
graphs as a function of $\mu$. It can be seen that pGGM, GLasso and
NSLasso tend to output sparser graphical models than GLasso-M. A
potential reason is that GLasso-M ignores the information provided
by $X$, and thus false positive links can be induced. NSLasso
outputs the sparsest network on \textsf{corel5k},
\textsf{MIRFlickr25k} and \textsf{S\&P500}, while pGGM outputs the
sparsest model on \textsf{RCV1-v2}. Note that NSLasso does not
estimate precision matrix. Moreover, pGGM tends to be slightly
sparser than GLasso. These observations are consistent with our
observations on the synthetic data.
\begin{figure}
\centering
\includegraphics[width=80mm]{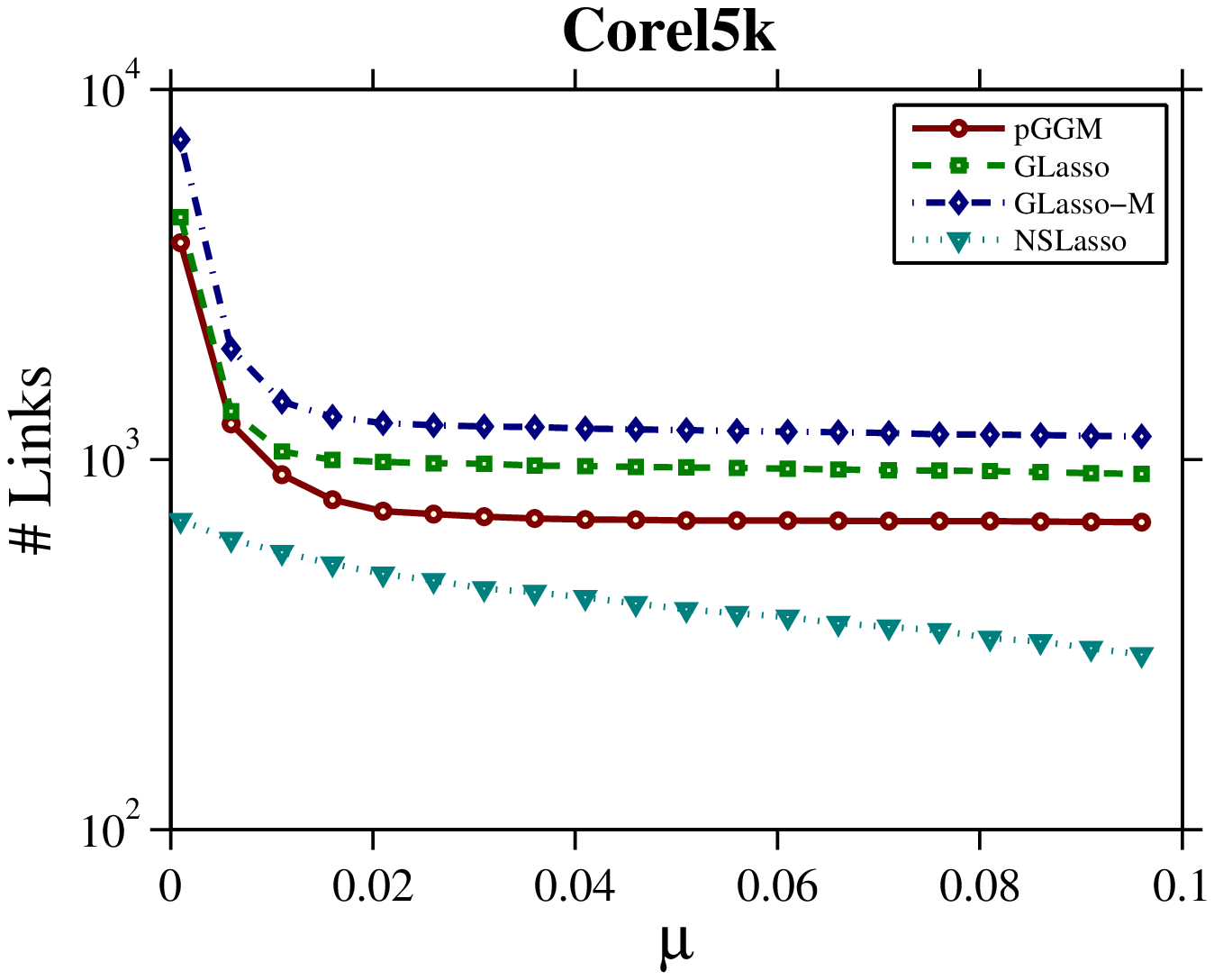}
\includegraphics[width=80mm]{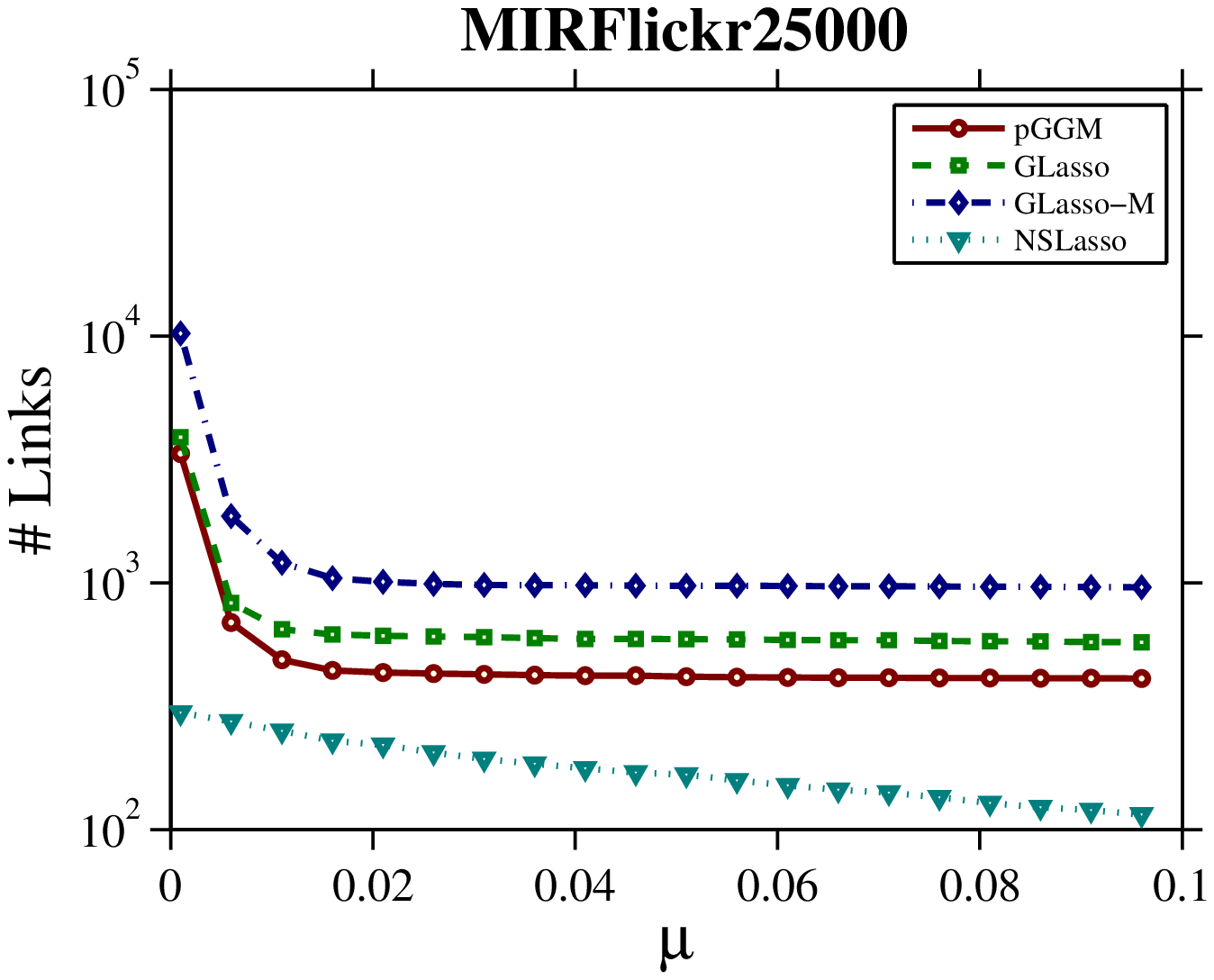}
\includegraphics[width=80mm]{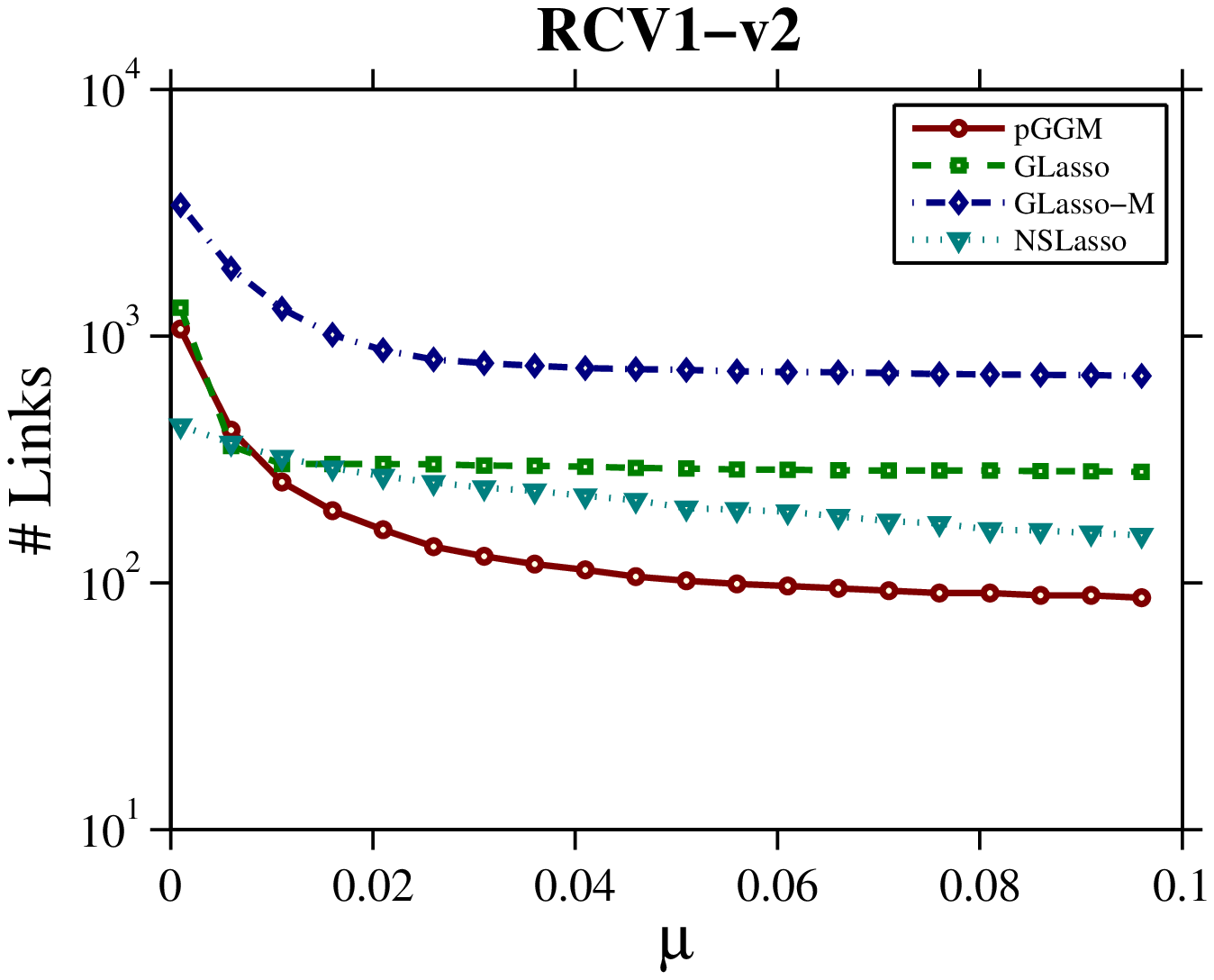}
\includegraphics[width=80mm]{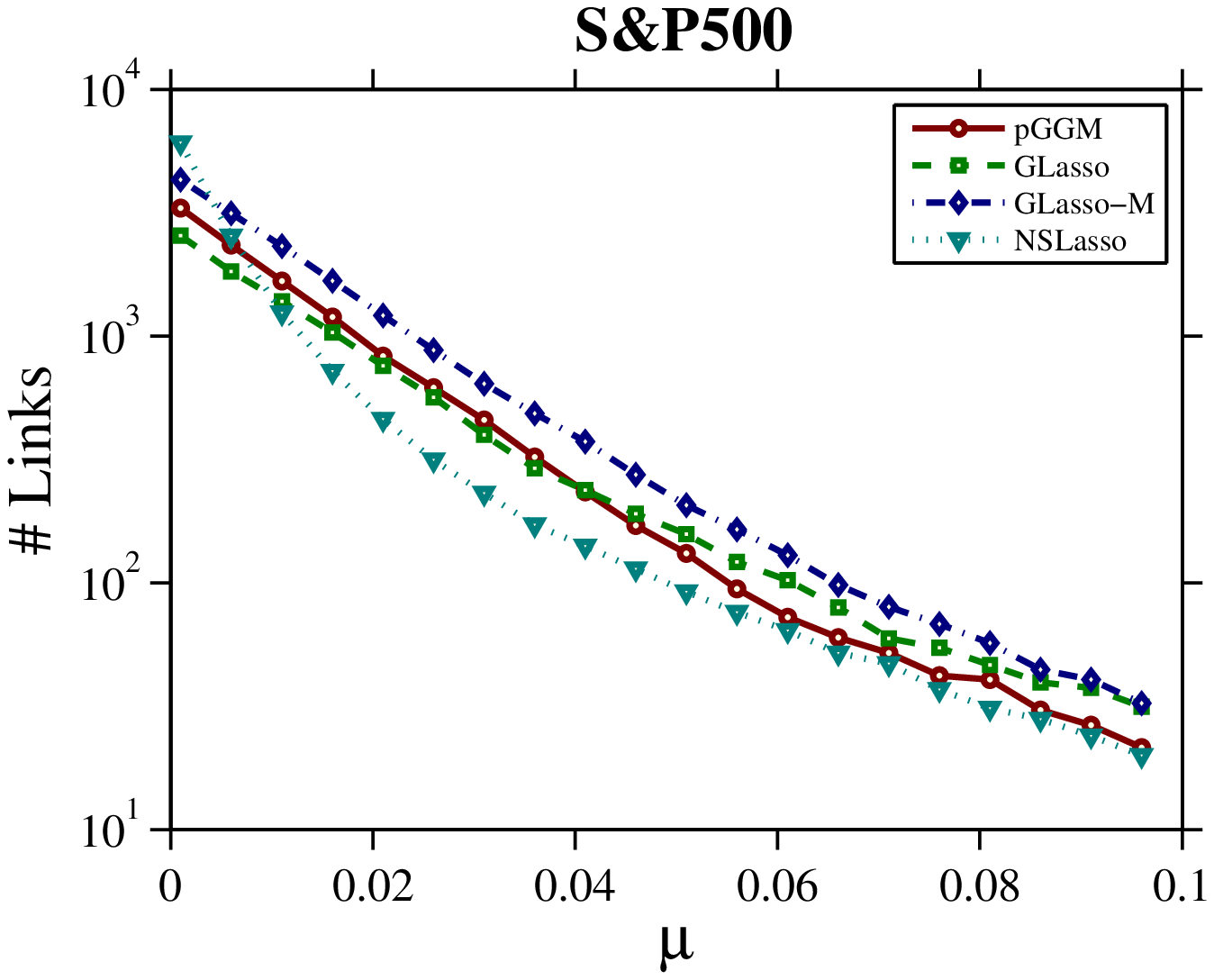}
\caption{Number of links as a function of $\mu$ in the constructed
conditional graphical model. \label{fig:num_edges}}
\end{figure}

Figure~\ref{fig:graph} plots the graphs constructed by using
different estimation methods with $\mu = 0.1$ for \textsf{Corel5k},
\textsf{MIRFlickr25k} and \textsf{RCV1-v2}, and $\mu = 0.05$ for
\textsf{S\&P500}. It can be seen that different methods will
construct different graphs. Figure~\ref{fig:graph_topk} illustrates
in detail the top 50 links in each graph.

\begin{figure}
\centering \subfigure[ \textsf{Corel5k}, $\mu=0.1$. Method(\#
Links): pGGM (677), NSLasso (293), GLasso (909), GLasso-M (1153).
]{\includegraphics[width=35mm]{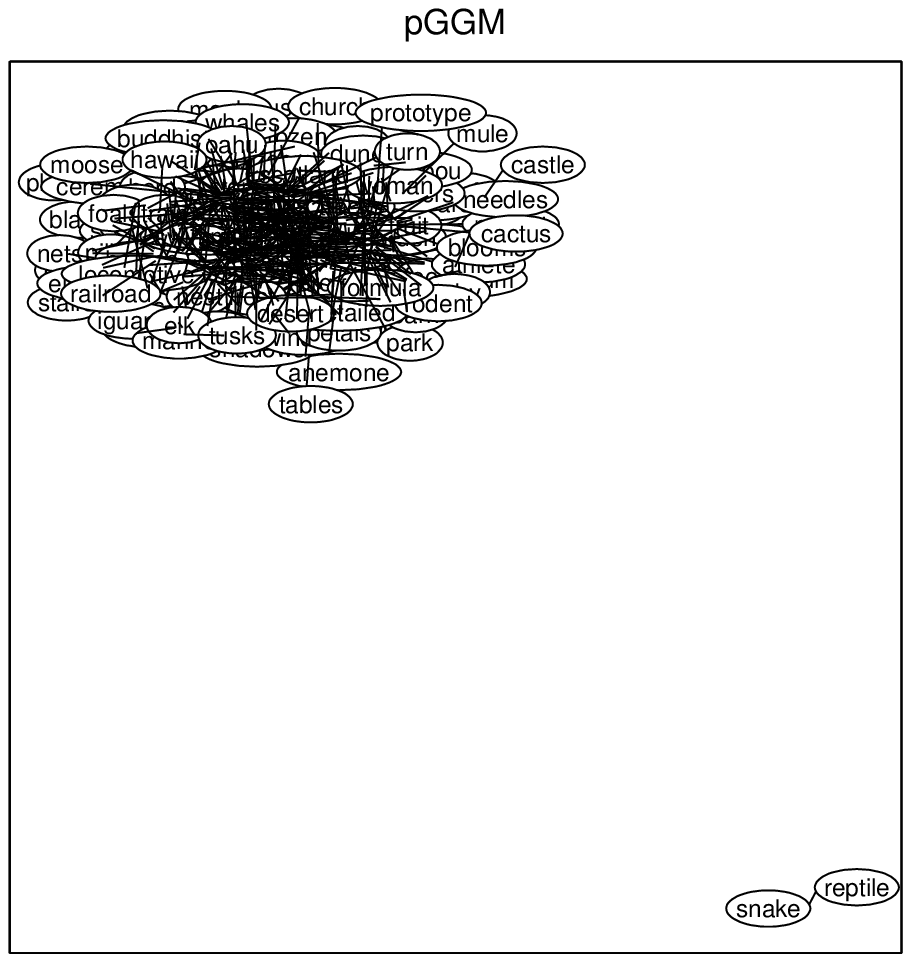}\label{fig:CPME_BCD_Graph_corel5k}
\includegraphics[width=35mm]{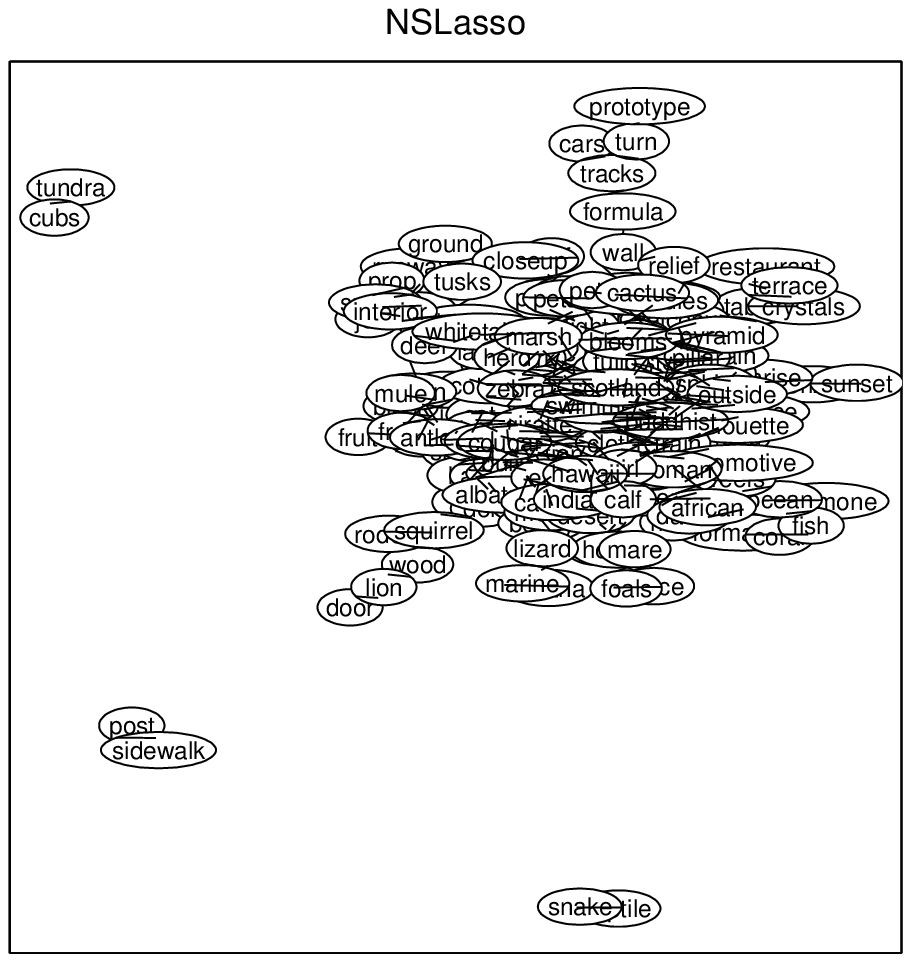}\label{fig:NSLasso_Graph_corel5k}
\includegraphics[width=35mm]{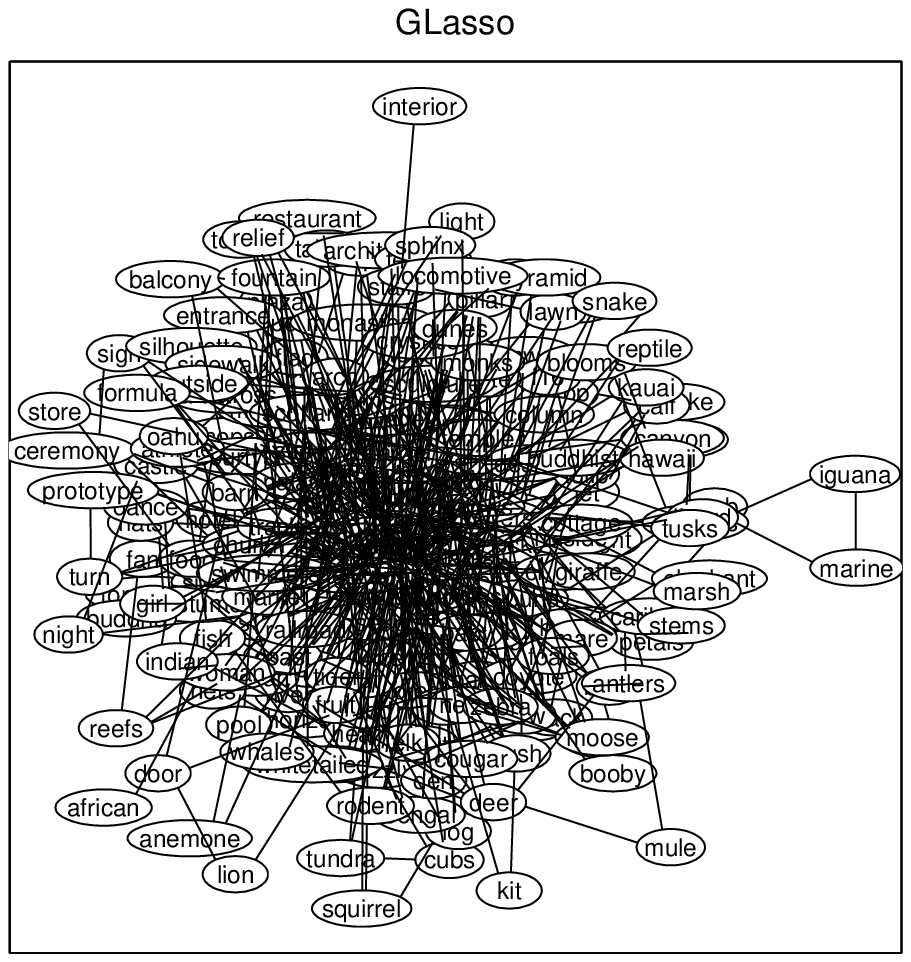}\label{fig:GLasso_Graph_corel5k}
\includegraphics[width=35mm]{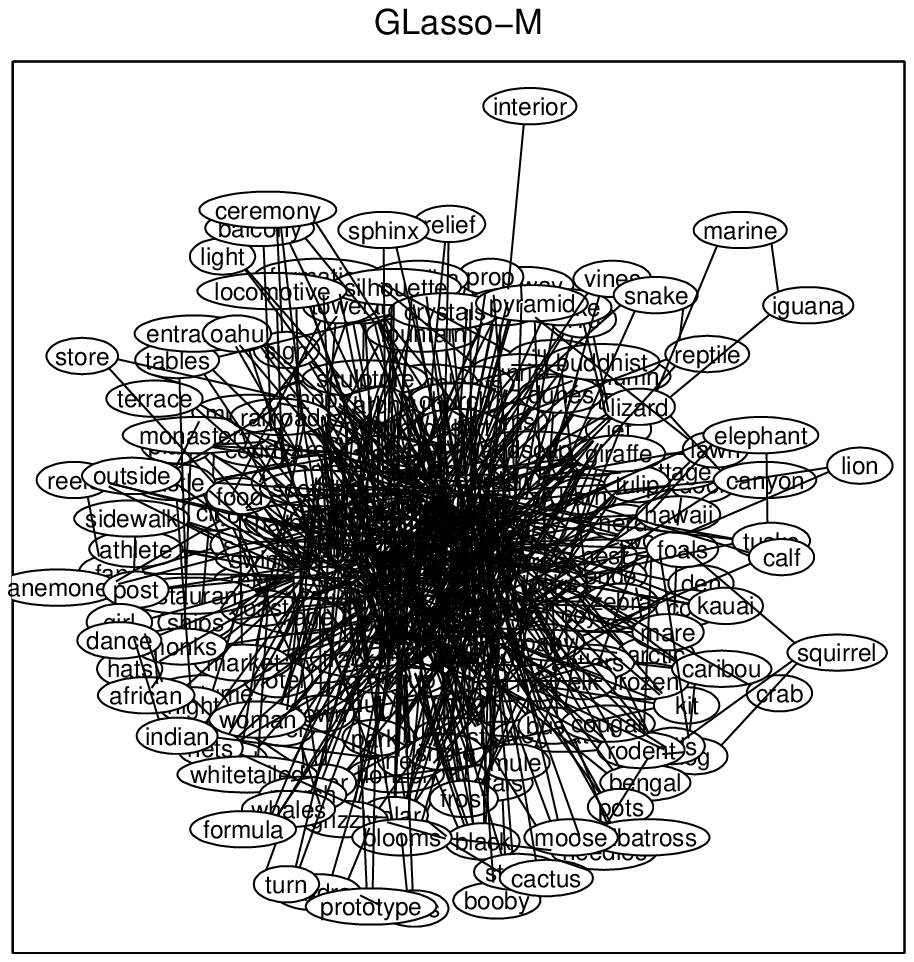}\label{fig:GLasso_OnlyY_Graph_corel5k}
\label{fig:graph_corel5k}} \subfigure[ \textsf{MIRFlicker25k},
$\mu=0.1$. Method(\# Links): pGGM (409), NSLasso (110), GLasso
(573), GLasso-M
(960).]{\includegraphics[width=35mm]{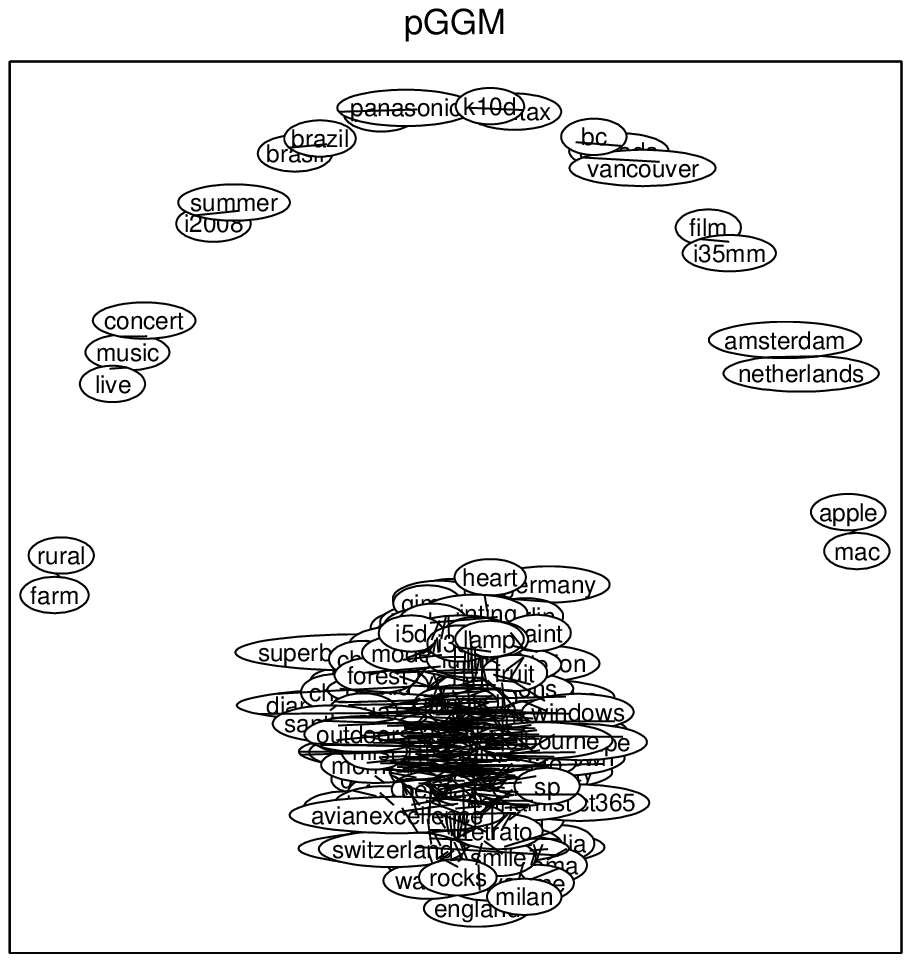}\label{fig:CPME_BCD_Graph_flickr}
\includegraphics[width=35mm]{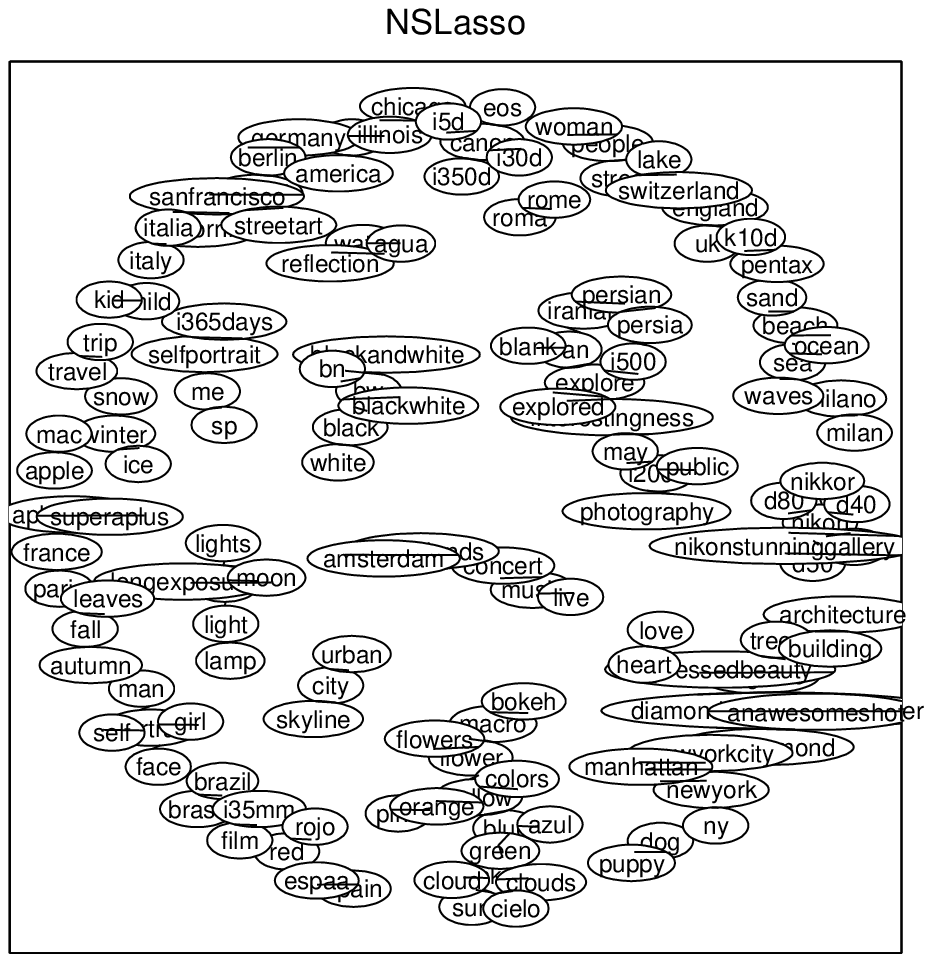}\label{fig:NSLasso_Graph_flickr}
\includegraphics[width=35mm]{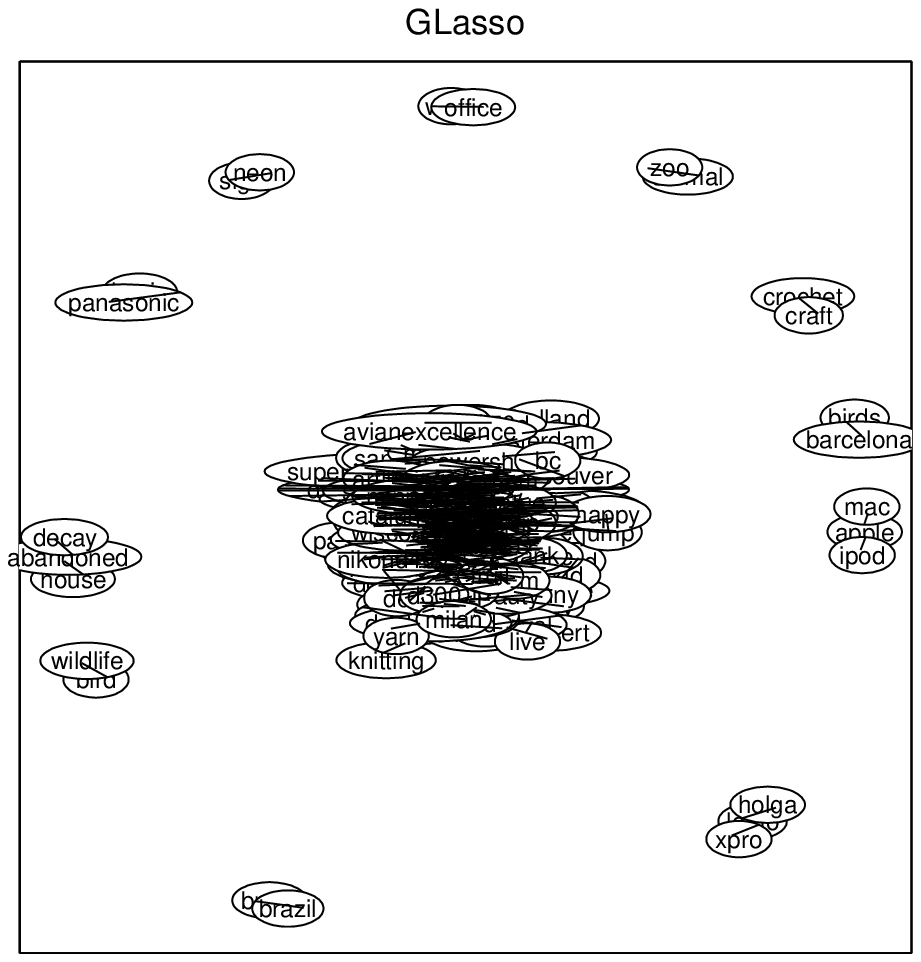}\label{fig:GLasso_Graph_flickr}
\includegraphics[width=35mm]{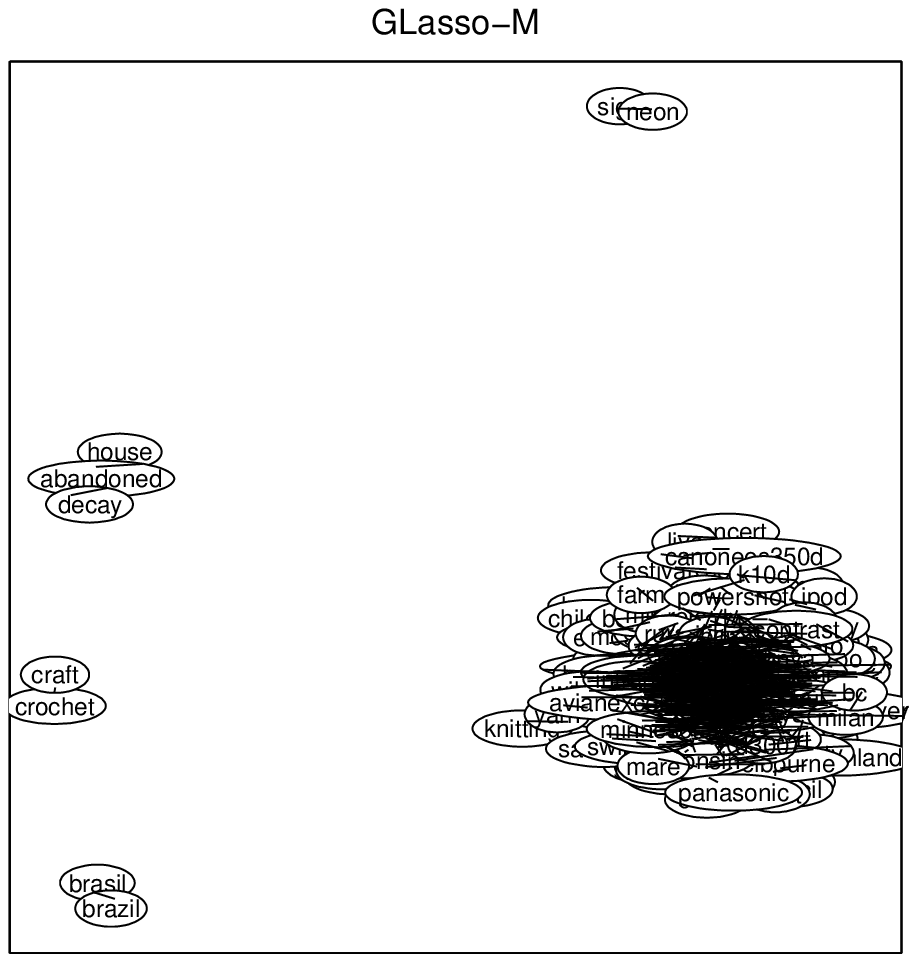}\label{fig:GLasso_OnlyY_Graph_flickr}
\label{fig:graph_flickr}} \subfigure[ \textsf{RCV1-v2}, $\mu=0.1$.
Method(\# Links): pGGM (87), NSLasso (156), GLasso (282), GLasso-M
(688).]{\includegraphics[width=35mm]{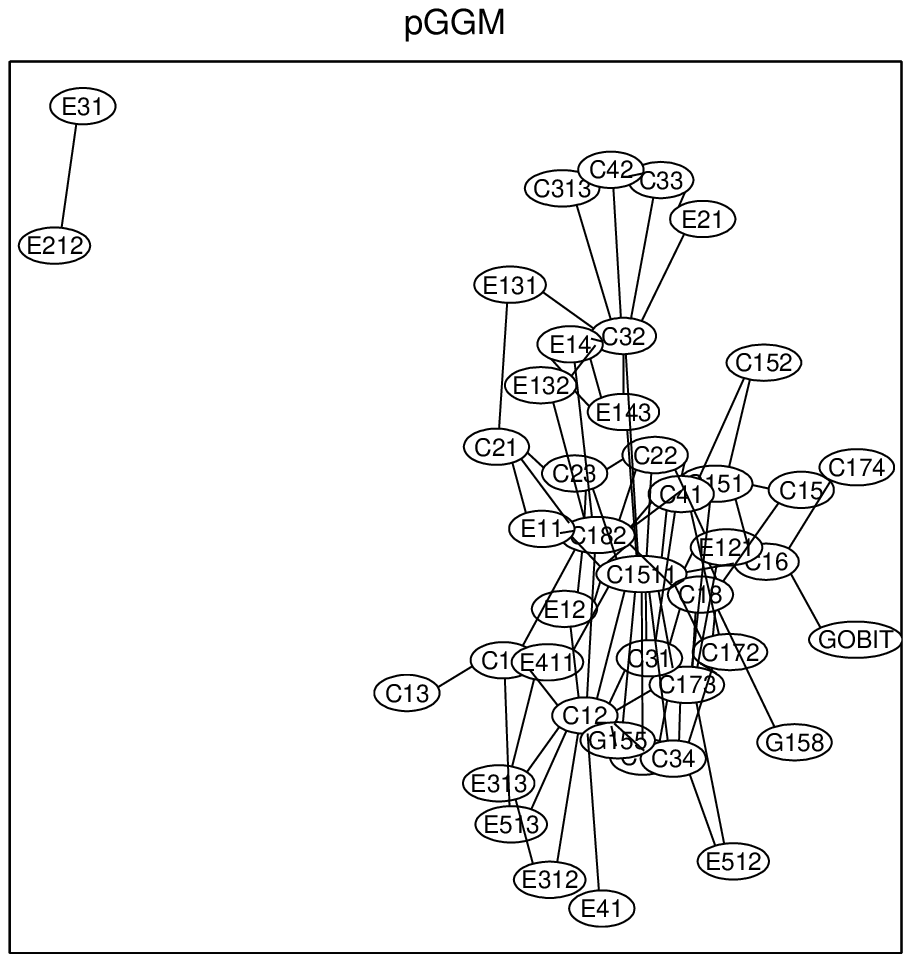}\label{fig:CPME_BCD_Graph_rcv1}
\includegraphics[width=35mm]{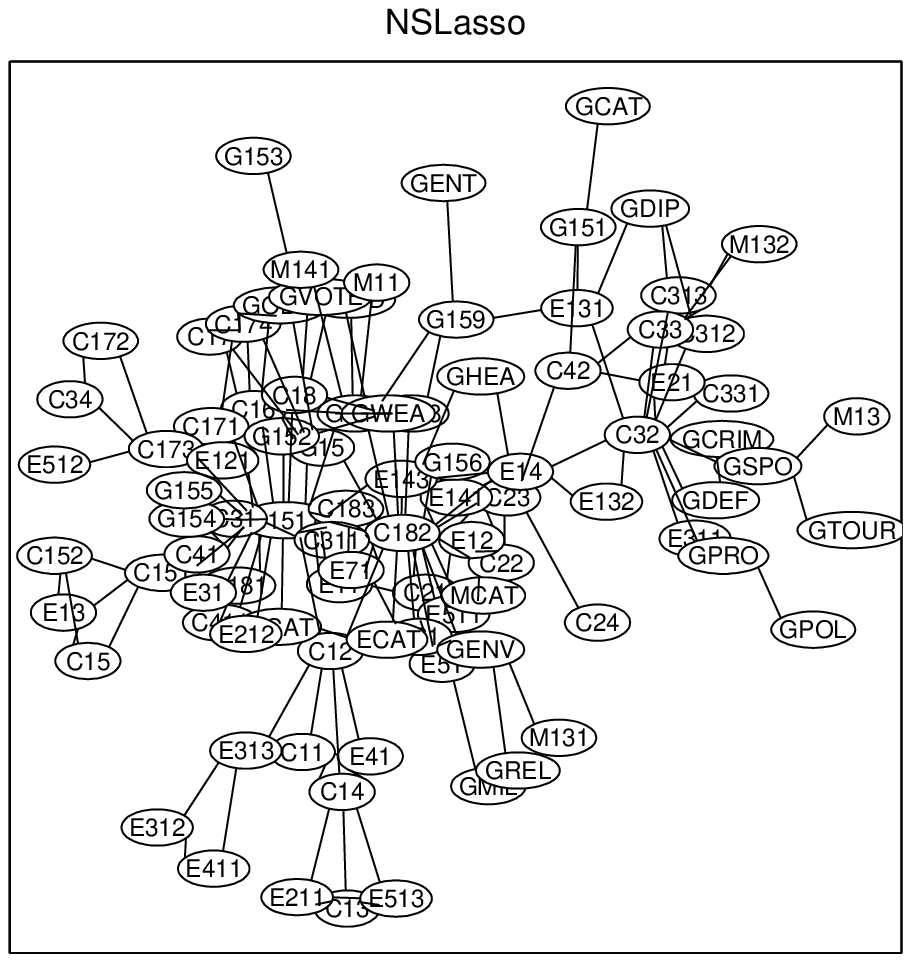}\label{fig:NSLasso_Graph_rcv1}
\includegraphics[width=35mm]{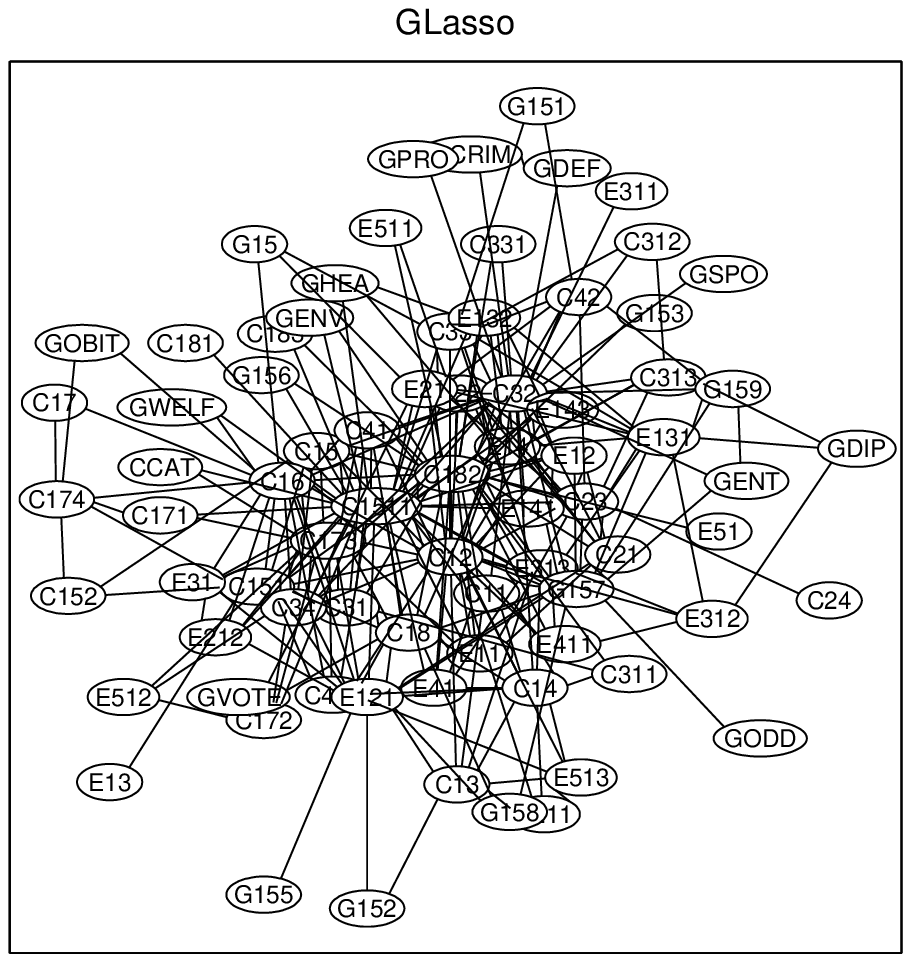}\label{fig:GLasso_Graph_rcv1}
\includegraphics[width=35mm]{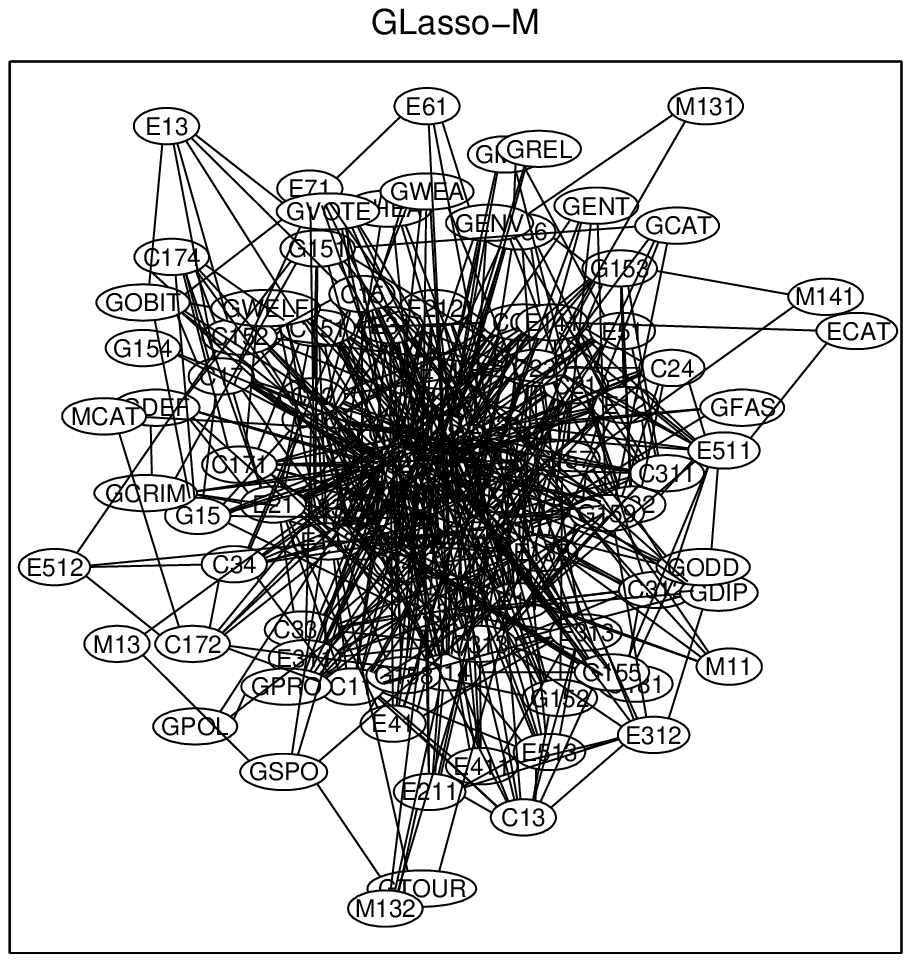}\label{fig:GLasso_OnlyY_Graph_rcv1}
\label{fig:graph_rcv1}} \subfigure[ \textsf{S\&P500}, $\mu=0.05$.
Method(\# Links): pGGM (136), NSLasso (94), GLasso (160), GLasso-M
(221).]{\includegraphics[width=35mm]{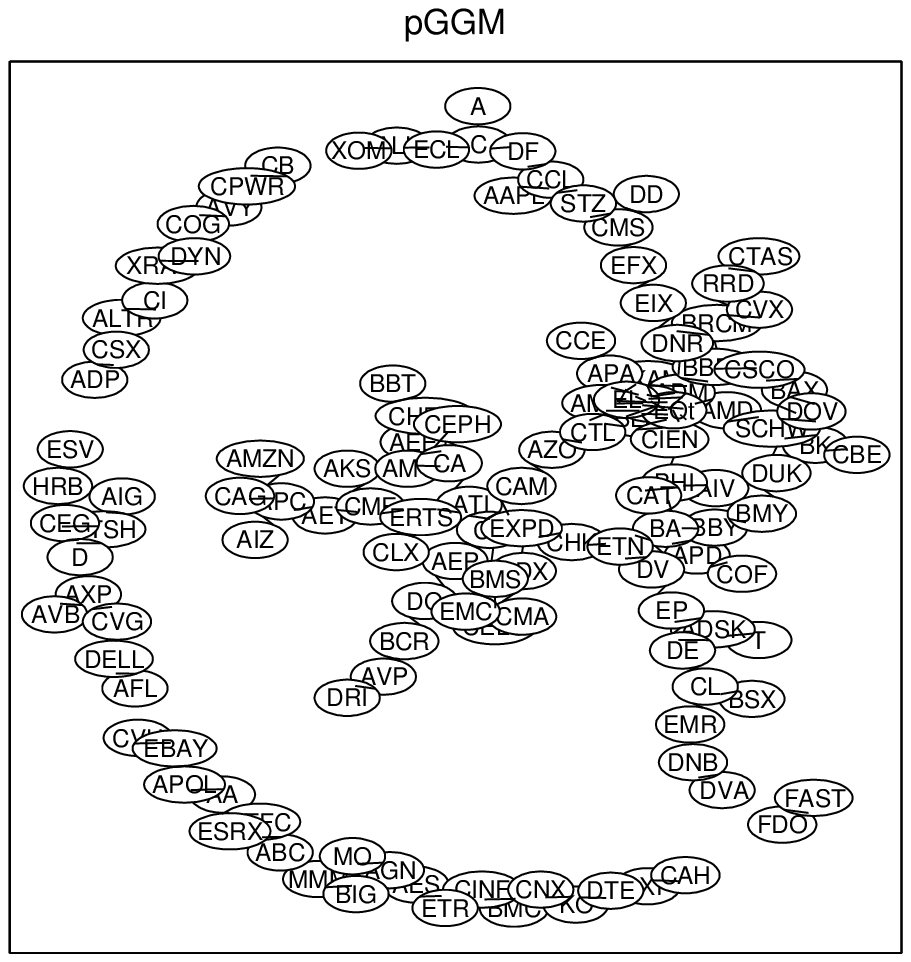}\label{fig:CPME_BCD_Graph_sp500}
\includegraphics[width=35mm]{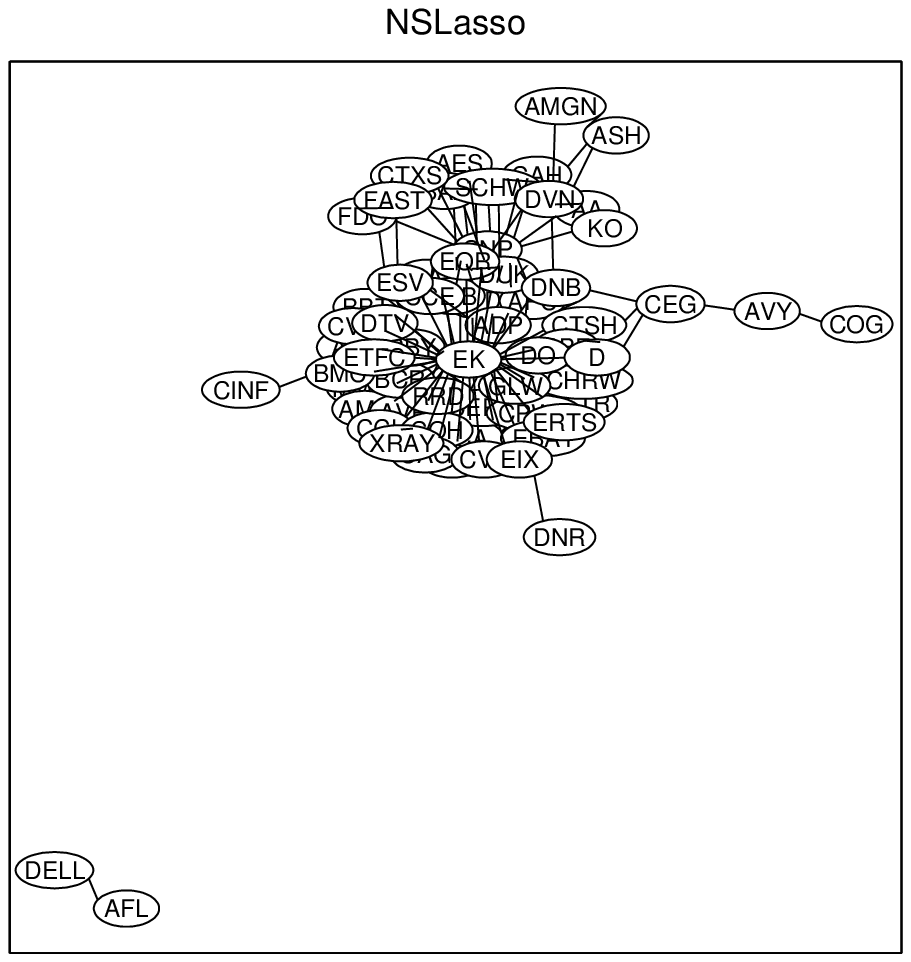}\label{fig:NSLasso_Graph_sp500}
\includegraphics[width=35mm]{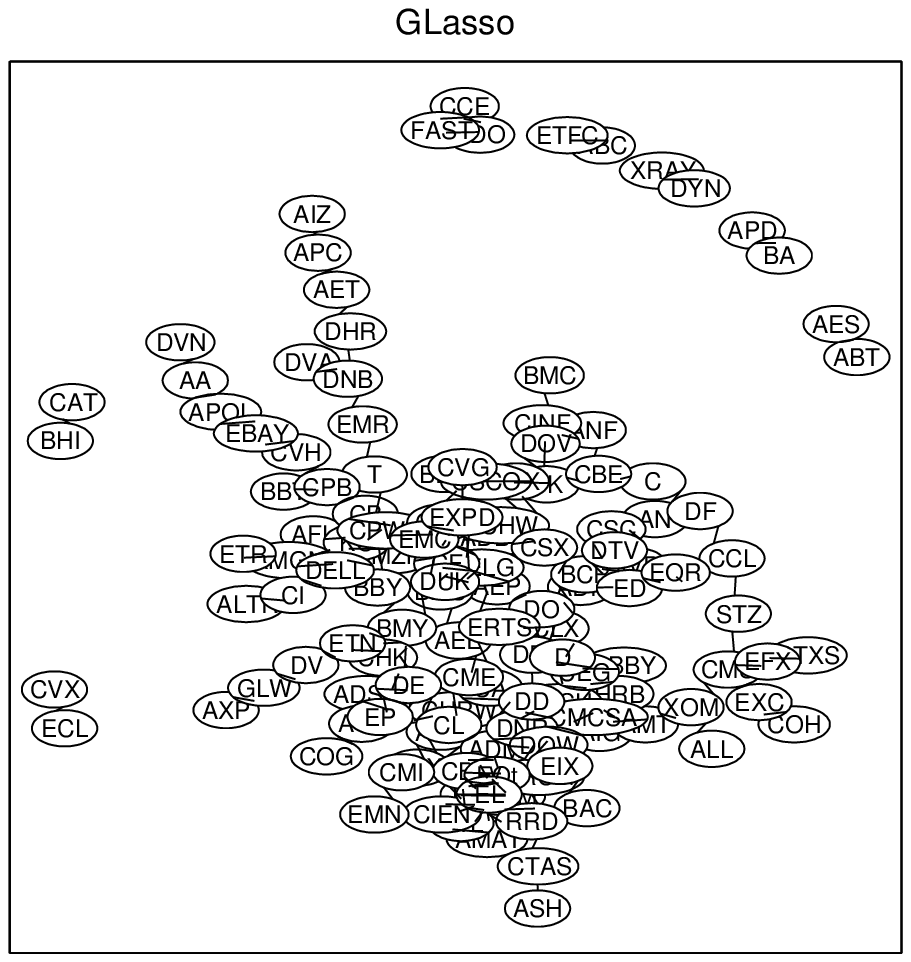}\label{fig:GLasso_Graph_sp500}
\includegraphics[width=35mm]{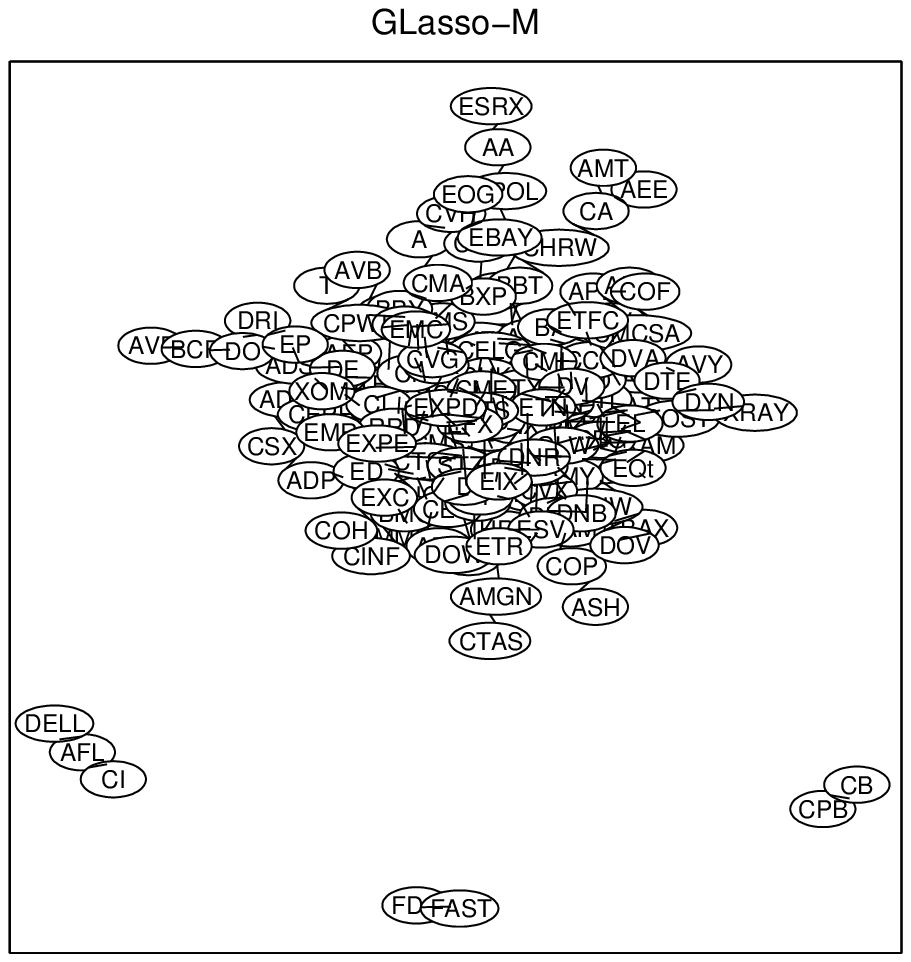}\label{fig:GLasso_OnlyY_Graph_sp500}
\label{fig:graph_sp500}} \caption{Constructed graphs by pGGM,
NSLasso, GLasso and GLasso-M. \label{fig:graph}}
\end{figure}

\begin{figure}
\centering \subfigure[
\textsf{Corel5k}.]{\includegraphics[width=35mm,height=35mm]{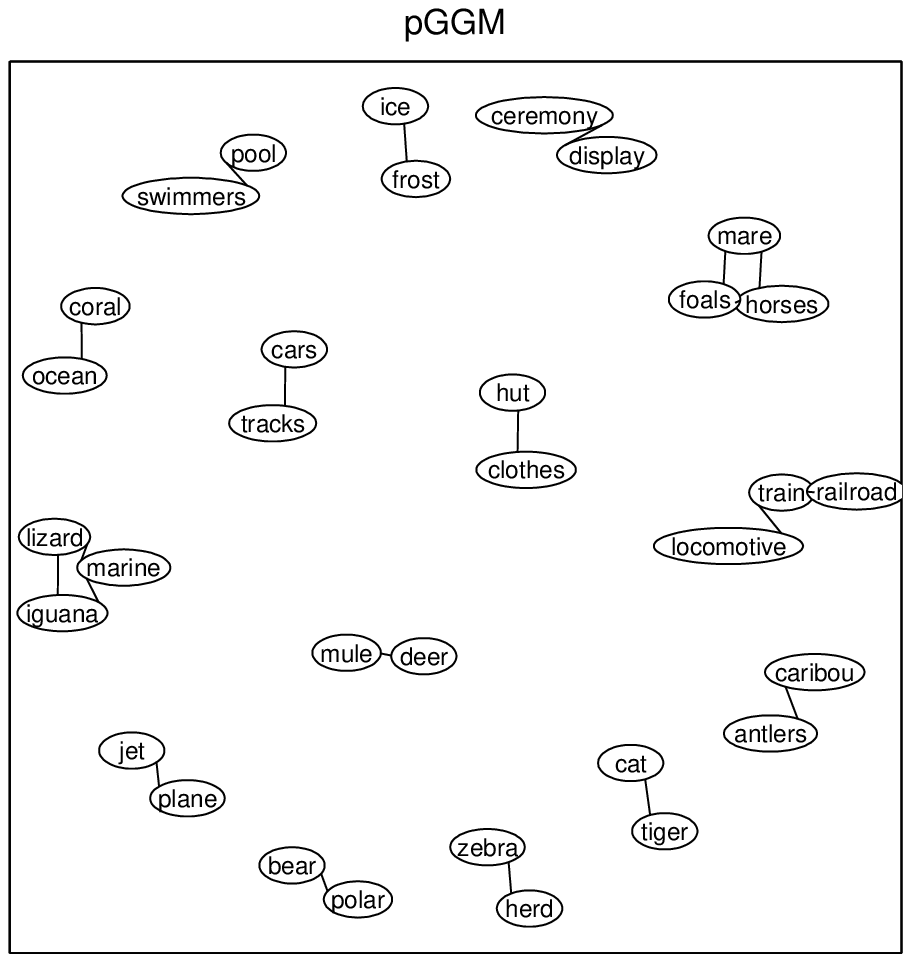}\label{fig:CPME_BCD_Graph_corel5k_topk}
\includegraphics[width=35mm,height=35mm]{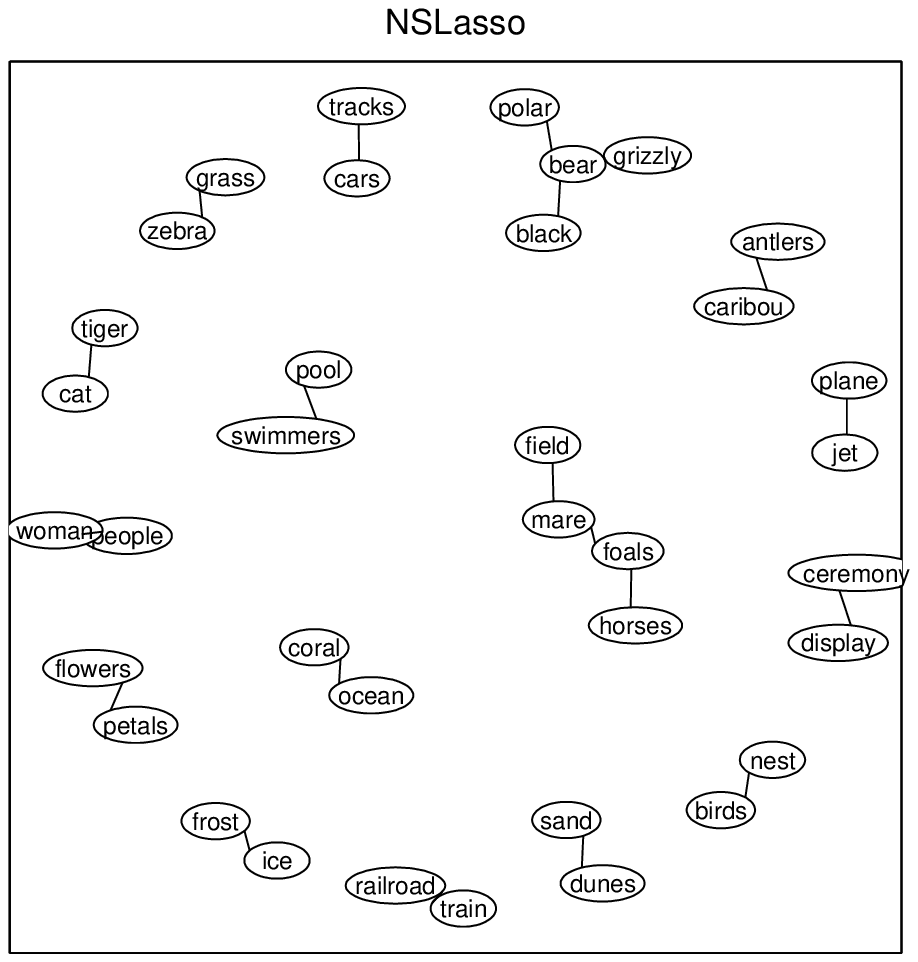}\label{fig:NSLasso_Graph_corel5k_topk}
\includegraphics[width=35mm,height=35mm]{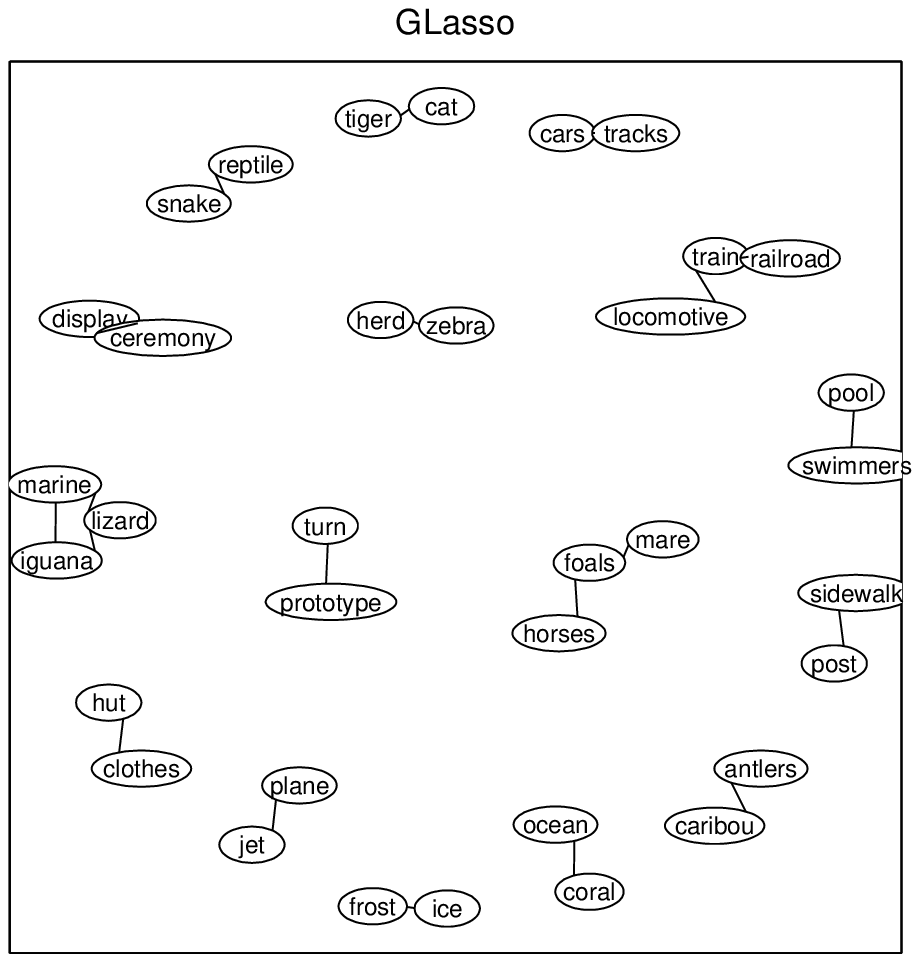}\label{fig:GLasso_Graph_corel5k_topk}
\includegraphics[width=35mm,height=35mm]{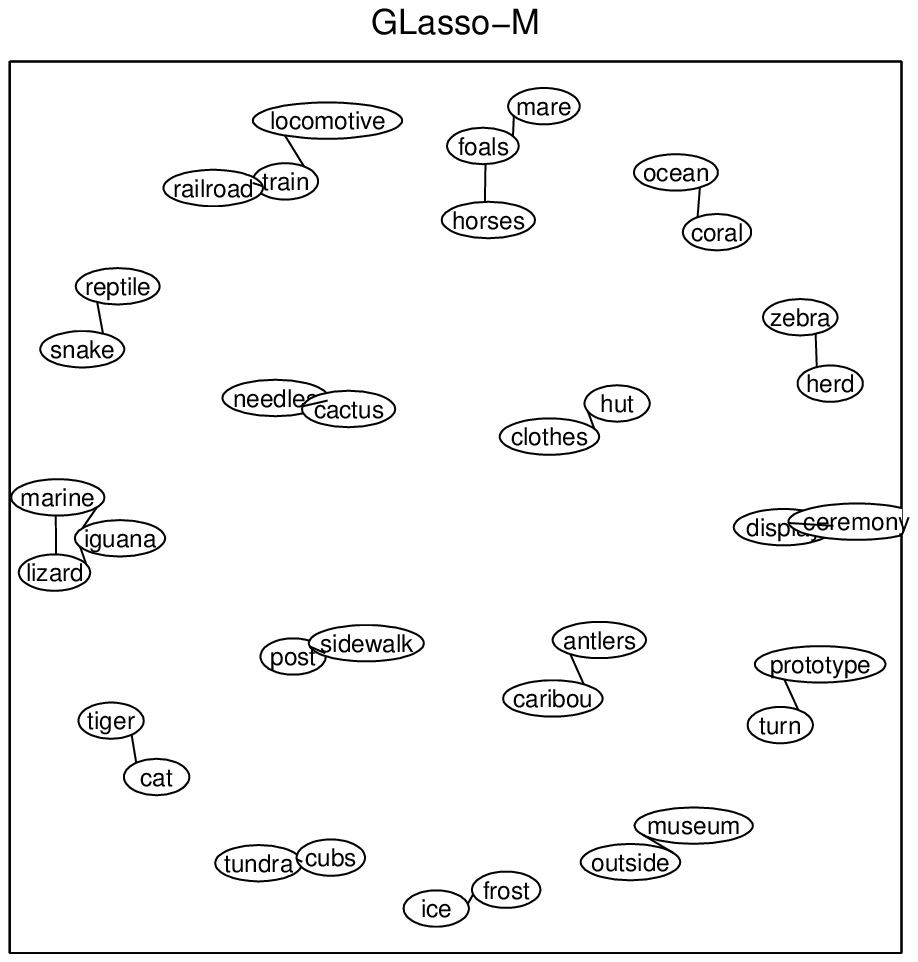}\label{fig:GLasso_OnlyY_Graph_corel5k_topk}
\label{fig:graph_corel5k_topk}} \subfigure[
\textsf{MIRFlicker25k}.]{\includegraphics[width=35mm,height=35mm]{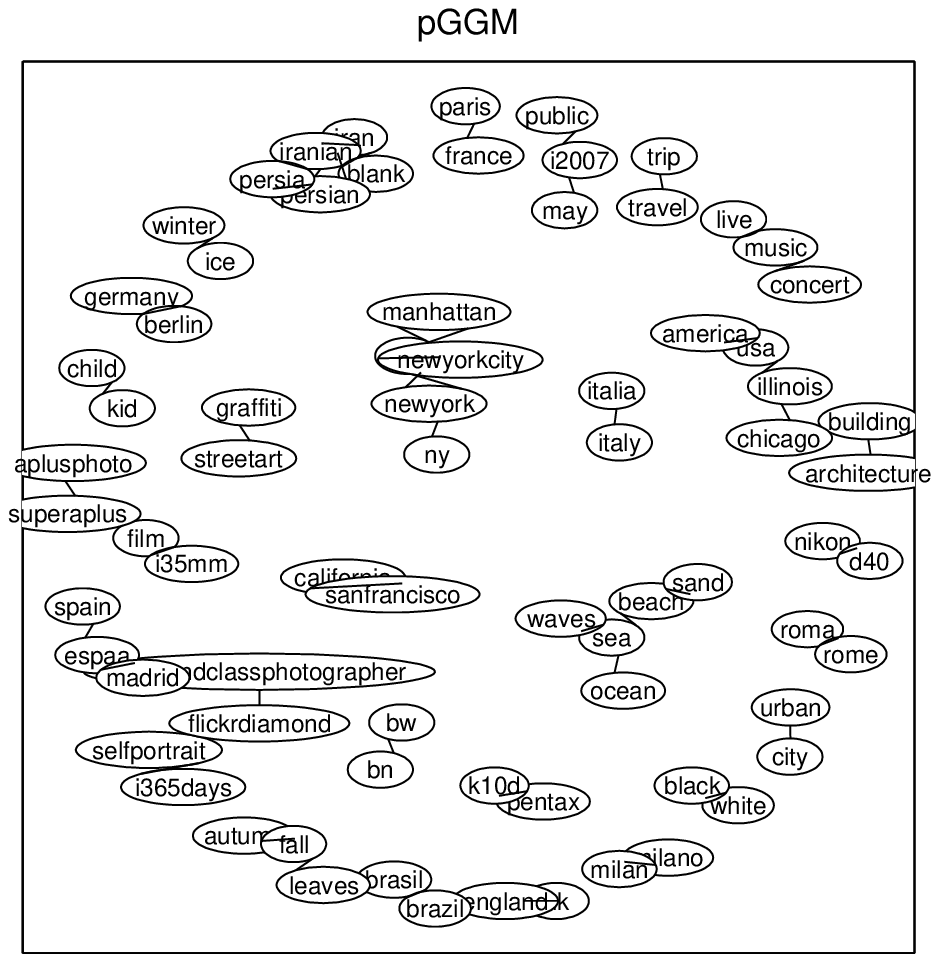}\label{fig:CPME_BCD_Graph_flickr_topk}
\includegraphics[width=35mm,height=35mm]{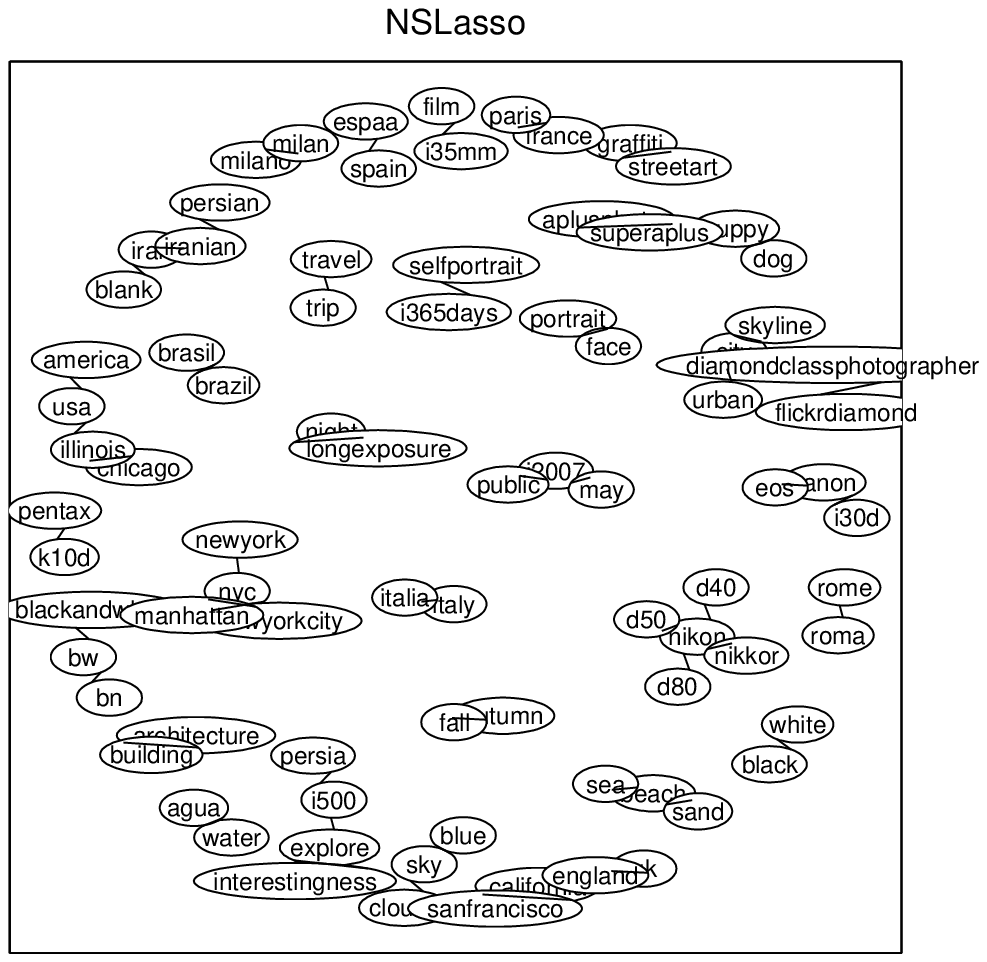}\label{fig:NSLasso_Graph_flickr_topk}
\includegraphics[width=35mm,height=35mm]{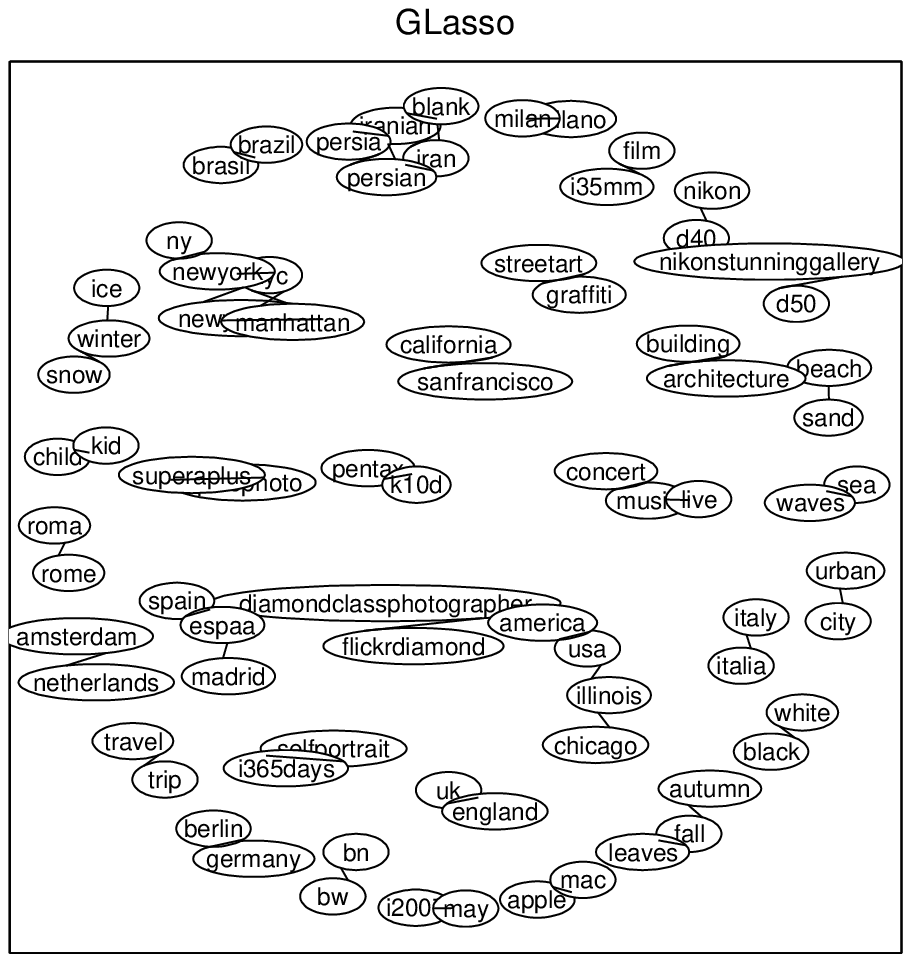}\label{fig:GLasso_Graph_flickr_topk}
\includegraphics[width=36mm,height=35mm]{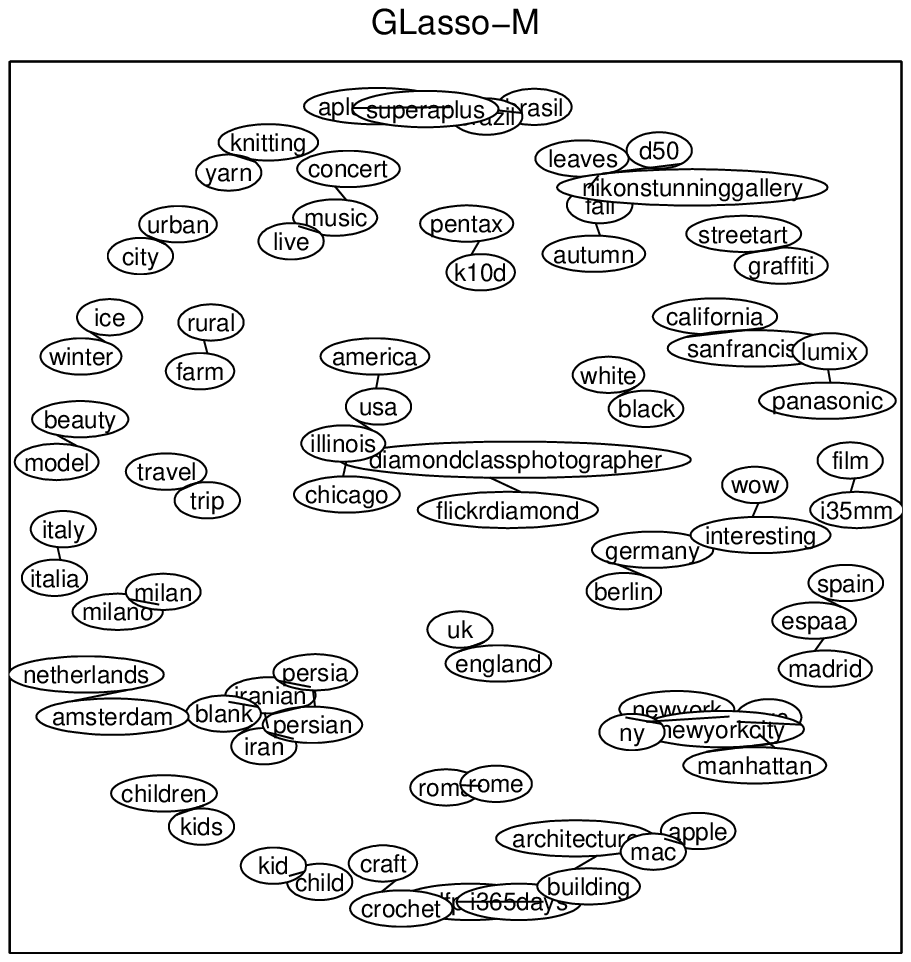}\label{fig:GLasso_OnlyY_Graph_flickr_topk}
\label{fig:graph_flickr_topk}} \subfigure[ \textsf{RCV1-v2}.]
{\includegraphics[width=35mm,height=35mm]{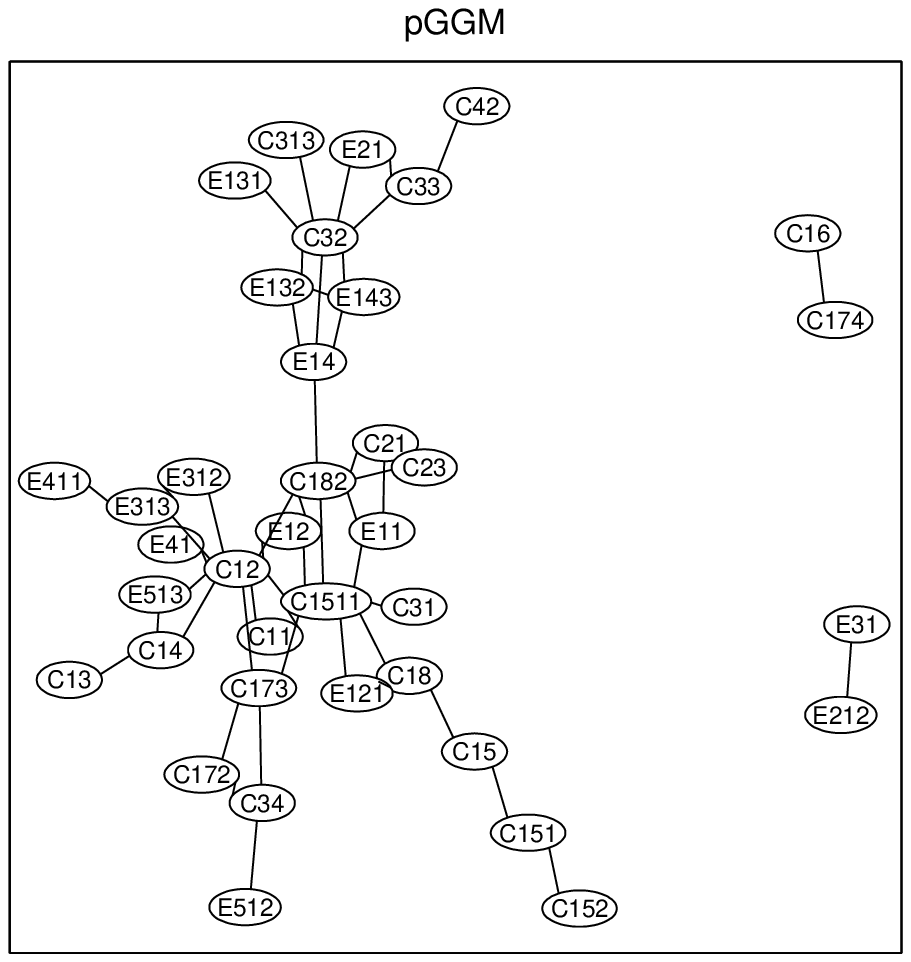}\label{fig:CPME_BCD_Graph_rcv1_topk}
\includegraphics[width=35mm,height=35mm]{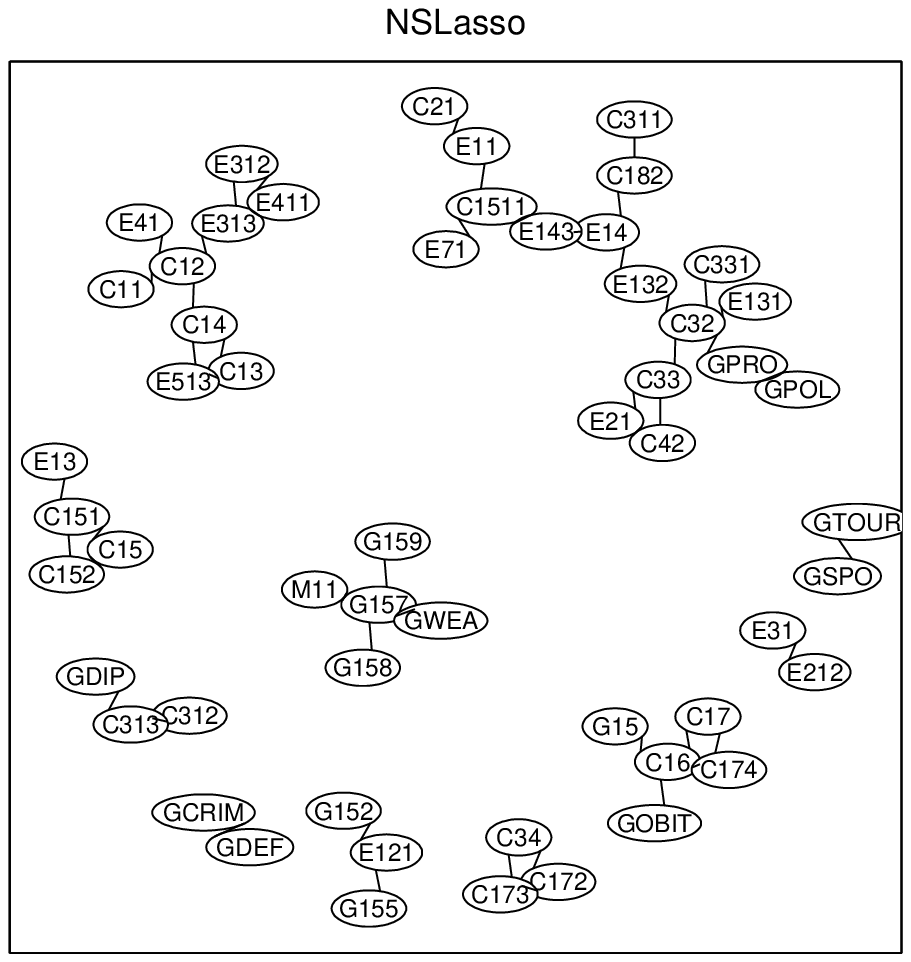}\label{fig:NSLasso_Graph_rcv1_topk}
\includegraphics[width=35mm,height=35mm]{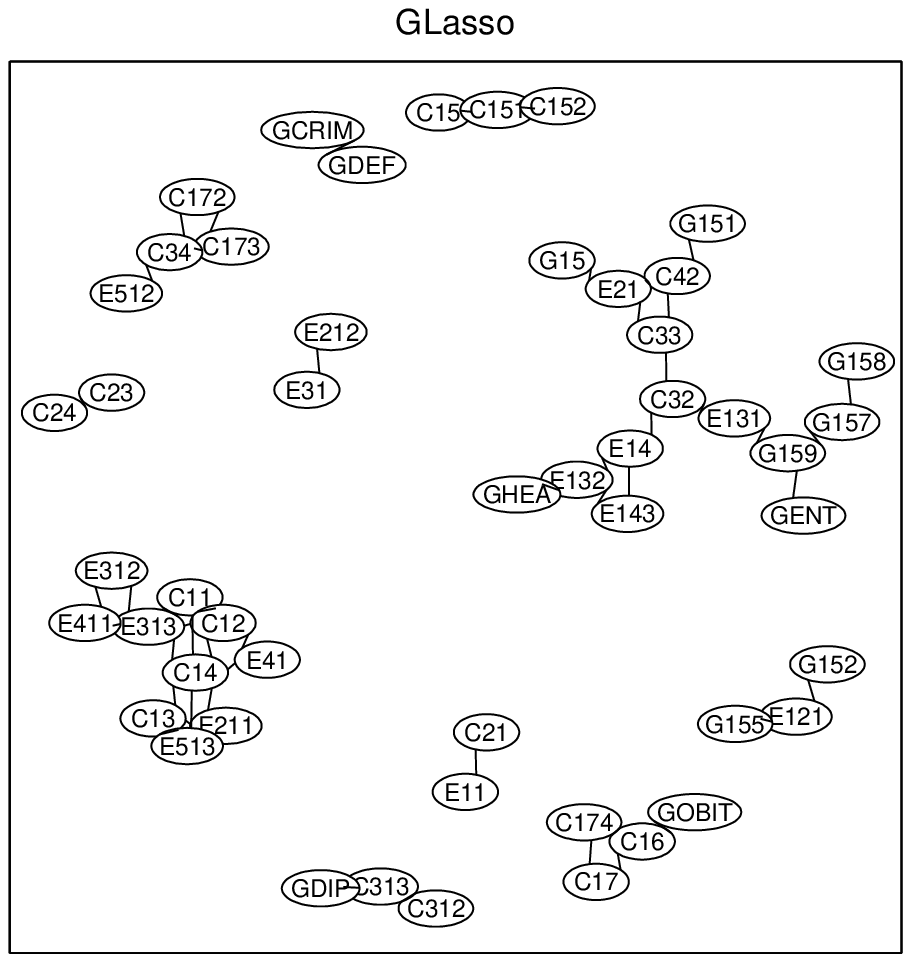}\label{fig:GLasso_Graph_rcv1_topk}
\includegraphics[width=35mm,height=35mm]{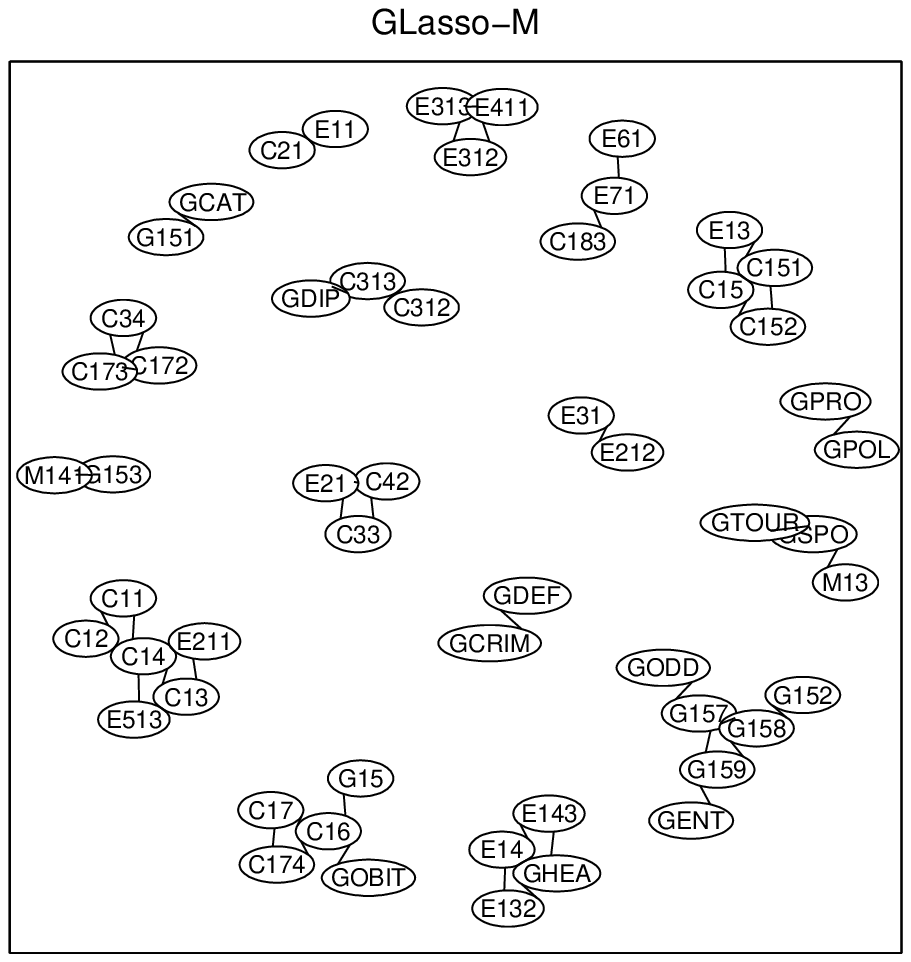}\label{fig:GLasso_OnlyY_Graph_rcv1_topk}
\label{fig:graph_rcv1_topk}} \subfigure[ \textsf{S\&P500}.]
{\includegraphics[width=35mm,height=35mm]{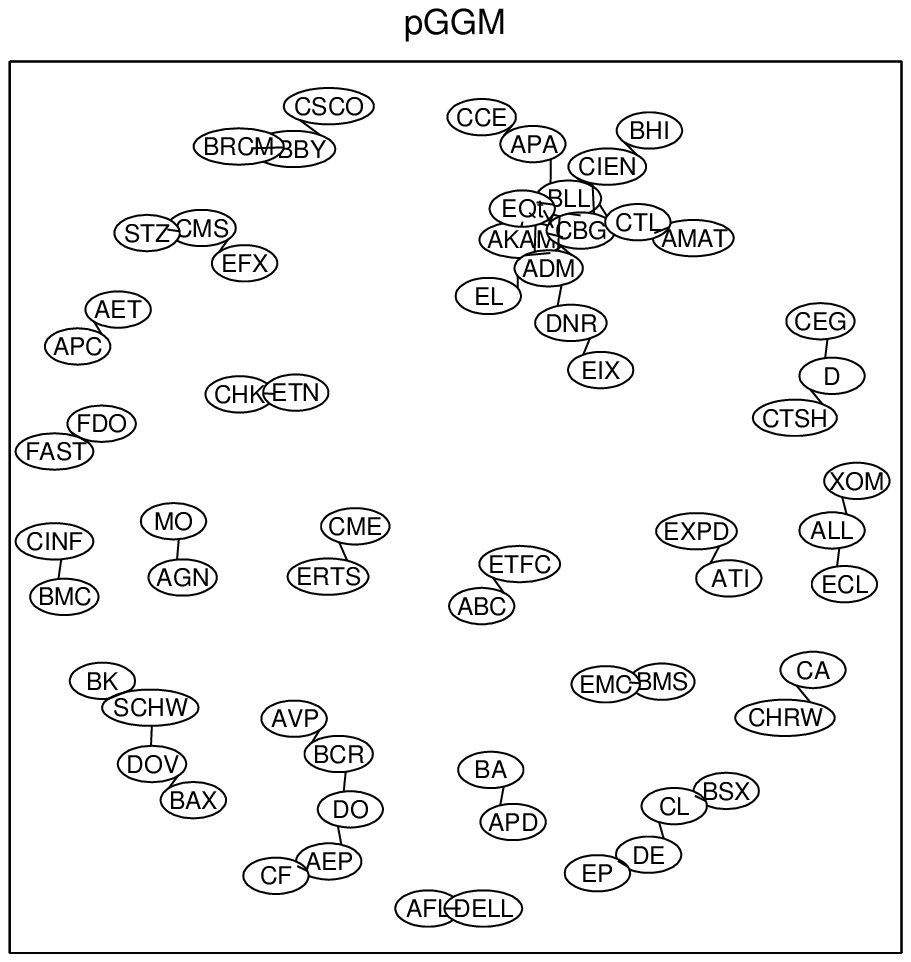}\label{fig:CPME_BCD_Graph_sp500_topk}
\includegraphics[width=35mm,height=35mm]{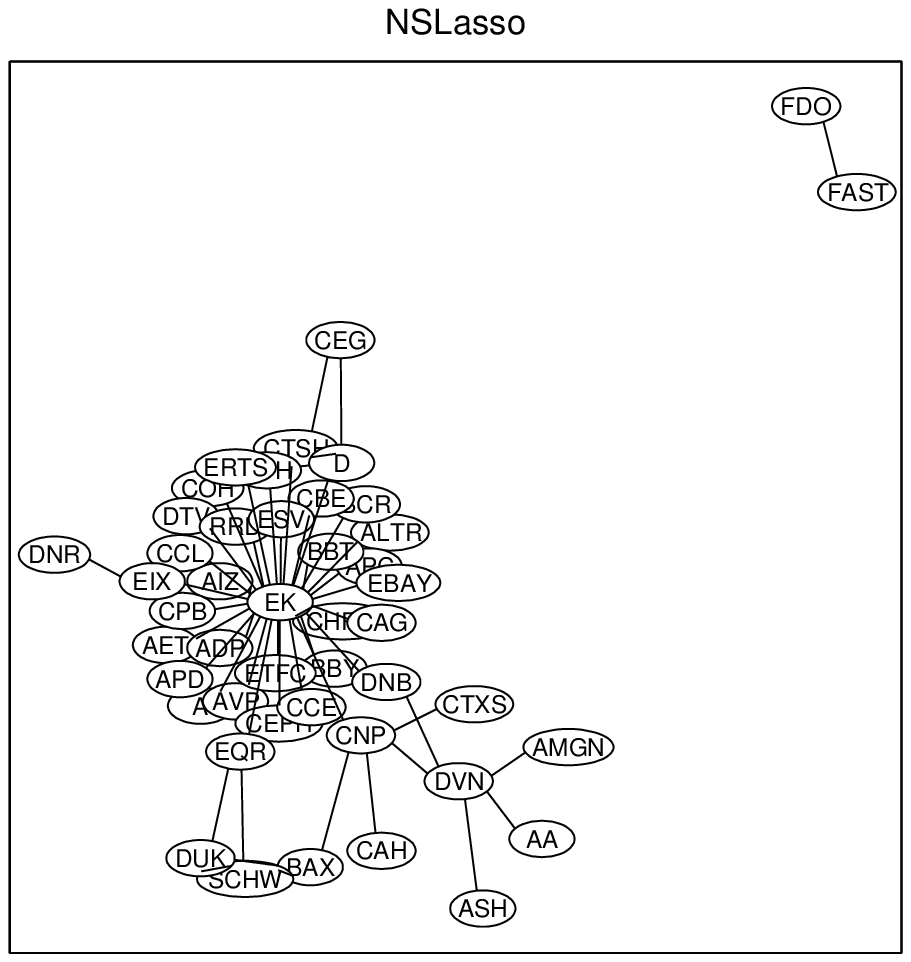}\label{fig:NSLasso_Graph_sp500_topk}
\includegraphics[width=35mm,height=35mm]{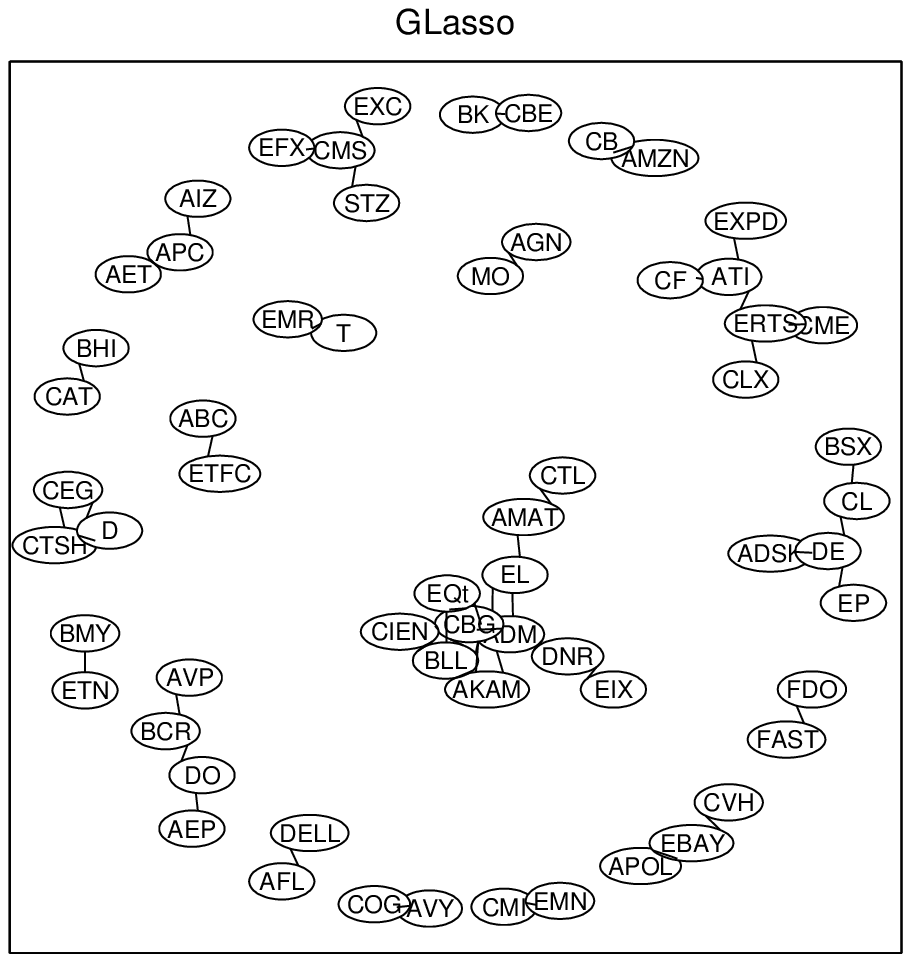}\label{fig:GLasso_Graph_sp500_topk}
\includegraphics[width=35mm,height=35mm]{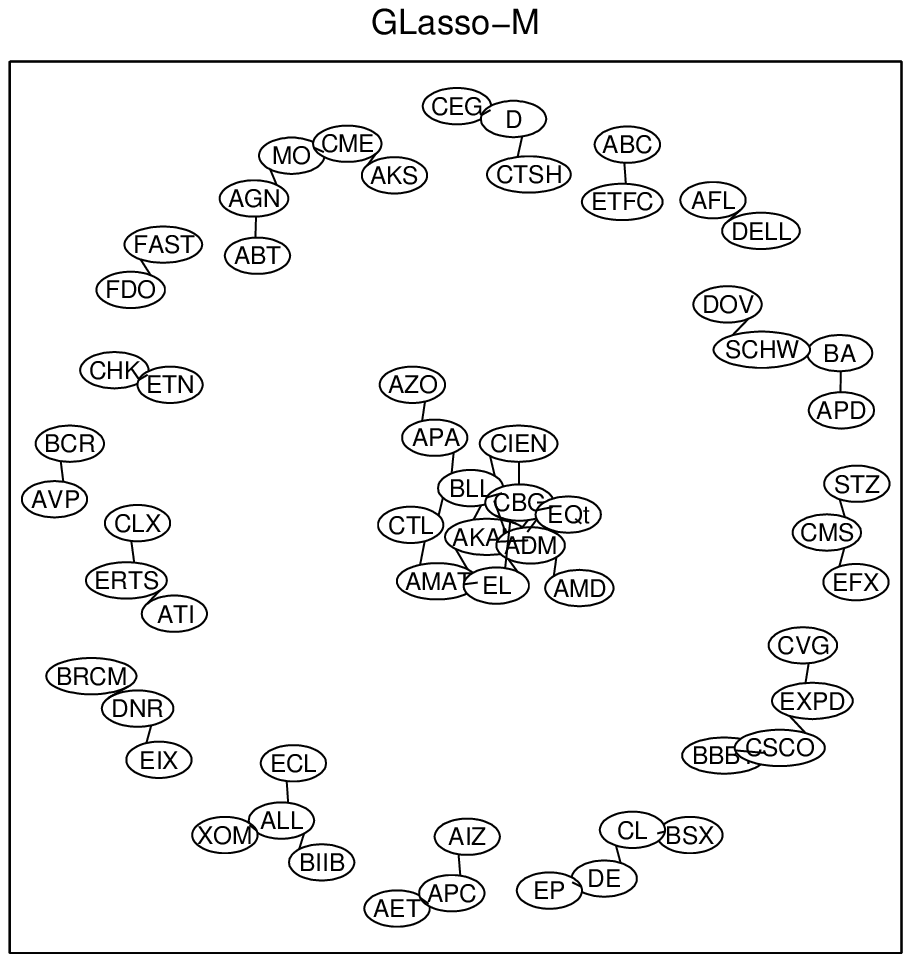}\label{fig:GLasso_OnlyY_Graph_sp500_topk}
\label{fig:graph_sp500_topk}} \caption{The top 50 links in the
constructed graphs by pGGM, NSLasso, GLasso and GLasso-M.
\label{fig:graph_topk}}
\end{figure}

\section{Conclusion}
\label{sect:conclusion}

This paper presents a new formulation pGGM for estimating sparse
partial precision matrix. The advantages of pGGM over prior GGMs and
conditional GGMs include: (i) the formulation is convex; (ii) the
optimization procedure scales well with respect to the component
$X$; (iii) the model has natural interpretation in terms of the
conditional dependency between the variables in $X$ and $Y$; and
(iv) theoretical guarantees on the global solution can be
established without sparsity assumptions on the precision matrix of
$X$. We showed that the rate of convergence of pGGM depends on how
sparse the underlying true partial precision matrix is. Numerical
experiments on several synthetic and real datasets demonstrated the
competitive performance of pGGM compared to the existing approaches.

In the current paper, the pGGM is derived under the assumption that
$(Y;X)$ is jointly normally distributed. As discussed in
Section~\ref{sec:Multivariate_Regression} that pGGM is still valid
in the setting where the joint normality is relaxed to the
conditional normality. We would like to point out that by assuming
the Gaussian copular structure of the random vector, pGGM can be
easily extended to the setting
of~\emph{nonparanormal}~\citep{Liu-Nonparanormal} which is a useful
tool for semiparametric estimation of high dimensional undirected
graphs. We believe that such an extension will broaden the
application range of pGGM in practice.

\bibliographystyle{icml2010}
\bibliography{pGGM}

\begin{thebibliography}{30}
\providecommand{\natexlab}[1]{#1}
\providecommand{\url}[1]{\texttt{#1}}
\expandafter\ifx\csname urlstyle\endcsname\relax
  \providecommand{\doi}[1]{doi: #1}\else
  \providecommand{\doi}{doi: \begingroup \urlstyle{rm}\Url}\fi

\bibitem[Banerjee et~al.(2008)Banerjee, Ghaoui, and
  d¡¯Aspremont]{Banerjee-2008}
Banerjee, O., Ghaoui, L.~El, and d¡¯Aspremont, A.
\newblock Model selection through sparse maximum likelihood estimation for
  multivariate gaussian or binary data.
\newblock \emph{Journal of Machine Learning Research}, 9:\penalty0 485--516,
  2008.

\bibitem[Baraniuk et~al.(2008)Baraniuk, Davenport, DeVore, and
  Wakin]{Baraniuk-SimpleRIP-2008}
Baraniuk, R.~G., Davenport, M., DeVore, R.~A., and Wakin, M.
\newblock A simple proof of the restricted isometry property for random
  matrices.
\newblock \emph{Constructive Approximation}, 2008.

\bibitem[Beck \& Teboulle(2009)Beck and Teboulle]{Beck-2009}
Beck, A. and Teboulle, Marc.
\newblock A fast iterative shrinkage-thresholding algorithm for linear inverse
  problems.
\newblock \emph{SIAM J. Imaging Sci.}, 2(1):\penalty0 183--202, 2009.

\bibitem[Boyed \& Vandenberghe(2004)Boyed and Vandenberghe]{Boyd-2004}
Boyed, S. and Vandenberghe, L.
\newblock \emph{Convex Optimization}.
\newblock Cambridge University Press, 2004.

\bibitem[Cai et~al.(2010)Cai, Li, Liu, and Xie]{Cai-CAPME}
Cai, T., Li, H., Liu, W., and Xie, J.
\newblock Covariate adjusted precision matrix estimation with an application in
  genetical genomics.
\newblock \emph{Biometrika}, 1:\penalty0 1--19, 2010.

\bibitem[Cai et~al.(2011)Cai, liu, and Luo]{Cai-CLIME-2011}
Cai, T., liu, W., and Luo, X.
\newblock A constrained $\ell_1$ minimization approach to sparse precision
  matrix estimation.
\newblock \emph{Journal of the American Statistical Association},
  106(494):\penalty0 594--607, 2011.

\bibitem[Cand\`es et~al.(2011)Cand\`es, Eldarb, Needella, and
  Randallc]{Candes-D-RIP-2011}
Cand\`es, E.~J., Eldarb, Y.~C., Needella, D., and Randallc, P.
\newblock Compressedsensing with coherent and redundantdictionarie.
\newblock \emph{Applied and Computational Harmonic Analysis}, 2011.

\bibitem[Cand\`es \& Tao(2007)Cand\`es and Tao]{CandTao:2005:dantzig}
Cand\`es, Emmanuel and Tao, Terence.
\newblock The {D}antzig selector: statistical estimation when $p$ is much
  larger than $n$.
\newblock \emph{Annals of Statistics}, 2007.

\bibitem[Chandrasekaran et~al.(2010)Chandrasekaran, Parrilo, and
  Willsky]{Venkat-LatentGGM}
Chandrasekaran, V., Parrilo, P., and Willsky, A.
\newblock Latent variable graphical model selection via convex optimization.
\newblock 2010.
\newblock URL \url{http://arXiv:1008.1290v1}.

\bibitem[d'Aspremont et~al.(2008)d'Aspremont, Banerjee, and
  Ghaoui]{Aspremont-2008}
d'Aspremont, A., Banerjee, O., and Ghaoui, L.~E.
\newblock First-order methods for sparse covariance selection.
\newblock \emph{SIAM Journal on Matrix Analysis and its Applications},
  30(1):\penalty0 56--66, 2008.

\bibitem[Dempster(1972)]{Dempster-1972}
Dempster, A.
\newblock Covariance selection.
\newblock \emph{Biometrics}, 28:\penalty0 157--175, 1972.

\bibitem[Duygulu et~al.(2002)Duygulu, Barnard, deFreitas, and
  Forsyth]{Corel5k-2002}
Duygulu, P., Barnard, K., deFreitas, N., and Forsyth, D.
\newblock Object recognition as machine translation: Learning a lexicon for a
  fixed image vocabulary.
\newblock In \emph{ECCV}, 2002.

\bibitem[Fan et~al.(2009)Fan, Feng, and Wu]{Fan-SCAD-2009}
Fan, J., Feng, Y., and Wu, Y.
\newblock Network exploration via the adaptive lasso and scad penalties.
\newblock \emph{The Annals of Applied Statistics}, 3(2):\penalty0 521--541,
  2009.

\bibitem[Friedman et~al.(2008)Friedman, Hastie, and
  Tibshirani]{Friedman-Glasso-2008}
Friedman, J., Hastie, T., and Tibshirani, R.
\newblock Sparse inverse covariance estimation with the graphical lasso.
\newblock \emph{Biostatistics}, 9(3):\penalty0 432--441, 2008.

\bibitem[Jansen \& Nap(2001)Jansen and Nap]{Jansen-eQTL}
Jansen, R. and Nap, J.
\newblock Genetic genomics: the added value from segregation.
\newblock \emph{Trends in Genetics}, 17(7):\penalty0 388--392, 2001.

\bibitem[Johnson et~al.(2012)Johnson, Jalali, and
  Ravikumar]{Johnson-AISTAT-2012}
Johnson, C., Jalali, A., and Ravikumar, P.
\newblock High-dimensional sparse inverse covariance estimation using greedy
  methods.
\newblock In \emph{AISTAT}, 2012.

\bibitem[Lafferty et~al.(2001)Lafferty, McCallum, and
  Pereira]{Lafferty-CRF-2001}
Lafferty, J., McCallum, A., and Pereira, F.
\newblock Conditional random fields: Probabilistic models for segmenting and
  labeling sequence data.
\newblock In \emph{ICML}, pp.\  282--289, 2001.

\bibitem[Laurent \& Massart(2000)Laurent and Massart]{LauMas00}
Laurent, B. and Massart, P.
\newblock Adaptive estimation of a quadratic functional by model selection.
\newblock \emph{The Annals of Statistics}, 28\penalty0 (5):\penalty0
  1302--1338, 2000.

\bibitem[Lewis et~al.(2004)Lewis, Yang, Rose, and Li]{Lewis-2004}
Lewis, D.D., Yang, Y., Rose, T.G., and Li, F.
\newblock Rcv1: A new benchmark collection for text categorization research.
\newblock \emph{Journal of Machine Learning Research}, 5:\penalty0 361--397,
  2004.

\bibitem[Liu et~al.(2009)Liu, Lafferty, and Wasserman]{Liu-Nonparanormal}
Liu, H., Lafferty, J., and Wasserman, L.
\newblock The nonparanormal: Semiparametric estimation of high dimensional
  undirected graphs.
\newblock \emph{Journal of Machine Learning Research}, 10:\penalty0 2295--2328,
  2009.

\bibitem[Lu(2009)]{Lu-VSM-2009}
Lu, Z.
\newblock Smooth optimization approach for sparse covariance selection.
\newblock \emph{SIAM Journal on Optimization}, 19(4):\penalty0 1807--1827,
  2009.

\bibitem[Meinshausen \& B{\"u}hlmann(2006)Meinshausen and
  B{\"u}hlmann]{Meinshausen-NSLasso-2006}
Meinshausen, N. and B{\"u}hlmann, P.
\newblock High-dimensional graphs and variable selection with the lasso.
\newblock \emph{Annals of Statistics}, 34(3):\penalty0 1436--1462, 2006.

\bibitem[Nesterov(2005)]{Nesterov-2005}
Nesterov, Yu.
\newblock Smooth minimization of non-smooth functions.
\newblock \emph{Mathematical Programming}, 103(1):\penalty0 127--152, 2005.

\bibitem[Rauhut et~al.(2008)Rauhut, Schnass, and Vandergheynst]{Rauhut-2008}
Rauhut, H., Schnass, K., and Vandergheynst, P.
\newblock Compressed sensing and redundant dictionaries.
\newblock \emph{IEEE Transactions on Inform. Theory}, 2008.

\bibitem[Ravikumar et~al.(2011)Ravikumar, Wainwright, Raskutti, and
  Yu]{Ravikumar-EJS-2011}
Ravikumar, P., Wainwright, M.~J., Raskutti, G., and Yu, B.
\newblock High-dimensional covariance estimation by minimizing
  $\ell_1$-penalized log-determinant divergence.
\newblock \emph{Electronic Journal of Statistics}, 5:\penalty0 935--980, 2011.

\bibitem[Rothman et~al.(2008)Rothman, Bickel, Levina, and Zhu]{Rothman-2008}
Rothman, A.~J., Bickel, P.~J., Levina, E., and Zhu, J.
\newblock Sparse permutation invariant covariance estimation.
\newblock \emph{Electronic Journal of Statistics}, 2:\penalty0 494--515, 2008.

\bibitem[St{\"a}dler et~al.(2010)St{\"a}dler, B{\"u}hlmann, and
  Geer]{Stadler-2010-mixture}
St{\"a}dler, N., B{\"u}hlmann, P., and Geer, S. Van~De.
\newblock $\ell_1$-penalization for mixture regression models.
\newblock \emph{TEST}, 19(2):\penalty0 209--256, 2010.

\bibitem[Yin \& Li(2011)Yin and Li]{Li-cGGM}
Yin, J. and Li, H.
\newblock A sparse conditional gaussian graphical model for analysis of general
  genomics data.
\newblock \emph{The Annals of Applied Statistics}, 5:\penalty0 2630--2650,
  2011.

\bibitem[Yuan(2010)]{Yuan-JMLR-2010}
Yuan, M.
\newblock High dimensional inverse covariance matrix estimation via linear
  programming.
\newblock \emph{Journal of Machine Learning Research}, 11:\penalty0 2261--2286,
  2010.

\bibitem[Yuan \& Lin(2007)Yuan and Lin]{Yuan-Lin-2007}
Yuan, M. and Lin, Y.
\newblock Model selection and estimation in the gaussian graphical model.
\newblock \emph{Biometrika}, 94(1):\penalty0 19--35, 2007.

\end{thebibliography}

\appendix

\section{Technical Proofs}

\subsection{Proof of Proposition~\ref{prop:decomp}}
\label{append:proof_lemma1}
\begin{proof}
Using the following well known fact of block matrix determinant
\[
\det  \left(\left[ { {\begin{array}{*{20}{c}}
   {A}   & {B^\top}   \\
   {B} & {C}   \\
\end{array}}
} \right]\right) = \det(A) \det(C - B A^{-1} B^\top)
\]
and simple algebra, we obtain that
\begin{equation}\label{equat:L_decompose_lemma1}
L(\Omega_{yy}, \Omega_{yx}, \Omega_{xx}) =
\Lpa(\Omega_{yy},\Omega_{yx}) -\log\det(\Omega_{xx}-\Omega_{yx}^\top
\Omega_{yy}^{-1}\Omega_{yx}) + \Tr(\Sigma^n_{xx}  (\Omega_{xx} -
\Omega_{yx}^\top \Omega_{yy}^{-1}\Omega_{yx})),
\end{equation}
where
\[
\Lpa(\Omega_{yy},\Omega_{yx}) =-\log\det(\Omega_{yy}) + \Tr
(\Sigma^n_{yy}\Omega_{yy}) + 2\Tr (\Sigma^{n\top}_{yx}\Omega_{yx}) +
\Tr(\Sigma^n_{xx}\Omega_{yx}^\top\Omega_{yy}^{-1}\Omega_{yx}).
\]
The claim \eqref{equat:L_decompose} follows immediately from the
re-parametrization of $ \tilde\Omega_{xx}= \Omega_{xx} -
\Omega_{yx}^\top \Omega_{yy}^{-1}\Omega_{yx}$.

We next show that $\Lpa(\Omega_{yy}, \Omega_{yx})$ is convex. Note
that when $\Sigma_{xx}^n \succ 0$, by minimizing both sides of
\eqref{equat:L_decompose_lemma1} over $\Omega_{xx}$, which is
achieved at $\Omega_{xx}= (\Sigma_{xx}^n)^{-1} + \Omega_{yx}^\top \Omega_{yy}^{-1}\Omega_{yx}$,
we know that up
to an additive constant, $\Lpa$ is the pointwise minimum of $L$ over
$\Omega_{xx}$. Since the
pointwise minimization of a convex objective function with a part of
the parameters is convex with respect to the other parameters~\citep[see, e.g.,][]{Boyd-2004},
we immediately obtain the convexity of $\Lpa$. In the high-dimensional
case where $n < q$, we only have $\Sigma^n_{xx}\succeq 0$ and thus
the minimization over $\Omega_{xx}$ is not well-defined. To show the
convexity in general case, we may replace $\Sigma_{xx}$ by
$\Sigma_{xx} + \lambda I$ for some $\lambda >0$, and the resulting
partial GMM formula:
\[
\Lpa^\lambda(\Omega_{yy},\Omega_{yx})=-\log\det(\Omega_{yy}) + \Tr
(\Sigma^n_{yy}\Omega_{yy}) + 2\Tr (\Sigma^{n\top}_{yx}\Omega_{yx}) +
\Tr((\Sigma^n_{xx}+ \lambda I)\Omega_{yx}^\top\Omega_{yy}^{-1}\Omega_{yx})
\]
is convex in $(\Omega_{yy},\Omega_{yx})$ by the previous argument. Now, let $\lambda
\to 0^+$, we have
$\Lpa^\lambda(\Omega_{yy},\Omega_{yx}) \to
\Lpa(\Omega_{yy},\Omega_{yx})$, which immediately implies the
convexity of $\Lpa(\cdot,\cdot)$.
\end{proof}

\subsection{Proof of Proposition~\ref{prop:RSC}}

\begin{lemma} \label{lem:shift}
Assume the conditions of the proposition hold.
Then for any matrix $ V=(V_{yy},V_{yx}) \in \bR^{p \times p} \times \bR^{p \times q}$ such that $|V_{\bar{S}}|_1 \leq \alpha |V_{S}|_1$,
we have
\[
\Tr(V \tilde{\Sigma} V^\top) \geq \frac{\rho_-}{5}  \| V\|_F^2 , \qquad
\text{ where } \quad
\tilde{\Sigma} = \left[ \begin{array}{cc} \Omega_{yy}^{-1} & 0 \\ 0 & \Sigma^n_{xx} \end{array} \right] .
\]
Moreover, we have
\[
\Tr(V_{yx} \Sigma^n_{xx} V_{yx}^\top) \leq 2.25 \rho_+ \|V\|_F^2 .
\]
\end{lemma}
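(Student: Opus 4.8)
The plan is to reduce both inequalities to a restricted-eigenvalue statement for $\Sigma^n_{xx}$ and then invoke the sparse RIP of Assumption~\ref{assump:rip}. First I would use that $\tilde\Sigma$ is block diagonal to split
\[
\Tr(V \tilde{\Sigma} V^\top) = \Tr(V_{yy}\,\Omega_{yy}^{-1}\,V_{yy}^\top) + \Tr(V_{yx}\,\Sigma^n_{xx}\,V_{yx}^\top) .
\]
The first term is immediate: $\Tr(V_{yy}\,\Omega_{yy}^{-1}\,V_{yy}^\top) \ge \lambda_{\min}(\Omega_{yy}^{-1})\|V_{yy}\|_F^2 = \|V_{yy}\|_F^2/\lambda_{\max}(\Omega^*_{yy}) \ge 2\rho_-\|V_{yy}\|_F^2$ by the definition of $\rho_-$. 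Hence both claims come down to sandwiching $\Tr(V_{yx}\Sigma^n_{xx}V_{yx}^\top)$. Since $q$ may exceed $n$, $\Sigma^n_{xx}$ is only positive semidefinite and the RIP controls $u^\top\Sigma^n_{xx}u$ only for $\tilde s$-sparse $u$; the cone constraint $|V_{\bar S}|_1\le\alpha|V_S|_1$ is exactly what will let me pass from sparse vectors to the (dense) rows of $V_{yx}$.

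Writing $\Tr(V_{yx}\Sigma^n_{xx}V_{yx}^\top)=\sum_{l=1}^p w_l^\top \Sigma^n_{xx} w_l$ with $w_l:=(V_{yx})_{l\cdot}^\top\in\bR^q$, I would apply a shelling decomposition to each row. With $k:=\lfloor(\tilde s-|S|)/2\rfloor$, let the head $w_l^{(H)}$ consist of the on-support entries of $w_l$ together with its $k$ largest off-support entries, and partition the remaining off-support coordinates into blocks $w_l^{(B_1)},w_l^{(B_2)},\dots$ of size $k$ in decreasing order of magnitude. Every $w_l^{(H)}$ and every $w_l^{(H)}+w_l^{(B_j)}$ is supported on at most $\tilde s$ coordinates, so Assumption~\ref{assump:rip} gives $0.5\,u^\top\Sigma^*_{xx}u \le u^\top\Sigma^n_{xx}u \le 1.5\,u^\top\Sigma^*_{xx}u$ on these subspaces; writing $\|u\|_*:=(u^\top\Sigma^*_{xx}u)^{1/2}$ and using the triangle inequality of the seminorm $\|\cdot\|_{\Sigma^n_{xx}}$ reduces everything to sparse pieces without needing any bilinear/polarization estimate. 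The standard shelling bound yields $\sum_{j\ge1}\|w_l^{(B_j)}\|_2\le\|(w_l)_{\bar S}\|_1/\sqrt k$ for the tail.

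For the lower bound I would use $\|w_l\|_{\Sigma^n_{xx}}\ge\sqrt{0.5}\,\|w_l^{(H)}\|_* - \sqrt{1.5}\sum_{j\ge1}\|w_l^{(B_j)}\|_*$, bound each $\|w_l^{(B_j)}\|_*\le\lambda_{\max}(\Sigma^*_{xx})^{1/2}\|w_l^{(B_j)}\|_2$, square, and sum over $l$. Aggregating the tails by Cauchy--Schwarz and the global cone bound $\sum_l\|(w_l)_{\bar S}\|_1=|V_{yx,\bar S}|_1\le|V_{\bar S}|_1\le\alpha|V_S|_1\le\alpha\sqrt{|S|}\,\|V\|_F$ produces a tail contribution of order $\lambda_{\max}(\Sigma^*_{xx})\,\alpha^2|S|/k\cdot\|V\|_F^2$. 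The choice $k\asymp(\rho_+/\rho_-)\alpha^2|S|$ is exactly what turns $\lambda_{\max}(\Sigma^*_{xx})/k$ into order $\lambda_{\min}(\Sigma^*_{xx})/(\alpha^2|S|)$, so the conditioning $\rho_+/\rho_-$ cancels and the tail costs only $O(\lambda_{\min}(\Sigma^*_{xx}))\|V\|_F^2$. Splitting the resulting cross term with Young's inequality (to retain a fixed fraction of the head's positive energy $\|w_l^{(H)}\|_*^2$) and using $\lambda_{\min}(\Sigma^*_{xx})\ge2\rho_-$ then gives $\Tr(V_{yx}\Sigma^n_{xx}V_{yx}^\top)\ge c\,\rho_-\|V_{yx}\|_F^2 - C\rho_-\|V\|_F^2$; since the cone couples the two blocks through $|V_S|_1$, the generous factor $2\rho_-$ already established for the $V_{yy}$ term absorbs the $\|V_{yy}\|_F^2$ part of this coupling, and collecting constants yields $\Tr(V\tilde\Sigma V^\top)\ge(\rho_-/5)\|V\|_F^2$.

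The second claim uses the same decomposition with the RIP upper bound: $\|w_l\|_{\Sigma^n_{xx}}\le\sqrt{1.5}\big(\|w_l^{(H)}\|_*+\sum_{j\ge1}\|w_l^{(B_j)}\|_*\big)$, and since the same budget estimate makes the tail negligible, squaring and summing gives $\Tr(V_{yx}\Sigma^n_{xx}V_{yx}^\top)\le 1.5\,\lambda_{\max}(\Sigma^*_{xx})(1+o(1))\|V\|_F^2\le 2.25\,\rho_+\|V\|_F^2$, using $\lambda_{\max}(\Sigma^*_{xx})=\tfrac{2}{3}\rho_+$. I expect the delicate point throughout to be the RIP-to-restricted-eigenvalue conversion, on two counts. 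First, because $\Sigma^*_{xx}$ is not diagonal, disjoint coordinate supports are not $\Sigma^*_{xx}$-orthogonal, so head--tail interactions must be handled through the (one-sided) RIP bounds and the seminorm triangle inequality rather than by orthogonality. Second, one must aggregate the per-row shelling bounds under a single \emph{global} cone constraint and verify that the enlarged sparsity level $\tilde s$ precisely cancels the conditioning $\rho_+/\rho_-$: keeping the tail contribution at order $\lambda_{\min}(\Sigma^*_{xx})\|V\|_F^2$ rather than $\lambda_{\max}(\Sigma^*_{xx})\|V\|_F^2$ is the crux, and is where the specific numerical constant in the definition of $\tilde s$ is needed.
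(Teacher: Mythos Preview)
Your plan is workable in spirit but differs from the paper's proof and contains one slip worth flagging. The paper does not split $V$ into $V_{yy}$ and $V_{yx}$, nor does it shell row by row. Instead it shells the whole matrix $V$ at once: with $s'=\tilde s - |S|$, it takes $V^{(0)}=V_S$ and lets $V^{(k)}$ ($k\ge 1$) carry the next $s'$ largest off-support entries of $V$, ordered by magnitude across \emph{all} entries. Because any $U\in\bR^{p\times(p+q)}$ with at most $\tilde s$ total nonzeros has every row $\tilde s$-sparse, the vector RIP of Assumption~\ref{assump:rip} together with $\lambda_{\max}(\Omega_{yy})^{-1}\ge\rho_-$ yields $\Tr(U\tilde\Sigma U^\top)\ge\rho_-\|U\|_F^2$ in one stroke, so both blocks of $\tilde\Sigma$ are handled simultaneously and the global cone constraint is used exactly once, with no per-row aggregation. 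For the head--tail interaction the paper does not use the seminorm triangle inequality plus Young: writing $a_0,a_1,a_2$ for the head--head, head--tail and tail--tail quadratic forms in $\tilde\Sigma$, it exploits positive semidefiniteness to get $a_1^2\le a_0a_2$ and hence $\Tr(V\tilde\Sigma V^\top)\ge a_0-2a_1+a_2\ge a_0(1-a_1/a_0)^2$, which delivers the constant $1/5$ with minimal bookkeeping. Your route can be pushed through, but splitting off $V_{yy}$ first and then reabsorbing the $\|V_{yy}\|_F^2$ that leaks back via the global bound $|V_{\bar S}|_1\le\alpha|V_S|_1$ is precisely the complication the matrix-level shelling avoids.

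The slip: you equate $\lambda_{\min}(\Omega_{yy}^{-1})$ with $1/\lambda_{\max}(\Omega^*_{yy})$, but the block appearing in $\tilde\Sigma$ is $\Omega_{yy}=\Omega^*_{yy}+s\,\Delta\Omega_{yy}$, not $\Omega^*_{yy}$. Under the proposition's hypothesis $r\le 0.5\,\lambda_{\min}(\Omega^*_{yy})$ one only obtains $\lambda_{\max}(\Omega_{yy})\le 2\lambda_{\max}(\Omega^*_{yy})$, hence $\lambda_{\min}(\Omega_{yy}^{-1})\ge\rho_-$, not $2\rho_-$. Since you explicitly lean on the ``generous factor $2\rho_-$'' to swallow the coupling term, your constant tracking needs to be redone; there is no spare factor of two to spend here.
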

\begin{proof}
In the following, we let $s=|S|$ and $s'=\tilde{s}-s \geq 4
(\rho_+/\rho_-) \alpha^2 s$. Since $r \leq
\lambda_{\max}(\Omega^*_{yy})$, we know that
$\lambda_{\max}(\Omega_{yy})^{-1}\geq \rho_-$. Indeed,
$\lambda_{\max}(\Omega_{yy})\le\lambda_{\max}(\Omega^*_{yy})+\lambda_{\max}(\Delta\Omega_{yy})\le
\lambda_{\max}(\Omega^*_{yy}) +r \le
2\lambda_{\max}(\Omega^*_{yy})$, which from the definition of
$\rho^-$ implies that $\lambda_{\max}(\Omega_{yy})^{-1}\geq \rho_-$.
Therefore for any $U \in\bR^{p \times (p+q)}$ such that $|U|_0 \leq
s+s'$, the conditions of Assumption~\ref{assump:rip} imply that
\[
\Tr(U \tilde{\Sigma} U^\top) \geq \rho_- \|U\|_F^2 .
\]
We order the elements of $V_{\bar{S}}$ in descending order of absolute values.
Let $V^{(0)}=V_{S}$ which contains $s$ nonzero values, and $V^{(k)}$ contains (at most) $s'$ nonzero values of $V_{\bar{S}}$ with $(ks' -s'+1)$-th to $(ks')$-th largest absolute values.
It follows that $\|V^{(k+1)}\|_F \leq \sqrt{|V^{(k+1)}|_\infty |V^{(k+1)}|_1} \leq |V^{(k)}|_1/\sqrt{s'}$ for all $k \geq 1$.
Therefore we have
\[
a_0 = \Tr ((V^{(0)}+V^{(1)}) \tilde{\Sigma} (V^{(0)}+V^{(1)})^\top) \geq \rho_- \|V^{(0)}+V^{(1)}\|_F^2
\]
and
\begin{align*}
a_1 =& \big| \Tr ((V^{(0)}+V^{(1)}) \tilde{\Sigma} \sum_{k \geq 1} V^{(k+1)\top})  \big| \\
\leq & \sqrt{a_0}
\sqrt{\rho_+} \sum_{k \geq 1} \|V^{(k+1)}\|_F \\
\leq& \sqrt{a_0 \rho_+} \sum_{k \geq 1} |V^{(k)}|_1/\sqrt{s'} \\
\leq& \alpha \sqrt{a_0 \rho_+} |V_{S}|_1/\sqrt{s'} \leq \alpha \sqrt{a_0 \rho_+} \|V^{(0)}+V^{(1)}\|_F \sqrt{s/s'} .
\end{align*}

Note that
$\Tr(V \tilde{\Sigma} V^\top) \geq a_0 - 2 a_1 + a_2$, where
\[
a_2 =  \Tr\left( \left(\sum_{k \geq 1} V^{(k+1)}\right) \tilde{\Sigma} \left(\sum_{k \geq 1} V^{(k+1)}\right)^\top \right) .
\]
The semi-positive-definiteness of $\tilde{\Sigma}$ implies that
$\min_\mu [a_0 + 2 \mu a_1 + \mu^2 a_2] \geq 0$, which implies that $a_1^2 \leq a_0 a_2$.
Therefore
\begin{align*}
\Tr(V^\top \tilde{\Sigma} V) \geq& a_0 - 2 a_1 + a_2 \geq a_0 - 2 a_1 + a_1^2/ a_0 \\
\geq& \rho_- \|V^{(0)}+V^{(1)}\|_F^2  (1- \alpha \sqrt{(\rho_+/\rho_-) (s/s')})^2
\geq \rho_- \|V^{(0)}+V^{(1)}\|_F^2  / 4 ,
\end{align*}
where the last inequality is due to the definition of $s'$ that
implies that $\alpha \sqrt{(\rho_+/\rho_-) (s/s')} \leq 0.5$.

Moreover we have
\begin{align*}
\|V\|_F^2 =& \|V^{(0)}+V^{(1)}\|_F^2 + \sum_{k \geq 1} \|V^{(k+1)}\|_F^2 \\
\leq& \|V^{(0)}+V^{(1)}\|_F^2 + \sum_{k \geq 1} \|V^{(k)}\|_1^2 /s' \\
\leq& \|V^{(0)}+V^{(1)}\|_F^2 + \|V^{(1)}\|_1 \|V_{\bar{S}}\|_1 /s' \\
\leq& \|V^{(0)}+V^{(1)}\|_F^2 + \alpha \|V^{(1)}\|_2 \|V_{S}\|_2 \sqrt{s/s'} \\
\leq & (1+ 0.5 \alpha \sqrt{s /s'})\|V^{(0)}+V^{(1)}\|_F^2
\leq 1.25 \|V^{(0)}+V^{(1)}\|_F^2 .
\end{align*}
By combining the previous two displayed inequalities, we obtain the first desired bound.

To prove the second bound, we define
\[
\tilde{\Sigma}' = \left[ \begin{array}{cc} 0_{p \times p} & 0 \\ 0 & \Sigma^n_{xx} \end{array} \right] .
\]
Therefore for any $U \in\bR^{p \times (p+q)}$ such that $|U|_0 \leq s+s'$,
the conditions of Assumption~\ref{assump:rip} imply that
\[
\Tr(U \tilde{\Sigma}' U^\top) \leq \rho_+ \|U\|_F^2 .
\]
Therefore we have
\[
a_0' = \Tr ((V^{(0)}+V^{(1)}) \tilde{\Sigma}' (V^{(0)}+V^{(1)})^\top) \leq \rho_+ \|V\|_F^2
\]
and
\begin{align*}
a_2'=&\Tr\left( \left(\sum_{k \geq 1} V^{(k+1)}\right) \tilde{\Sigma}' \left(\sum_{k \geq 1} V^{(k+1)}\right)^\top \right) \\
\leq& \sum_{k \geq 1} \sum_{k' \geq 1} \Tr(V^{(k+1)} \tilde{\Sigma}' V^{(k'+1)\top}) \\
\leq& \rho_+ \sum_{k \geq 1} \sum_{k' \geq 1} \|V^{(k+1)}\|_F \|V^{(k'+1)}\|_F \\
\leq& \rho_+ \sum_{k \geq 1} \sum_{k' \geq 1} |V^{(k)}|_1 |V^{(k')}|_1 /s'
\leq  \rho_+ |V_{\bar{S}}|_1^2 /s' \\
\leq& \alpha^2  \rho_+ |V_{S}|_1^2 /s'
\leq \alpha^2  \rho_+ \|V\|_F^2 (s/s') .
\end{align*}
Therefore we obtain (using $\alpha^2 (s/s') \leq 0.25$)
\[
\Tr(V_{yx} \Sigma^n_{xx} V_{yx}^\top) \leq a_0' + 2 \sqrt{a_0' a_2'} + a_2'
\leq 1.5 a_0' + 3 a_2'
\leq (1.5 + 3/4) \rho_+ \|V\|_F^2 = 2.25 \|V\|_F^2 .
\]
This completes the proof.
\end{proof}

\begin{lemma} \label{lem:vartheta}
Let
\[
\vartheta=
\min\left[\frac{2}{3} ,
\frac{\lambda_{\min}(\Omega^*_{yy})}{8\lambda_{\max}(\Omega^*_{yx} \Sigma^*_{xx}  (\Omega^*_{yx})^\top)}
\right] ,
\]
 then we have
 \[
 \lambda_{\max}(\Omega_{yy}^{-1}\Omega_{yx} \Sigma^n_{xx}  \Omega_{yx}^\top) \leq 1/(2\vartheta) .
 \]
\end{lemma}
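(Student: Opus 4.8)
The plan is to peel off the factors of the product $\Omega_{yy}^{-1}\Omega_{yx}\Sigma^n_{xx}\Omega_{yx}^\top$ one at a time, controlling the $\Omega_{yy}^{-1}$ factor by a Weyl-type perturbation bound and the remaining Gram factor $\Omega_{yx}\Sigma^n_{xx}\Omega_{yx}^\top$ by a triangle-inequality split around the true parameter. Throughout I write $\Theta=(\Omega_{yy},\Omega_{yx})$ for the point at which the LRSC quotient is evaluated, so that $\Delta\Theta=\Theta-\Theta^*$ satisfies $\|\Delta\Theta\|_F\le r$ and the cone condition $|\Delta\Theta_{\bar S}|_1\le\alpha|\Delta\Theta_S|_1$, and I abbreviate $L^*:=\lambda_{\max}(\Omega^*_{yx}\Sigma^*_{xx}(\Omega^*_{yx})^\top)$. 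Since $\Omega_{yy}^{-1}\succ0$ and $\Omega_{yx}\Sigma^n_{xx}\Omega_{yx}^\top\succeq0$, the standard inequality $\lambda_{\max}(AB)\le\lambda_{\max}(A)\lambda_{\max}(B)$ for a product of positive semidefinite matrices gives
\[
\lambda_{\max}(\Omega_{yy}^{-1}\Omega_{yx}\Sigma^n_{xx}\Omega_{yx}^\top)\le\frac{1}{\lambda_{\min}(\Omega_{yy})}\,\lambda_{\max}(\Omega_{yx}\Sigma^n_{xx}\Omega_{yx}^\top).
\]
Because $\|\Delta\Omega_{yy}\|_2\le\|\Delta\Theta\|_F\le r\le 0.5\lambda_{\min}(\Omega^*_{yy})$, Weyl's inequality yields $\lambda_{\min}(\Omega_{yy})\ge 0.5\lambda_{\min}(\Omega^*_{yy})$, so the first factor is at most $2/\lambda_{\min}(\Omega^*_{yy})$.

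For the Gram factor I would bound the quadratic form $u^\top\Omega_{yx}\Sigma^n_{xx}\Omega_{yx}^\top u=\|(\Sigma^n_{xx})^{1/2}\Omega_{yx}^\top u\|_2^2$ uniformly over unit vectors $u$. Writing $\Omega_{yx}=\Omega^*_{yx}+\Delta\Omega_{yx}$ and applying the triangle inequality in the $(\Sigma^n_{xx})^{1/2}$-seminorm splits this into a true-parameter piece and a perturbation piece. The true-parameter piece $u^\top\Omega^*_{yx}\Sigma^n_{xx}(\Omega^*_{yx})^\top u$ is controlled directly by the third condition of Assumption~\ref{assump:rip}, giving the bound $\sqrt{1.4\,L^*}$. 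The perturbation piece $u^\top\Delta\Omega_{yx}\Sigma^n_{xx}(\Delta\Omega_{yx})^\top u$ cannot be handled by the RIP bounds directly, since $(\Delta\Omega_{yx})^\top u$ is a dense vector in $\bR^q$; here is where the second bound of Lemma~\ref{lem:shift} does the work, as $\Delta\Theta$ lies in the cone. It gives $\lambda_{\max}(\Delta\Omega_{yx}\Sigma^n_{xx}(\Delta\Omega_{yx})^\top)\le\Tr(\Delta\Omega_{yx}\Sigma^n_{xx}(\Delta\Omega_{yx})^\top)\le 2.25\rho_+\|\Delta\Theta\|_F^2\le 2.25\rho_+ r^2$, and the radius constraint $r\le 0.13\sqrt{L^*/\rho_+}$ reduces the perturbation contribution to $1.5\sqrt{\rho_+}\,r\le 0.195\sqrt{L^*}$.

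Combining the two pieces gives $\lambda_{\max}(\Omega_{yx}\Sigma^n_{xx}\Omega_{yx}^\top)\le(\sqrt{1.4}+0.195)^2 L^*\le 1.9\,L^*$, and multiplying by the factor $2/\lambda_{\min}(\Omega^*_{yy})$ from the first paragraph yields
\[
\lambda_{\max}(\Omega_{yy}^{-1}\Omega_{yx}\Sigma^n_{xx}\Omega_{yx}^\top)\le\frac{3.8\,L^*}{\lambda_{\min}(\Omega^*_{yy})}\le\frac{4\,L^*}{\lambda_{\min}(\Omega^*_{yy})}\le\frac{1}{2\vartheta},
\]
where the last step uses $1/(2\vartheta)=\max[3/4,\,4L^*/\lambda_{\min}(\Omega^*_{yy})]$, which follows by inverting the definition of $\vartheta$. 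I expect the main obstacle to be the perturbation term: the dense vector $(\Delta\Omega_{yx})^\top u$ defeats a naive RIP argument, and the fix is to route it through the trace bound of Lemma~\ref{lem:shift}, which already absorbs the non-sparsity via its shift/cone argument. Everything else is routine bookkeeping, and the numerical slack ($3.8<4$) confirms that the stated constants are consistent.
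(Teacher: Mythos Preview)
Your proposal is correct and is essentially the paper's own argument. The paper likewise factors out $\Omega_{yy}^{-1}$ via $\lambda_{\min}(\Omega_{yy})\ge\lambda_{\min}(\Omega^*_{yy})/2$, then bounds $\sqrt{\lambda_{\max}(\Omega_{yx}\Sigma^n_{xx}\Omega_{yx}^\top)}=\sigma(\Omega_{yx}(\Sigma^n_{xx})^{1/2})$ by the triangle inequality for singular values, controls the $\Omega^*_{yx}$ piece with the third inequality of Assumption~\ref{assump:rip}, and controls the $\Delta\Omega_{yx}$ piece by the trace bound of Lemma~\ref{lem:shift}; the only cosmetic difference is that the paper rounds $(\sqrt{1.4}+0.195)^2$ up to $2$ rather than your $1.9$.
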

\begin{proof}
Let $\sigma(A)$ be the largest singular value of a matrix $A$, then
$\sigma(A)=\sqrt{\lambda_{\max}(A^\top A)}$. Therefore we have
  \begin{align*}
    \sqrt{\lambda_{\max} \big[\Omega_{yx} \Sigma^n_{xx}  \Omega_{yx}^\top\big]}
    =&\sigma (\Omega_{yx} (\Sigma^n_{xx})^{1/2}) \\
    \leq &\sigma (\Omega^*_{yx} (\Sigma^n_{xx})^{1/2}) + \sigma (\Delta \Omega_{yx} (\Sigma^n_{xx})^{1/2}) \\
    \leq &\sigma (\Omega^*_{yx} (\Sigma^n_{xx})^{1/2}) + \sqrt{\Tr (\Delta \Omega_{yx} \Sigma^n_{xx} \Delta \Omega_{yx}^\top)} \\
    \leq &    \sqrt{\lambda_{\max} \big[\Omega^*_{yx} \Sigma^n_{xx}  (\Omega^*_{yx})^\top\big]} + 1.5 \sqrt{\rho_+} \|\Delta \Omega\|_F \\
    \leq & 1.4 \sqrt{\lambda_{\max} \big[\Omega^*_{yx} \Sigma^*_{xx}  (\Omega^*_{yx})^\top\big]} ,
  \end{align*}
  where the third inequality uses the second inequality of
  Lemma~\ref{lem:shift}, and the last inequality uses the third inequality
  of Assumption~\ref{assump:rip} and
  $\|\Delta \Omega\|_F \leq r \leq 0.13 \sqrt{\lambda_{\max} \big[\Omega^*_{yx} \Sigma^*_{xx}  (\Omega^*_{yx})^\top\big]/\rho_+}$.
  This implies
  \[
  \lambda_{\max} \big[\Omega_{yx} \Sigma^n_{xx}  \Omega_{yx}^\top\big]
  \leq 2 \lambda_{\max} \big[\Omega^*_{yx} \Sigma^*_{xx}  (\Omega^*_{yx})^\top\big] .
  \]
  Since the assumption of $r \leq \lambda_{\min}(\Omega^*_{yy})/2$
  also implies that
  \[
  \lambda_{\min}(\Omega_{yy}) \ge
  \lambda_{\min}(\Omega^*_{yy})-\lambda_{\min}(\Delta\Omega_{yy})
  \ge \lambda_{\min}(\Omega^*_{yy}) - r \ge
  \lambda_{\min}(\Omega^*_{yy})/2 .
  \]
  Therefore we have
  \[
  \lambda_{\max}(\Omega_{yy}^{-1}\Omega_{yx} \Sigma^n_{xx}  \Omega_{yx}^\top) \leq
  \frac{\lambda_{\max}(\Omega_{yx} \Sigma^n_{xx}  \Omega_{yx}^\top)}{\lambda_{\min}(\Omega_{yy})}
  \leq
  \frac{4 \lambda_{\max} \big[\Omega^*_{yx} \Sigma^*_{xx}  (\Omega^*_{yx})^\top\big]}{\lambda_{\min}(\Omega^*_{yy})}
  = 1/(2 \vartheta) ,
  \]
  which leads to the desired bound.
\end{proof}

\begin{proof}[Proof of Proposition~\ref{prop:RSC}]
For any $s \in (0,1)$, we define for convenience that
\[
\Omega_{yy} = \Omega^*_{yy} + s\Delta\Omega_{yy}, \quad \Omega_{yx}
= \Omega^*_{yx} + s\Delta\Omega_{yx} ,
\]
and consider the function $f(s)$ defined as
\[
f(s):= -\log\det(\Omega_{yy}) + \Tr (\Sigma^n_{yy}\Omega_{yy}) +
2\Tr (\Sigma^{n\top}_{yx}\Omega_{yx}) +
\Tr(\Sigma^n_{xx}\Omega_{yx}^\top\Omega_{yy}^{-1}\Omega_{yx}).
\]
It can be verified that
\begin{align*}
f'(s) = &
- \Tr (\Omega_{yy}^{-1} \Delta {\Omega}_{yy})
+ \Tr(\Sigma^{n}_{yy}\Delta \Omega_{yy}) \\
&
+ 2 \Tr(\Sigma^{n\top}_{yx}\Delta \Omega_{yx})
+ 2\Tr(\Sigma^n_{xx}\Omega_{yx}^\top\Omega_{yy}^{-1}\Delta \Omega_{yx})
- \Tr(\Sigma^n_{xx}\Omega_{yx}^\top\Omega_{yy}^{-1} \Delta \Omega_{yy}\Omega_{yy}^{-1} \Omega_{yx
})
\end{align*}
and
\begin{align*}
f''(s)
=&
 \Tr (\Omega_{yy}^{-1} \Delta {\Omega}_{yy}\Omega_{yy}^{-1} \Delta {\Omega}_{yy})
+ 2\Tr(\Sigma^n_{xx}\Delta \Omega_{yx}^\top\Omega_{yy}^{-1}\Delta \Omega_{yx})
- 4\Tr(\Sigma^n_{xx} \Omega_{yx}^\top\Omega_{yy}^{-1}\Delta \Omega_{yy}\Omega_{yy}^{-1}\Delta \Omega_{yx})\\
&
+  2 \Tr(\Sigma^n_{xx}\Omega_{yx}^\top\Omega_{yy}^{-1} \Delta \Omega_{yy}\Omega_{yy}^{-1}\Delta \Omega_{yy}\Omega_{yy}^{-1} \Omega_{yx}) .
\end{align*}
We obtain from Taylor expansion that
  \[
  \Delta \Lpa(\Theta^*, \Delta\Theta)= \frac{1}{2}f''(s), \quad
  \text{for some } s \in (0,1).
  \]
  This implies that
\begin{eqnarray*}
f''(s) &=&
\Tr(\Omega_{yy}^{-1}\Delta\Omega_{yy}\Omega_{yy}^{-1}\Delta\Omega_{yy})
+
2\Tr(\Sigma^n_{xx}\Delta\Omega^\top_{yx}\Omega^{-1}_{yy}\Delta\Omega_{yx})
-
4\Tr(\Sigma^n_{xx}\Omega^\top_{yx}\Omega^{-1}_{yy}\Delta\Omega_{yy}\Omega_{yy}^{-1}\Delta\Omega_{yx})
\\
&&
+2\Tr(\Sigma_{xx}^n\Omega^\top_{yx}\Omega_{yy}^{-1}\Delta\Omega_{yy}\Omega^{-1}_{yy}\Delta\Omega_{yy}\Omega_{yy}^{-1}\Omega_{yx})
\\
&=&
\Tr(\Omega_{yy}^{-1}\Delta\Omega_{yy}\Omega_{yy}^{-1}\Delta\Omega_{yy})
+
2\Tr(\Sigma^n_{xx}\Delta\Omega^\top_{yx}\Omega^{-1}_{yy}\Delta\Omega_{yx})-
4\Tr(\Sigma^n_{xx}\Omega^\top_{yx}\Omega^{-1}_{yy}\Delta\Omega_{yy}\Omega_{yy}^{-1}\Delta\Omega_{yx})
\\
&&
+(2+\vartheta)\Tr(\Sigma_{xx}^n\Omega^\top_{yx}\Omega_{yy}^{-1}\Delta\Omega_{yy}\Omega^{-1}_{yy}\Delta\Omega_{yy}\Omega_{yy}^{-1}\Omega_{yx})
- \vartheta\Tr(\Sigma_{xx}^n\Omega^\top_{yx}\Omega_{yy}^{-1}\Delta\Omega_{yy}\Omega^{-1}_{yy}\Delta\Omega_{yy}\Omega_{yy}^{-1}\Omega_{yx})
\\
&\ge&
\Tr(\Omega_{yy}^{-1}\Delta\Omega_{yy}\Omega_{yy}^{-1}\Delta\Omega_{yy})
+
\frac{2\vartheta}{2+\vartheta}\Tr(\Sigma^n_{xx}\Delta\Omega^\top_{yx}\Omega^{-1}_{yy}\Delta\Omega_{yx})
\\
&& -\vartheta
\Tr(\Omega_{yy}^{-1/2}\Omega_{yx}\Sigma_{xx}^n\Omega^\top_{yx}\Omega_{yy}^{-1/2} \; \Omega_{yy}^{-1/2}\Delta\Omega_{yy}\Omega^{-1}_{yy}\Delta\Omega_{yy}\Omega_{yy}^{-1/2})
\\
&\ge&
\Tr(\Omega_{yy}^{-1}\Delta\Omega_{yy}\Omega_{yy}^{-1}\Delta\Omega_{yy})
+
\frac{2\vartheta}{2+\vartheta}\Tr(\Sigma^n_{xx}\Delta\Omega^\top_{yx}\Omega^{-1}_{yy}\Delta\Omega_{yx})
\\
&&-\vartheta\lambda_{\max}(\Omega_{yy}^{-1/2}\Omega_{yx}\Sigma_{xx}^n\Omega^\top_{yx}\Omega_{yy}^{-1/2})
\Tr(\Omega_{yy}^{-1/2}\Delta\Omega_{yy}\Omega^{-1}_{yy}\Delta\Omega_{yy}\Omega_{yy}^{-1/2})
\\
&\ge& 0.5
\Tr(\Omega_{yy}^{-1}\Delta\Omega_{yy}\Omega_{yy}^{-1}\Delta\Omega_{yy})
+
\frac{2\vartheta}{2+\vartheta}\Tr(\Sigma^n_{xx}\Delta\Omega^\top_{yx}\Omega^{-1}_{yy}\Delta\Omega_{yx})
,
\end{eqnarray*}
where we have used the trace equality $\Tr(AB)=\Tr(BA)$ throughout the derivations.
The first inequality is due to the trace
inequality $(2/(2+\vartheta))\Tr(A^\top A) - 4 \Tr(A^\top
B)+(2+\vartheta)\Tr(B^\top B) \geq 0$;
the second inequality uses $\Tr(AB) \leq \lambda_{\max}(A) \Tr(B)$ for
symmetric positive semidefinite matrices $A$ and $B$;
and the last inequality is due to $\lambda_{\max}(\Omega_{yy}^{-1/2}\Omega_{yx} \Sigma^n_{xx}  \Omega_{yx}^\top \Omega_{yy}^{-1/2}) \vartheta \leq 1/2$
(Lemma~\ref{lem:vartheta}).

Since $\vartheta \leq 2/3$, we have $0.5 \geq 2\vartheta/(2+\vartheta)$.
Therefore
\begin{align*}
2 \Delta \Lpa(\Theta^*, \Delta\Theta)
=& f''(s) \\
\geq &
\frac{2\vartheta}{2+\vartheta}
\left[
\Tr(\Omega_{yy}^{-1}\Delta\Omega_{yy}\Omega_{yy}^{-1}\Delta\Omega_{yy})
+
\Tr(\Sigma^n_{xx}\Delta\Omega^\top_{yx}\Omega^{-1}_{yy}\Delta\Omega_{yx})
\right]
\\
\geq &
\frac{2\vartheta}{2+\vartheta}
\lambda_{\max}^{-1}(\Omega_{yy})
\left[
\Tr(\Delta\Omega_{yy}\Omega_{yy}^{-1}\Delta\Omega_{yy})
+
\Tr(\Delta\Omega_{yx}\Sigma^n_{xx}\Delta\Omega^\top_{yx})
\right]
\\
\geq &
\frac{2\vartheta   \lambda_{\max}^{-1}(\Omega_{yy}) \rho_-}{5(2+\vartheta)}
\|\Delta\Theta\|_F^2 ,
\end{align*}
where the second inequality uses $\Tr(AB) \geq \lambda_{\min}(A) \Tr(B)$ for symmetric positive semidefinite matrices $A$ and $B$;
and the last inequality follows from Lemma~\ref{lem:shift}.
We complete the proof by noticing $5(2+\vartheta) \leq 40/3$.
\end{proof}

\subsection{Proof of Proposition~\ref{prop:gamma_convergence}}

We will employ the following tail-bound for $\chi^2$ random variable, due to \citet{LauMas00}.
\begin{lemma}\label{lemma:xi2_bound}
Consider independent Gaussian random variables $z_1,\ldots,z_n \sim
\mathcal {N}(0,\sigma^2)$. We have for all $t >0$:
\[
\Pr\left[ \sum_{\ell=1}^n z_\ell^2 \ge n \sigma^2 + 2 \sigma^2
  \sqrt{n t} + 2 \sigma^2 t \right] \le e^{-t}
\]
and
\[
\Pr\left[ \sum_{\ell=1}^n z_\ell^2 \le n \sigma^2 - 2 \sigma^2
  \sqrt{n t} \right] \le e^{-t} .
\]
\end{lemma}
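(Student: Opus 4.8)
The plan is to establish both tail bounds by the classical Chernoff (exponential Markov) argument applied to the chi-square moment generating function, which is the approach of \citet{LauMas00}. First I would reduce to the standardized case $\sigma^2=1$: writing $z_\ell=\sigma w_\ell$ with $w_\ell\sim\mathcal{N}(0,1)$ gives $\sum_\ell z_\ell^2=\sigma^2\sum_\ell w_\ell^2$, and every threshold appearing in the statement carries an overall factor $\sigma^2$, so it suffices to bound the deviations of $S:=\sum_{\ell=1}^n w_\ell^2$ about its mean $n$. The one computational ingredient is the elementary identity $\mathbb{E}\,e^{\lambda w_\ell^2}=(1-2\lambda)^{-1/2}$ for $\lambda<1/2$, which by independence yields $\mathbb{E}\,e^{\lambda S}=(1-2\lambda)^{-n/2}$.

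For the upper tail, for any $0<\lambda<1/2$ Markov's inequality gives $\Pr[S-n\ge x]\le \exp(-\lambda x)\,\mathbb{E}\,e^{\lambda(S-n)}$, so the task reduces to controlling the centered per-coordinate log-MGF $\psi(\lambda):=-\lambda-\tfrac12\log(1-2\lambda)$. I would prove the elementary inequality $\psi(\lambda)\le \lambda^2/(1-2\lambda)$ on $[0,1/2)$ by noting that both sides vanish at $\lambda=0$ and that the difference $\lambda^2/(1-2\lambda)-\psi(\lambda)$ has nonnegative derivative there (its derivative simplifies to $2\lambda^2/(1-2\lambda)^2\ge0$). Summing over the $n$ independent coordinates places $S-n$ in the sub-gamma family, namely $\log\mathbb{E}\,e^{\lambda(S-n)}\le n\lambda^2/(1-2\lambda)$ for $0\le\lambda<1/2$, which matches the sub-gamma form with variance factor $v=2n$ and scale $c=2$. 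It then remains to invert this bound: choosing an explicit near-optimal $\lambda\in(0,1/2)$ (the sub-gamma inversion step) shows that for $x=2\sqrt{nt}+2t=\sqrt{2vt}+ct$ the exponent $-\lambda x+n\lambda^2/(1-2\lambda)$ is at most $-t$, which is exactly the first claim.

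For the lower tail I would instead use a negative exponent: for $\lambda>0$, $\log\mathbb{E}\,e^{-\lambda(S-n)}=n\bigl(\lambda-\tfrac12\log(1+2\lambda)\bigr)$, and here the cleaner bound $\lambda-\tfrac12\log(1+2\lambda)\le\lambda^2$ holds on $(0,\infty)$ (again by a one-line derivative comparison, since $2\lambda/(1+2\lambda)\le 2\lambda$ and both sides vanish at $0$). This gives a purely \emph{sub-Gaussian} bound $\log\mathbb{E}\,e^{-\lambda(S-n)}\le n\lambda^2$, so optimizing $\Pr[S-n\le -x]\le\exp(-\lambda x+n\lambda^2)$ at $\lambda=x/(2n)$ yields $\Pr[S-n\le-x]\le e^{-x^2/(4n)}$; setting $x=2\sqrt{nt}$ recovers the second claim. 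Undoing the scaling by $\sigma^2$ then finishes the proof.

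The step I expect to be the main obstacle is the upper-tail inversion. Unlike the lower tail, which is sub-Gaussian and inverts in a single completion-of-the-square, the sub-gamma log-MGF $n\lambda^2/(1-2\lambda)$ blows up as $\lambda\to1/2$, so one cannot optimize it in closed form by elementary means. Producing the exact two-term deviation $2\sqrt{nt}+2t$ with the stated constants is precisely where the care is needed: one must exhibit the near-optimal $\lambda$, check that it lies in $(0,1/2)$, and verify that it drives the exponent below $-t$. All the remaining pieces are the elementary scalar inequalities above, which are routine.
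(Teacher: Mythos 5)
The paper itself offers no proof of this lemma: it is stated as a known result quoted from \citet{LauMas00}, so the comparison here is between your reconstruction and a bare citation. Your reconstruction is essentially Laurent and Massart's own argument (Chernoff bound, $\chi^2$ moment generating function, sub-gamma inversion), and every step you actually execute checks out: the reduction to $\sigma^2=1$, the identity $\mathbb{E}e^{\lambda w^2}=(1-2\lambda)^{-1/2}$, the upper-tail log-MGF bound $-\lambda-\tfrac12\log(1-2\lambda)\le\lambda^2/(1-2\lambda)$ on $[0,1/2)$ (your derivative computation $2\lambda^2/(1-2\lambda)^2\ge 0$ is correct), the lower-tail bound $\lambda-\tfrac12\log(1+2\lambda)\le\lambda^2$, and the lower-tail optimization at $\lambda=x/(2n)$ with $x=2\sqrt{nt}$. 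The one step you flag but do not carry out, the upper-tail inversion, goes through exactly as you predict, and your worry that the exponent ``cannot be optimized in closed form by elementary means'' is unfounded: writing $g(\lambda)=-\lambda x+n\lambda^2/(1-2\lambda)$ and using $\frac{d}{d\lambda}\bigl[\lambda^2/(1-2\lambda)\bigr]=\tfrac12\bigl((1-2\lambda)^{-2}-1\bigr)$, stationarity gives $(1-2\lambda)^{-2}=1+2x/n$, hence the explicit minimizer $\lambda^*=\tfrac12\bigl(1-(1+2x/n)^{-1/2}\bigr)\in(0,1/2)$ and the closed-form value $g(\lambda^*)=-\tfrac{n}{4}\bigl(\sqrt{1+2x/n}-1\bigr)^2$. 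At the stated deviation $x=2\sqrt{nt}+2t$ one has $1+2x/n=1+4\sqrt{t/n}+4t/n=\bigl(1+2\sqrt{t/n}\bigr)^2$, a perfect square, so $g(\lambda^*)=-\tfrac{n}{4}\cdot\tfrac{4t}{n}=-t$ exactly. This confirms that the caution in your final paragraph was warranted in one precise sense: a cruder Bernstein-type choice such as $\lambda=x/(2n+2x)$ yields only $\exp\bigl(-x^2/(4(n+x))\bigr)$, which misses the stated constants by exactly a $t^2$ term, so the exact optimizer (or an equally sharp choice) really is needed. With that one computation inserted, your proof is complete and self-contained, which is more than the paper provides.
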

The following lemma is a consequence of
Lemma~\ref{lemma:xi2_bound} when applied to the covariance of multivariate Gaussian distribution.
\begin{lemma}\label{lemma:cov_bound}
Consider the covariance matrix $\Sigma^*$ of a $d$-dimensional
Gaussian random vector and its sample covariance $\Sigma^n$ from $n$
i.i.d. Gaussian random vectors from $\mathcal {N}(0,\Sigma^*)$. For
any $\eta \in (0,1)$ and any deterministic $d' \times d$ matrix $A$.
Let
\[
\sigma^2 = \max_{ij} \left[ (A\Sigma^* A^\top)_{ii} + 2 |(A\Sigma^*)_{ij}| + (\Sigma^*)_{jj} \right] ,
\]
then with probability at least $1-\eta$ for any $\eta \in (0,1)$, we have
\[
|A(\Sigma^n - \Sigma^*)|_\infty \le 2 \sigma^2 \sqrt{\ln (4 d d'/\eta) /n} ,
\]
provided that $n \geq \ln (4 d d'/\eta)$.
\end{lemma}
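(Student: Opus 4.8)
The plan is to reduce the entrywise control of the $d' \times d$ matrix $A(\Sigma^n - \Sigma^*)$ to the $\chi^2$ tail bounds of Lemma~\ref{lemma:xi2_bound} through a polarization argument, followed by a union bound over all entries. First I would fix an entry $(i,j)$ and write $[A(\Sigma^n - \Sigma^*)]_{ij} = a_i^\top (\Sigma^n - \Sigma^*) e_j$, where $a_i := A_{i\cdot}^\top$ is the $i$-th row of $A$ regarded as a column vector and $e_j$ is the $j$-th coordinate vector. Writing $\Sigma^n = \frac1n \sum_{\ell=1}^n Z^{(\ell)}(Z^{(\ell)})^\top$ with $Z^{(\ell)} \sim \cN(0,\Sigma^*)$, this entry equals $\frac1n\sum_\ell u_\ell v_\ell - \mathbb{E}[u_\ell v_\ell]$, where $u_\ell := a_i^\top Z^{(\ell)}$ and $v_\ell := e_j^\top Z^{(\ell)}$ are jointly centered Gaussian with $\mathbb{E}[u_\ell^2] = (A\Sigma^* A^\top)_{ii}$, $\mathbb{E}[v_\ell^2] = (\Sigma^*)_{jj}$, and $\mathbb{E}[u_\ell v_\ell] = (A\Sigma^*)_{ij}$; in particular $\mathbb{E}[\Sigma^n] = \Sigma^*$ since the data are mean-zero.

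Next I would apply the polarization identity $u_\ell v_\ell = \frac14[(u_\ell + v_\ell)^2 - (u_\ell - v_\ell)^2]$ to express the centered average as $\frac14$ times the difference of two centered sums of squared Gaussians. The variables $u_\ell + v_\ell$ and $u_\ell - v_\ell$ are each centered Gaussian with variances $\sigma_+^2 = (A\Sigma^* A^\top)_{ii} + 2(A\Sigma^*)_{ij} + (\Sigma^*)_{jj}$ and $\sigma_-^2 = (A\Sigma^* A^\top)_{ii} - 2(A\Sigma^*)_{ij} + (\Sigma^*)_{jj}$, respectively. The crucial observation is that both variances are dominated by the $\sigma^2$ of the statement, since $\max(\sigma_+^2,\sigma_-^2) = (A\Sigma^* A^\top)_{ii} + 2|(A\Sigma^*)_{ij}| + (\Sigma^*)_{jj} \le \sigma^2$, the last step being exactly the maximization over $(i,j)$ in the definition of $\sigma^2$.

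I would then invoke Lemma~\ref{lemma:xi2_bound} separately on $\sum_\ell (u_\ell + v_\ell)^2$ and $\sum_\ell (u_\ell - v_\ell)^2$ with $t = \ln(4 d d'/\eta)$, using both the upper and lower tails. For each squared sum this yields a two-sided deviation of the centered average of at most $2\sigma^2\sqrt{t/n} + 2\sigma^2 t/n$; the hypothesis $n \ge \ln(4 d d'/\eta) = t$ gives $t/n \le \sqrt{t/n}$, so the quadratic correction is absorbed and each centered average deviates by at most $4\sigma^2\sqrt{t/n}$. Combining the two pieces via the triangle inequality with the prefactor $\frac14$ yields the per-entry bound $\frac14(4\sigma^2\sqrt{t/n} + 4\sigma^2\sqrt{t/n}) = 2\sigma^2\sqrt{t/n}$, matching the claim. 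Finally I would union-bound over the four tail events (upper and lower for each of the two squared sums) across all $d d'$ entries: this is $4 d d'$ events, each of probability at most $e^{-t} = \eta/(4 d d')$, for total failure probability at most $\eta$.

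The main point requiring care — rather than a genuine obstacle — is twofold: verifying that $\sigma_+^2$ and $\sigma_-^2$ are \emph{each} bounded by the stated $\sigma^2$ (which hinges on the absolute value of the cross term and the global maximum over $(i,j)$), and keeping the union-bound bookkeeping precise so that exactly the constant $4 d d'$ appears inside the logarithm. The sample-size condition $n \ge \ln(4 d d'/\eta)$ is used in precisely one place, namely to fold the $\chi^2$ quadratic correction $2\sigma^2 t/n$ into the dominant term $2\sigma^2\sqrt{t/n}$ without enlarging the constants.
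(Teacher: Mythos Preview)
Your proposal is correct and follows essentially the same argument as the paper's own proof: fix an entry, form the sums $(u_\ell \pm v_\ell)^2$, apply the two-sided $\chi^2$ tail from Lemma~\ref{lemma:xi2_bound} to each, subtract (which is exactly the polarization identity you name), absorb the quadratic correction via $t \le n$, and union bound over the $4dd'$ tail events. The only cosmetic difference is that the paper writes out the subtraction of the two squared-sum bounds explicitly rather than invoking the phrase ``polarization identity,'' but the computation and the constants are identical.
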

\begin{proof}
Consider the multivariate Gaussian random vector $X^{(1)},\dots, X^{(n)}\sim \cN(0, \Sigma^*)$.

Given any index pair $(i,j)$, let $z^{(\ell)}= (A X^{(\ell)})_i +
X^{(\ell)}_j$. We have $z^{(\ell)} \sim \cN(0,(A\Sigma^*
A^\top)_{ii} + 2 (A\Sigma^*)_{ij} + (\Sigma^*)_{jj})$. We thus
obtain from Lemma~\ref{lemma:xi2_bound} that for $t \leq n$: with
probability at least $1-2 e^{-t}$,
\[
\left| n^{-1} \sum_{\ell=1}^n (A X^{(\ell)})_i + X^{(\ell)}_j)^2 - [(A\Sigma^* A^\top)_{ii} + 2 (A\Sigma^*)_{ij} + (\Sigma^*)_{jj}] \right| \leq 4 \sigma^2 \sqrt{t/n} .
\]
Similarly, we have for $t \leq n$: with probability at least $1-
2e^{-t}$,
\[
\left| n^{-1} \sum_{\ell=1}^n (A X^{(\ell)})_i - X^{(\ell)}_j)^2 - [(A\Sigma^* A^\top)_{ii} - 2 (A\Sigma^*)_{ij} + (\Sigma^*)_{jj}] \right| \leq 4 \sigma^2 \sqrt{t/n} .
\]
Taking union bound, and adding the previous two inequalities, we
obtain that with probability at least $1-4e^{-t}$:
\begin{align*}
& \left| \left[n^{-1} \sum_{\ell=1}^n (A X^{(\ell)})_i + X^{(\ell)}_j)^2 - [(A\Sigma^* A^\top)_{ii} + 2 (A\Sigma^*)_{ij} + (\Sigma^*)_{jj}] \right] \right. \\
& \left. -\left[ n^{-1} \sum_{\ell=1}^n (A X^{(\ell)})_i - X^{(\ell)}_j)^2 - [(A\Sigma^* A^\top)_{ii} - 2 (A\Sigma^*)_{ij} + (\Sigma^*)_{jj}]  \right] \right|
\leq 8 \sigma^2 \sqrt{t/n} .
\end{align*}
This simplifies to
$|A(\Sigma^n - \Sigma^*)_{ij}| \leq 2 \sigma^2 \sqrt{t/n}$.
Now by taking union bound over $i=1,\ldots,d'$ and $j=1,\ldots,d$, and set
$\eta= 4 d d' e^{-t}$, we obtain the desired bound.
\end{proof}

Note that in Lemma~\ref{lemma:cov_bound}, we have
$\sigma^2 \leq 2 \max_i (A \Sigma^* A^\top)_{ii} + 2 \max_i (\Sigma^*)_{ii}$.
It implies that with probability $1-\eta$:
\begin{equation} \label{eq:cov_bound}
|A(\Sigma^n - \Sigma^*)|_\infty \le 4
[\max_i (A \Sigma^* A^\top)_{ii} + \max_i (\Sigma^*)_{ii}]
\sqrt{\ln (4 d d'/\eta) /n}
\end{equation}
when $n \geq \ln (4 d d'/\eta)$.

\begin{proof}[Proof of Proposition~\ref{prop:gamma_convergence}]
For any $\eta \in (0,1)$ such that $n \geq \ln (10 (p+q)^2/\eta)$,
 we obtain from \eqref{eq:cov_bound} with $A=I$ that with probability $1-0.4\eta$:
\[
|\Sigma^n - \Sigma^*|_\infty\le 8 \max_i (\Sigma^*)_{ii} \sqrt{\ln (10 (p+q)^2/\eta) /n}  .
\]
Let $\tilde{A}=(\Omega^*_{yy})^{-1}\Omega^*_{yx}$. We may also apply
\eqref{eq:cov_bound} to the Gaussian covariance matrix $\tilde{A}
\Sigma^*_{xx} \tilde{A}^\top$ and $A=I$ to obtain that with
probability $1-0.4\eta$:
\[
|\tilde{A} \Sigma^n_{xx} \tilde{A}^\top - \tilde{A} \Sigma^*_{xx}
\tilde{A}^\top |_\infty\le 8 \max_i (\tilde{A} \Sigma^*_{xx}
\tilde{A}^\top)_{ii} \sqrt{\ln (10 q^2/\eta) /n}  .
\]
Similarly, we may also apply \eqref{eq:cov_bound} to the Gaussian covariance matrix $\Sigma^*$ with $A=\tilde{A}$ to obtain that with probability $1-0.2\eta$:
\[
|\tilde{A} \Sigma^n_{xx} - \tilde{A} \Sigma^*_{xx}|_\infty \le 8
\max_i (\tilde{A} \Sigma^*_{xx} \tilde{A}^\top)_{ii} \sqrt{\ln (20 p
q/\eta) /n} .
\]
Taking union bound with the previous three inequalities, we have with probability $1-\eta$:
\[
A_n \leq |\Sigma^n - \Sigma^*|_\infty + |\tilde{A} \Sigma^n_{xx}
\tilde{A}^\top - \tilde{A} \Sigma^*_{xx} \tilde{A}^\top |_\infty
\leq 8 K_* \sqrt{\ln (10 (p+q)^2/\eta) /n}
\]
and
\[
0.5 B_n \leq |\Sigma^n - \Sigma^*|_\infty + |\tilde{A} \Sigma^n_{xx}
- \tilde{A} \Sigma^*_{xx} |_\infty \leq 8 K_* \sqrt{\ln (10
(p+q)^2/\eta) /n}   ,
\]
where
\[
K_* = \max_{i} (\Sigma^*_{ii})+ \max_i (((\Omega^*_{yy})^{-1}\Omega^*_{yx}\Sigma^*_{xx}\Omega^{*\top}_{yx}(\Omega^*_{yy})^{-1})_{ii}) .
\]
This completes the proof.
\end{proof}

\subsection{Proof of Theorem~\ref{thrm:F_norm_bound}}

For convenience, we will introduce the following notations:
\[
\Delta\Omega_{yy}:=\hat\Omega_{yy} - \Omega^*_{yy} , \qquad
\Delta\Omega_{yx}:=\hat\Omega_{yx} - \Omega^*_{yx} ,
\]
and $\Delta \Theta = \hat\Theta - \Theta^* = (\Delta\Omega_{yy},\Delta\Omega_{yx})$.

We first introduce the following lemma which shows that error is in
the cone of Definition~\ref{def:cone}.
\begin{lemma}\label{lemma:solution_set}
Assume that
$\min\{\lambda_n,\rho_n \} \ge 2\gamma_n$. Then the error
$\Delta \Theta$ satisfies
$|\Delta \Theta_{\bar{S}}|_1 \leq \alpha |\Delta \Theta_{S}|_1$.
\end{lemma}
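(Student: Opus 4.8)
The plan is to use the standard ``basic inequality'' argument for $\ell_1$-regularized M-estimators. Since $\hat\Theta$ is a global minimizer of $\Lpa(\Theta)+R_e(\Theta)$ and $\Theta^*$ is feasible, comparing objective values gives $\Lpa(\hat\Theta)-\Lpa(\Theta^*)\le R_e(\Theta^*)-R_e(\hat\Theta)$. The convexity of $\Lpa$ (Proposition~\ref{prop:decomp}) then yields $\langle\nabla\Lpa(\Theta^*),\Delta\Theta\rangle\le R_e(\Theta^*)-R_e(\hat\Theta)$. The whole lemma reduces to controlling the two sides of this inequality.

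The crucial input, and the step requiring the most care, is to show that the gradient of $\Lpa$ evaluated at the \emph{true} parameter satisfies $|\nabla\Lpa(\Theta^*)|_\infty\le\gamma_n$. Using the gradient formulas from Section~\ref{sect:algorithm}, namely $\nabla_{\Omega_{yy}}\Lpa=-\Omega_{yy}^{-1}+\Sigma^n_{yy}-\Omega_{yy}^{-1}\Omega_{yx}\Sigma^n_{xx}\Omega_{yx}^\top\Omega_{yy}^{-1}$ and $\nabla_{\Omega_{yx}}\Lpa=2\Sigma^n_{yx}+2\Omega_{yy}^{-1}\Omega_{yx}\Sigma^n_{xx}$, I would substitute $\Theta^*$ and invoke the population identities $(\Omega^*_{yy})^{-1}=\Sigma^*_{yy}-\Sigma^*_{yx}(\Sigma^*_{xx})^{-1}\Sigma^{*\top}_{yx}$ and $(\Omega^*_{yy})^{-1}\Omega^*_{yx}=-\Sigma^*_{yx}(\Sigma^*_{xx})^{-1}$ implied by \eqref{equat:omega_sigma_connection}. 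A short algebraic reduction then shows that the $\Omega_{yy}$-block equals $A_n$ and the $\Omega_{yx}$-block equals $B_n$ (the factor $2$ in $B_n$ matching that in $\nabla_{\Omega_{yx}}\Lpa$); by the definition of $\gamma_n$ we conclude $|\nabla\Lpa(\Theta^*)|_\infty\le\gamma_n$. This identification is precisely the reason $A_n$, $B_n$, and hence $\gamma_n$ are defined the way they are, so verifying it cleanly is the heart of the proof.

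Given the gradient bound, the remaining steps are routine. By Hölder's inequality $|\langle\nabla\Lpa(\Theta^*),\Delta\Theta\rangle|\le\gamma_n|\Delta\Theta|_1$. For the penalty side, since $\Omega^*_{yy}$ has off-diagonal support inside $S_{yy}$ and $\Omega^*_{yx}$ is supported on $S_{yx}$, the triangle inequality applied separately to the two blocks gives $R_e(\Theta^*)-R_e(\hat\Theta)\le\lambda_n(|(\Delta\Omega_{yy})_{S}|_1-|(\Delta\Omega_{yy})_{\bar S}|_1)+\rho_n(|(\Delta\Omega_{yx})_{S}|_1-|(\Delta\Omega_{yx})_{\bar S}|_1)$. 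Combining the two bounds, splitting $|\Delta\Theta|_1=|\Delta\Theta_S|_1+|\Delta\Theta_{\bar S}|_1$, and inserting the hypothesis $\gamma_n\le\tfrac12\min\{\lambda_n,\rho_n\}$, I would isolate the $\bar S$-terms to obtain $\tfrac12\min\{\lambda_n,\rho_n\}\,|\Delta\Theta_{\bar S}|_1\le(\tfrac12\min\{\lambda_n,\rho_n\}+\max\{\lambda_n,\rho_n\})|\Delta\Theta_S|_1$. Dividing through and using $\min\{\lambda_n,\rho_n\}\le\max\{\lambda_n,\rho_n\}$ gives $|\Delta\Theta_{\bar S}|_1\le(1+2\max\{\lambda_n,\rho_n\}/\min\{\lambda_n,\rho_n\})|\Delta\Theta_S|_1\le\alpha|\Delta\Theta_S|_1$ with $\alpha=3\max\{\lambda_n,\rho_n\}/\min\{\lambda_n,\rho_n\}$, as claimed. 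The only genuine subtlety beyond the gradient computation is keeping the diagonal of $\Omega_{yy}$ (unpenalized, yet lying in $S$) on the correct side of these inequalities, which is harmless since enlarging $|(\Delta\Omega_{yy})_{S}|_1$ to include the diagonal only weakens the upper bound.
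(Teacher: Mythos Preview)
Your proposal is correct and follows essentially the same route as the paper's proof: both use optimality plus convexity to reduce to the inequality $\langle\nabla\Lpa(\Theta^*),\Delta\Theta\rangle\le R_e(\Theta^*)-R_e(\hat\Theta)$, identify the gradient blocks at $\Theta^*$ with $A_n$ and $B_n$ via the identities in \eqref{equat:omega_sigma_connection}, and finish with H\"older plus the triangle-inequality decomposition of the penalty. The only cosmetic difference is that the paper immediately replaces $\gamma_n$ by $\lambda_n/2$ and $\rho_n/2$ in the respective blocks, arriving directly at the constant $3\max\{\lambda_n,\rho_n\}/\min\{\lambda_n,\rho_n\}$, whereas you first obtain $1+2\max\{\lambda_n,\rho_n\}/\min\{\lambda_n,\rho_n\}$ and then bound it by $\alpha$; both are equivalent.
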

\begin{proof}
Since $(\Omega^*_{yy})_{\bar{S}_{yy}}=0$, we have
\begin{eqnarray}\label{equat:abs_Omega_yy_diff}
|(\Omega^*_{yy} + \Delta\Omega_{yy})^-|_1 - |(\Omega^*_{yy})^-|_1
&=& |(\Omega^*_{yy} + \Delta\Omega_{yy})^-_{S_{yy}}|_1 + |(\Omega^*_{yy} + \Delta\Omega_{yy})^-_{\bar{S}_{yy}}|_1 - |(\Omega^*_{yy})^-|_1 \nonumber \\
&=&|(\Omega^*_{yy} + \Delta\Omega_{yy})^-_{S_{yy}}|_1 + |(\Delta\Omega_{yy})_{\bar{S}^-_{yy}}|_1 - |(\Omega^*_{yy})^-|_1 \nonumber \\
&\ge& |(\Delta\Omega_{yy})^-_{\bar S_{yy}}|_1 - |(\Delta\Omega_{yy})^-_{S_{yy}}|_1 \nonumber \\
&\ge& |(\Delta\Omega_{yy})_{\bar S_{yy}}|_1 -
|(\Delta\Omega_{yy})_{S_{yy}}|_1.
\end{eqnarray}
Similarly we have
\begin{equation}\label{equat:abs_Omega_yx_diff}
|\Omega^*_{yx} + \Delta\Omega_{yx}|_1 - |\Omega^*_{yx}|_1 \ge
|(\Delta\Omega_{yx})_{\bar S_{yx}}|_1 -
|(\Delta\Omega_{yx})_{S_{yx}}|_1.
\end{equation}

We define the function $f(s)$ as in the proof of Proposition~\ref{prop:RSC}.
From the convexity of the loss $\Lpa$ we have
\[
\Lpa (\hat\Theta) - \Lpa(\Theta^*) = f(1)- f(0) \ge f'(0) = \Tr(A^\top_n
\Delta\Omega_{yy}) + \Tr(B^\top_n \Delta\Omega_{yx}),
\]
where
\[
A_n = \Sigma^n_{yy} - (\Omega^*_{yy})^{-1} -
(\Omega^*_{yy})^{-1}\Omega^*_{yx}\Sigma^n_{xx}(\Omega^*_{yx})^\top(\Omega^*_{yy})^{-1}
, B_n = 2(\Sigma^n_{yx} +
(\Omega^*_{yy})^{-1}\Omega^*_{yx}\Sigma^n_{xx}).
\]
From the equalities in~\eqref{equat:omega_sigma_connection} we can
equivalently write
\[
A_n = \Sigma^n_{yy} - \Sigma^*_{yy}
-(\Omega^*_{yy})^{-1}\Omega^*_{yx}(\Sigma^n_{xx}-\Sigma^*_{xx})\Omega^{*\top}_{yx}(\Omega^*_{yy})^{-1}
 , B_n =  2(\Sigma^n_{yx} - \Sigma^*_{yx} + (\Omega^*_{yy})^{-1}\Omega^*_{yx}(\Sigma^n_{xx} -
\Sigma^*_{xx})).
\]
Note that we have
\[
|\Tr(A^\top_n \Delta\Omega_{yy})| \le |A_n|_\infty
|\Delta\Omega_{yy}|_1 \le \frac{\lambda_n}{2}
|\Delta\Omega_{yy}|_1,
\]
and
\[
|\Tr(B^\top_n \Delta\Omega_{yx})| \le |B_n|_\infty
|\Delta\Omega_{yx}|_1 \le \frac{\rho_n}{2}
|\Delta\Omega_{yx}|_1,
\]
where we have used the
assumption $\min\{\lambda_n,\rho_n\} \ge 2 \gamma_n$.
Therefore
\begin{equation}\label{equat:Lpa_diff}
\Lpa (\hat\Theta) - \Lpa(\Theta^*) \ge -\frac{\lambda_n}{2}
|\Delta\Omega_{yy}|_1 - \frac{\rho_n}{2}
|\Delta\Omega_{yx}|_1.
\end{equation}
By combing \eqref{equat:abs_Omega_yy_diff},
\eqref{equat:abs_Omega_yx_diff}, and \eqref{equat:Lpa_diff}, we obtain
\begin{eqnarray}
0 &\ge& \Lpa (\hat\Theta) + R_e(\hat\Theta) - \Lpa(\Theta^*) - R_e(\Theta^*) \nonumber \\
&\ge& -\frac{\lambda_n}{2} |\Delta\Omega_{yy}|_1 -
\frac{\rho_n}{2} |\Delta\Omega_{yx}|_1 + \lambda_n
(|(\Delta\Omega_{yy})_{\bar S_{yy}}|_1 -
|(\Delta\Omega_{yy})_{S_{yy}}|_1)
+\rho_n (|(\Delta\Omega_{yx})_{\bar S_{yx}}|_1 - |(\Delta\Omega_{yx})_{S_{yx}}|_1) \nonumber \\
&\ge& \frac{\lambda_n}{2}\left(|(\Delta\Omega_{yy})_{\bar
S_{yy}}|_1 - 3|(\Delta\Omega_{yy})_{S_{yy}}|_1\right)
+\frac{\rho_n}{2} \left(|(\Delta\Omega_{yx})_{\bar S_{yx}}|_1 - 3|(\Delta\Omega_{yx})_{S_{yx}}|_1\right)\nonumber \\
&\ge& \frac{\min(\lambda_n,\rho_n)}{2}\left(|(\Delta\Omega_{yy})_{\bar
S_{yy}}|_1 + |(\Delta\Omega_{yx})_{\bar S_{yx}}|_1 \right)
-\frac{3\max(\lambda_n,\rho_n)}{2}
\left(|(\Delta\Omega_{yy})_{S_{yy}}|_1 + |(\Delta\Omega_{yx})_{S_{yx}}|_1\right)\nonumber ,
\end{eqnarray}
which implies
$|(\Delta\Theta)_{\bar S}|_1 \le \alpha |(\Delta\Theta)_{S}|_1$.
\end{proof}

\begin{proof}[Proof of Theorem~\ref{thrm:F_norm_bound}]
Since $\lambda_n, \rho_n \in [2 \gamma_n, c_0 \gamma_n]$, by
Lemma~\ref{lemma:solution_set} we have
$|(\Delta\Theta)_{\bar S}|_1 \le \alpha |(\Delta\Theta)_{S}|_1$.
Let $\Delta\tilde\Theta =
(\Delta\tilde \Omega_{yy}, \Delta\tilde \Omega_{yx}) = t
\Delta\Theta$ where we pick $t=1$ if
$\|\Delta\Theta\|_{F}<r_0$ and $t \in (0,1)$ with
$\|\Delta\tilde\Theta\|_{F} = r_0$ otherwise. By definition, we have
$\|\Delta\tilde\Theta\|_{F} \le r_0$ and
$|(\Delta\Theta)_{\bar S}|_1 \le \alpha |(\Delta\Theta)_{S}|_1$.
Due to the optimality of $\hat\Theta$ and the convexity of $\Lpa$, it holds that
\[
\Lpa (\Theta^* +
t\Delta\Theta) + R_e(\Theta^* + t\Delta\Theta)\le \Lpa (\Theta^*) +
R_e(\Theta^*).
\]
Following the similar arguments in Lemma~\ref{lemma:solution_set}
and the LRSC of $\Lpa$ we obtain
\begin{eqnarray}
0 &\ge& \Lpa (\Theta^* + t\Delta\Theta) + R_e (\Theta^* + t\Delta\Theta) - \Lpa (\Theta^*) - R_e(\Theta^*) \nonumber \\
&\ge& \frac{\lambda_n}{2}\left(|(\Delta\tilde\Omega_{yy})_{\bar
S_{yy}}|_1 - 3|(\Delta\tilde\Omega_{yy})_{S_{yy}}|_1\right)
+\frac{\rho_n}{2} \left(|(\Delta\tilde\Omega_{yx})_{\bar S_{yx}}|_1 - 3|(\Delta\tilde\Omega_{yx})_{S_{yx}}|_1\right)\nonumber \\
&&+ \beta(\Theta^*; r_0, \alpha) \|\Delta\tilde\Theta\|^2_{F} \nonumber \\
&\ge& -1.5\max\{\lambda_n,\rho_n\} |(\Delta\tilde\Theta)_S|_{1}  + \beta_0 \|\Delta\tilde\Theta\|^2_{F} \nonumber \\
&\ge& -1.5c_0\gamma_n\sqrt{|S|} \|\Delta\tilde\Theta\|_{F}  + \beta_0
\|\Delta\tilde\Theta\|^2_{F} \nonumber,
\end{eqnarray}
which implies that
\[
\|\Delta\tilde\Theta\|_{F} \le
1.5c_0\beta_0^{-1}\gamma_n\sqrt{|S|} = \Delta_n.
\]
Since $\Delta_n < r_0$, we claim that $t=1$ and thus
$\Delta\tilde\Theta = \Delta\Theta$. Indeed, if otherwise $t<1$,
then $\|\Delta\tilde\Theta\|_{F} = r_0 > \Delta_n$ which
contradicts the above inequality. This completes the proof.
\end{proof}

\section{Additional Materials on Monte Carlo Simulations}
\label{append:results_montecarlo}

In this appendix section, we provide the detailed performance
figures on the synthetic data as described in
Section~\ref{ssect:simulation}. For support recovery, we use
F-score. We also measure the precision matrix estimation quality by
three matrix norms: the operator norm, the matrix $\ell_1$-norm, and
the Frobenius norm. The results are presented in
Table~\ref{tab:synthetic_results_loss} and
Table~\ref{tab:synthetic_results_loss_marginal} .

\begin{table}
\begin{center}
\caption{Comparison of average CPU run times and average
matrix losses and F-scores for synthetic datasets over
$50$ replications. In this experiment, we fix $n=100$ and $p=50$.
\label{tab:synthetic_results_loss}}
\begin{tabular}{c c c c c }
\hline
Methods & $q=50$ & $q=100$ & $q=200$ & $q=500$ \\
\hline \hline
& &  \multicolumn{2}{c}{CPU Time (sec.) $\downarrow$} &  \\
pGGM  & 0.17 & 0.26 & 0.46 & 0.98  \\
cGGM  & 0.22  & 0.28 & 0.45 & 1.09 \\
GLasso  & 0.45 & 1.51 & 8.52 & 150.98 \\
NSLasso  & 2.01 & 2.36 & 3.14 & 5.38 \\

& &  \multicolumn{2}{c}{Operator norm $\|\hat\Theta - \Theta^*\|_2$ $\downarrow$} &  \\
pGGM  & 0.98 (0.04) & 1.06 (0.03) & 1.17 (0.03) & 1.23 (0.02) \\
cGGM  & 0.99 (0.04) & 1.07 (0.04) & 1.18 (0.03) & 1.23 (0.02)  \\
GLasso & 1.22 (0.05) & 1.44 (0.07) & 1.71 (0.07) & 2.31 (0.04) \\
NSLasso & --- & --- & --- & --- \\

& &  \multicolumn{2}{c}{Matrix $\ell_1$-norm $\|\hat\Theta - \Theta^*\|_{1}$ $\downarrow$} &   \\
pGGM  & 2.01 (0.12) & 1.98 (0.23) & 1.81 (0.11) & 1.10 (0.10) \\
cGGM  & 2.35 (0.16) & 2.13 (0.20) & 1.89 (0.06) & 1.10 (0.10) \\
GLasso  & 2.90 (0.20) & 3.03 (0.32) & 3.11 (0.21) & 3.29 (0.32) \\
NSLasso  & --- & ---  & --- & --- \\

& &  \multicolumn{2}{c}{Frobenius norm $\|\hat\Theta - \Theta^*\|_{F}$ $\downarrow$} &  \\
pGGM  & 3.36 (0.07) & 3.91 (0.11) & 4.81 (0.12) & 4.58 (0.04) \\
cGGM  & 3.43 (0.07) & 3.96 (0.12) & 4.85 (0.13) & 4.59 (0.04)  \\
GLasso  & 4.58 (0.11) & 5.94 (0.06) & 7.89 (0.08) & 12.22 (0.03) \\
NSLasso  & ---  & ---  & ---  & --- \\

& &  \multicolumn{2}{c}{Support Recovery F-score $\uparrow$} &  \\
pGGM  & 0.41 (0.01) & 0.37 (0.01) & 0.35 (0.01) & 0.23 (0.01) \\
cGGM  & 0.33 (0.01) & 0.31 (0.01) & 0.32 (0.01) & 0.23 (0.01) \\
GLasso  & 0.31 (0.01) & 0.27 (0.01) & 0.27 (0.01) & 0.22 (0.01) \\
NSLasso & 0.40 (0.01) & 0.35 (0.01) & 0.32 (0.01) & 0.21 (0.01) \\

\hline
\end{tabular}
\end{center}
\end{table}

\begin{table}
\begin{center}
\caption{Comparison of average CPU run times and average
matrix losses and F-scores for synthetic datasets over
$50$ replications. Here we fix $n=100$ and $p=50$.
\label{tab:synthetic_results_loss_marginal}}
\begin{tabular}{c c c c c }
\hline
Methods & $q=50$ & $q=100$ & $q=500$ & $q=1000$ \\
\hline \hline
& &  \multicolumn{2}{c}{CPU Time $\downarrow$} &  \\
pGGM  & 0.17 & 0.26 & 0.46 & 0.98  \\
GLasso-M  & 0.04 & 0.05 & 0.05 & 0.05 \\

& &  \multicolumn{2}{c}{Operator norm $\|\hat\Omega_{yy} - \Omega^*_{yy}\|_2$ $\downarrow$} &  \\
pGGM  & 0.76 (0.04) & 0.86 (0.07) & 0.91 (0.06) & 0.58 (0.01) \\
GLasso-M & 0.88 (0.06) & 0.86 (0.09) & 0.88 (0.03) & 0.86 (0.02) \\

& &  \multicolumn{2}{c}{Matrix $\ell_1$-norm $\|\|\hat\Omega_{yy} - \Omega^*_{yy}\|\|_{1}$ $\downarrow$} &   \\
pGGM  & 1.94 (0.12) & 1.94 (0.26) & 1.879 (0.13) & 0.94 (0.03) \\
GLasso-M  & 2.80 (0.18) & 2.87 (0.29) & 2.76 (0.08) & 1.93 (0.08) \\

& &  \multicolumn{2}{c}{Frobenius norm $\|\hat\Omega_{yy} - \Omega^*_{yy}\|_{F}$ $\downarrow$} &  \\
pGGM  & 2.55 (0.08) & 2.68 (0.12) & 3.17 (0.15) & 2.18 (0.06) \\
GLasso-M  & 3.14 (0.09) & 3.11 (0.09) & 3.26 (0.05) & 3.03 (0.04) \\

& &  \multicolumn{2}{c}{Support Recovery F-score $\uparrow$} &  \\
pGGM  & 0.42 (0.01) & 0.38 (0.02) & 0.39 (0.02) & 0.30 (0.01) \\
GLasso-M  & 0.31 (0.01) & 0.28 (0.01) & 0.27 (0.01) & 0.27 (0.01)\\
\hline
\end{tabular}
\end{center}
\end{table}

\end{document}